\newcommand{\curlyT}{\tau}
\newcommand{\fancyT}{\scaleobj{0.11}{\tikzset{every picture/.style={line width=5pt}}      
\begin{tikzpicture}[x=.8pt,y=.8pt,yscale=-.8,xscale=1]
\draw [color={black}][line width=5] [line join = round][line cap = round]   (225,121.33) .. controls (217,123.33) and (202,112.33) .. (213,99.33) .. controls (224,86.33) and (243,112.33) .. (255,114.33) .. controls (267,116.33) and (281.3,112.09) .. (277,101.33) .. controls (272.7,90.58) and (257.36,100.06) .. (253,103.33) .. controls (248.64,106.61) and (226,156.33) .. (241,171.33) .. controls (256,186.33) and (279,166.33) .. (265,152.33) .. controls (251,138.33) and (237,166.33) .. (258,164.33) ;
\end{tikzpicture}}}
\def\reg{\textsc{reg}}
\newcommand{\g}[1]{\mathcal{F}(#1)}
\newcommand{\trig}{\curlyT^{\text{trig}}}
\newcommand{\threshold}{\bar{\alpha}}
\newcommand{\thresholdvalue}{0.5}
\newcommand{\firstconstant}{24}
\newcommand{\firstconstantminus}{23}
\newcommand{\secondconstant}{\frac{1}{8}}
\newcommand{\lenepoch}{\Delta_{\text{ep}}}
\newcommand{\tnext}{\curlyT^{\mathrm{next}}}
\newcommand{\suparm}{i_{\text{sup}}}
\newcommand{\supbelief}{\widehat{r}_{\text{sup}}}
\newcommand{\goodevent}{\mathcal{G}}
\newcommand{\alg}{\texttt{AR2}}
\newcommand{\gray}[1]{\cellcolor{gray!30!}{#1}}
\newcommand{\fancyR}{\sbox1{\vbox{R}}\sbox2{\hbox{R}}\tikz[inner sep=0pt,outer sep=0pt]{\coordinate (A);\draw[-,black,line width=0.55pt,scale=0.75]([shift={({\the\wd2/2},0)}]A) to[out=180,in=0] ++(-{\the\wd2/2},{3*(\the\ht1+\the\dp1)/5)}) to[in=90,out=180]++({-\the\wd2/5},{-(\the\ht1+\the\dp1)/8})
to[in=270,out=270]++({\the\wd2/2},{7*(\the\ht1+\the\dp1)/12})
to[in=0,out=90]++(-{7*\the\wd2/20},{3*(\the\ht1+\the\dp1)/12})
to[in=90,out=180]++(-{13*\the\wd2/24},-{11*(\the\ht1+\the\dp1)/12})
to[in=180,out=270]++({3*\the\wd2/12},{-4*(\the\ht1+\the\dp1)/10})
to[in=270,out=0]++({11*\the\wd2/48},{(\the\ht1+\the\dp1)/3})
to[in=300,out=90]++(-{3*\the\wd2/13},{11*(\the\ht1+\the\dp1)/12})
to[in=40,out=120]++(-{6*\the\wd2/10},-{1*(\the\ht1+\the\dp1)/6});
}}
\begin{document}


\RUNAUTHOR{Chen, Golrezaei, and Bouneffouf}

\RUNTITLE{Non-Stationary Bandits with Auto-Regressive Temporal Dependency}

\TITLE{Non-Stationary Bandits with Auto-Regressive Temporal Dependency}

\ARTICLEAUTHORS{%
\AUTHOR{Qinyi Chen}
\AFF{Operations Research Center, Massachusetts Institute of Technology, Cambridge, MA 02139, \EMAIL{qinyic@mit.edu}} 
\AUTHOR{Negin Golrezaei}
\AFF{Sloan School of Management, Massachusetts Institute of Technology, Cambridge, MA 02139, \EMAIL{golrezae@mit.edu}}
\AUTHOR{Djallel Bouneffouf}
\AFF{IBM Research, Yorktown Heights, NY 10598, \EMAIL{djallel.bouneffouf@ibm.com}} 
} 

\ABSTRACT{%
Traditional multi-armed bandit (MAB) frameworks, predominantly examined under stochastic or adversarial settings, often overlook the temporal dynamics inherent in many real-world applications such as recommendation systems and online advertising. This paper introduces a novel non-stationary MAB framework that captures the temporal structure of these real-world dynamics through an auto-regressive (AR) reward structure. We propose an algorithm that integrates two key mechanisms: (i) an alternation mechanism adept at leveraging temporal dependencies to dynamically balance exploration and exploitation, and (ii) a restarting mechanism designed to discard out-of-date information. Our algorithm achieves a regret upper bound that nearly matches the lower bound, with regret measured against a robust dynamic benchmark. Finally, via a real-world case study on tourism demand prediction, we demonstrate both the efficacy of our algorithm and the broader applicability of our techniques to more complex, rapidly evolving time series.
}

\KEYWORDS{non-stationary bandits, autoregressive model, low-regret policy, online learning algorithms}

\maketitle

\section{Introduction}
\label{sec:introduction}

The multi-armed bandit (MAB) framework \citep{Thompson1933,Robbins1952} is commonly used to study online decision-making under uncertainty. It is primarily examined under either the stochastic setting \citep{Thompson1933, Auer2002}, where arms have fixed unknown reward distributions, or the adversarial setting \citep{Auer2002nonstochastic, Auer2003Confidence, Kleinberg2004, Dani2006}, where an adversary determines the reward distributions that can change arbitrarily over time. However, neither setting accurately represents real-world decision-making problems (see examples in dynamic pricing \citep{Keskin2017chasing}, online advertising \citep{Pandey2007}, and online auctions \citep{Kleinberg2003}), where reward distributions change over time following intrinsic temporal structures. These structures often exhibit frequent variations and temporal dependencies, making them challenging to approximate using stationary distributions.

This motivates us to consider a non-stationary bandits setting involving certain temporal structure, which captures the real-world characteristics. Specifically, we are interested in temporal structures with linear amount of changes, which are distinct from the infrequent, limited changes typically handled by change-point detection \citep{Garivier2011} or seasonality analysis \citep{chen2020learning}. Such frequent changes are commonly observed in applications such as financial data \citep{chavez2012high} and click-through rates in online advertising \citep{Agarwal2009}, where the data experience lots of volatility within a short time. Traditional bandits algorithms, such as Thompson sampling or UCB, may explore too aggressively in response to these drastic changes, leading to suboptimal performance \citep{liu2023nonstationary}. That being said, the observed time series in these scenarios typically exhibit certain temporal dependencies between past and present, which can be leveraged to improve decision-making. 

To encapsulate the main characteristics of real-world time series, we relax the previously restrictive assumptions on the reward distributions and study a non-stationary MAB problem, where the reward of each arm evolves over time following an auto-regressive (AR) temporal structure. The AR model is a popular time series model for predicting outcomes in applications such as sales \citep{Shumway2017}, advertising \citep{aaker1982modeling} and marketing \citep{dekimpe1995persistence}. 
It simultaneously captures frequent changes (i.e., linear amount of changes over the time horizon) and temporal dependencies, both of which impact the quality of online decision-making. Although real-world data cannot be perfectly represented by an AR model, it has been shown that the AR parameters accurately capture temporal dependencies and can serve as a useful proxy \citep{Shumway2017}. The non-stationary bandits framework with an AR reward structure can effectively model a wide range of real-world applications, including: \\
(1) \textbf{recommendation systems}, where MAB can determine the best product/contents to display to users, while the AR model captures the evolution of user preferences and demand over time \citep{Fattah2018} (see \Cref{sec:case-study} for a related case study on tourism demand prediction); \\
(2) \textbf{online advertising}, where MAB adjusts budget allocation for ad campaigns, and the AR model can capture the evolution of click-through rates of ads. The amount of changes are usually linear in time, making the AR structure a potential modeling choice \citep{Agarwal2009};\\
(3) \textbf{financial portfolio management}, where MAB can determine the investment allocation, and the AR model can capture the evolution of returns for each investment/asset \citep{ariyo2014stock, devi2013effective}.

When making online decisions in the presence of frequent changes, traditional methods that are designed for stationary or adversarial settings can be ineffective, due to two main challenges. 
\begin{enumerate}[leftmargin=*]
    \item Balancing exploration and exploitation becomes more challenging, as complete exploration is often unattainable. Continuous exploration is crucial, but there is also a risk of over-exploration. Therefore, finding the right balance becomes essential in navigating this tradeoff effectively.\vspace{-0.1em}
    \item Frequent changes in the environment can significantly diminish the value of past learnings. Failing to promptly leverage the knowledge we have gained can lead to a swift deterioration in our confidence levels regarding the accuracy of our reward estimates.
\end{enumerate}

The goal of this work is to show, via our non-stationary bandits framework with an AR reward structure, that \emph{effectively leveraging knowledge of temporal structure can address these challenges}. In our approach, we would use our knowledge or estimates of the temporal dependency, measured by the AR parameter. Since non-stationary bandits with temporal structures exhibiting such real-world characteristics have been scarcely explored in the literature, we consider the AR structure as a suitable starting point for showcasing key ideas and techniques. We anticipate that this work will inspire future research on bandits with alternative temporal structures with similar features.

\subsection{Main Contributions}
\label{sec:contribution}

We present an algorithm for the non-stationary AR bandits, called \alg, which stands for ``Alternating and Restarting'' algorithm for non-stationary ``AR'' bandits. \alg\xspace features two mechanisms: (i) an \emph{alternation} mechanism that handles the first challenge above by enforcing exploitation at least every other round, and switch to exploration only when we discover an arm with potential, instead of simply adopting the principle of ``optimism in the fact of uncertainty'' which can lead to over-exploration. 
(ii) a \emph{restarting} mechanism that handles the second challenge by 
discarding unnecessary information when our confidence level deteriorates significantly, which balances a second tradeoff known as ``remembering''-and-``forgetting''; this tradeoff is crucial even for time series with limited changes \citep{Besbes2014}.
Overall, our work addresses an important gap in the literature by studying non-stationary bandits with temporal structure that exhibit frequent changes and temporal dependency. While the AR structure does not capture all temporal structures in the real world, we believe that our high-level messages and techniques will guide future research on bandits with other temporal structures.

In evaluating the performance of our algorithm, we employ the concept of \emph{per-round steady state dynamic regret} (Definition \ref{def:dynamic-and-steady-state-regret}). This metric serves as a robust dynamic benchmark that competes with the best arm in each round. It surpasses the static benchmark used in stochastic and adversarial MAB, making it a considerably stronger measure of performance.  We provide an upper bound on the regret of \alg\xspace (\Cref{thm:regret_upper_bound}), for which the analysis is rather intricate (\Cref{sec:proof-of-main-thm}), and show that it almost matches the regret lower bound (\Cref{thm:lower-bound}). Our lower bound result also characterizes the challenges embedded in AR temporal processes, as it shows the per-round dynamic regret does not go to zero as $T$ increases, implying that any algorithm needs to keep exploring over time.

Finally, we conduct a real-world case study on tourism demand prediction \citep{lim2001time} in \Cref{sec:case-study}, confirming the superiority of \alg\xspace compared to benchmark algorithms. There, we also show that the techniques and high-level ideas of our algorithm can be readily extended to handle more complicated, rapidly changing temporal structure (e.g., general AR-$p$ processes) while still achieving good performance. Our case study is complemented by synthetic experiments in \Cref{sec:numerics} which show the strength of \alg\xspace against a number of benchmarks designed for stationary and non-stationary settings.

\subsection{Related Work}
Our work, which focuses on MAB with an AR temporal structure, contributes to the non-stationary MAB literature that draws attention in recent years. Most related works on non-stationary bandits focused on either the \emph{rested bandits} \citep{Gittins1974} or \emph{restless bandits} \citep{Whittle1988}. In rested bandits, only the expected reward of the arm that is being pulled will change, while in restless bandits, the expected reward of each arm changes at each round according to a known yet arbitrary stochastic process, regardless of whether the arm is pulled. Recently, there has also been a third stream of literature that studies non-stationary bandits where the changes in reward distributions depend on both which arm is pulled and the time period that passes by since its last pull (see, e.g., \cite{kleinberg2018, cella2020}). The non-stationary bandit problem that we study belongs to the restless setting, since the reward distributions change at each round based on AR processes, independent of our actions. Restless bandits, however, is known to be difficult and intractable \citep{Papadimitriou1999}, and many work thus focus on studying its approximation \citep{Guha2007} or relaxation \citep{Bertsimas2000}. 

One line of closely related work on restless bandits studies non-stationary MAB problems with limited amount of changes. These works either assume piecewise stationary rewards (see \cite{Garivier2011, Liu2018, Cao2019}), or impose a variation budget on the total amount of changes (see \cite{Besbes2014}). They differ from our setting in that they rely on the amount of changes in the environment to be sublinear in $T$, while in  AR processes, changes are more rapid and linear in $T$. Another line of relevant research on restless bandits assumes different types of temporal structures for the reward distributions. See, e.g., \cite{Liu2010, Ortner2012, Tekin2012, Mate2020} for finite-state Markovian processes, \cite{Grunewalder2019} for stationary $\phi$-mixing processes and \cite{Slivkins2008} for Brownian processes, which is a special case of an AR model but does not experience the exponential decay in the correlation between the past and the future. Recently, \cite{liu2023nonstationary} proposed a predictive sampling algorithm that can also be applied to AR-1 bandits; their main focus, however, is to show the algorithm's superiority over the traditional Thompson sampling algorithm in non-stationary environments.

Similar to the second line of works above, our work also aims to exploit the temporal structure of the reward distributions to devise well-performing learning algorithms.
However, the AR process is much more unpredictable than the temporal structures studied previously, due to its infinite state space and the fast decay in the values of the past observations. 
We are thus prompted to take a different approach that is designed specifically for adapting to the rapid changes attributed to the AR process.

\section{Preliminaries}
\label{sec:set-up}
\textbf{Expected rewards.} Consider a non-stationary MAB problem with $k$ arms over $T$ rounds. The state (expected reward) of arm $i\in [k]$ at any round $t\in [T]$ is  denoted by  $r_{i}(t)$, where $r_{i}(t) \in [-\fancyR, +\fancyR]$.  Conditioned on the state $r_{i}(t)$,  the realized reward of arm $i$ at round $t$, denoted by $R_{i}(t)$, is given by
$R_{i}(t) = r_{i}(t)+\epsilon_{i}(t)\,,$
where $\epsilon_{i}(t)\sim N(0, \sigma)$ is independent across arms and rounds and we call $\sigma \in (0,1)$ the stochastic rate of change. More generally, if each arm $i \in [k]$ has a different, unknown stochastic rate of change $\sigma_i$, as long as we have an upper bound {$\sigma \ge  \max_i \sigma_i$}, all of our theoretical results naturally carries over with the same dependency on $\sigma$.

\textbf{Evolution of expected rewards.} The expected reward of each arm $i\in [k]$, $r_{i}(t)$, evolves over time. In our model, we assume that $r_i(t)$ undergoes an independent AR-1 process:
$$
r_i(t+1) = \alpha(r_i(t) + \epsilon_i(t)) = \alpha R_i(t)\,,
$$
or equivalently, $R_i(t+1) = \alpha R_i(t) + \epsilon_i(t+1)$,
where $\alpha \in (0,1)$ is the parameter of the AR-1 model and can be used as a proxy to measure temporal correlation over time. 
As $\alpha $ increases, the temporal correlation increases.
As commonly seen in the bandit literature 
(e.g., \cite{lai1985asymptotically}), we make the rewards bounded by truncating $r_{i}(t+1)$ when its absolute value exceeds some finite boundary value $\fancyR> 0$. That is, $
r_i(t+1) = \mathcal{B} \big(
\alpha (r_{i}(t)+\epsilon_{i}(t))
\big)\,
$, where $\mathcal{B}(y) = \min\{\max\{y, -\fancyR\}, \fancyR\}$.

Here, we assume that the AR parameter $\alpha$ is known to the decision maker, a common assumption in literature \citep{Besbes2014, Grunewalder2019, Slivkins2008}, and often justifiable in real-world scenarios where sufficient historical data enables accurate fitting of AR models (see \Cref{sec:case-study}). This assumption is made to simplify the analysis. However, in \Cref{sec:extension}, we would show that this assumption can be relaxed, and the AR parameter can be effectively estimated via maximum likelihood estimation.

We also assume here that all arms share the same AR parameter $\alpha$. As our numerical studies in 
\Cref{sec:numerics} and \Cref{sec:extension} suggest, this assumption can be relaxed too. There, we generalize our model by considering heterogeneous AR parameters $\alpha_i$ for each arm $i \in [k]$, and our algorithm maintains its performance. {Nonetheless, it is worth noting that the homogeneity of AR parameter can be justified in some applications. 
One example can be seen in our case study in 
\Cref{sec:case-study}, where a travel agency offers vacation packages to travelers originating from a particular origin and heading to a specific destination. In this scenario, the demand is modeled as a general AR-$4$ process, with the AR parameters being consistent across all arms. This is because the temporal correlation here is primarily determined by exogenous factors related to the travel route, such as popular attractions and the availability of flights between the two locations, hence impacting all arms in the same way.

\textbf{Feedback structure and goal.}
Our goal is to design an algorithm that obtains high reward, against  a strong dynamic benchmark that in every round $t$,  pulls an arm that has the highest expected reward. This benchmark is much more demanding than the time-invariant benchmark used in traditional MAB literature. Obtaining good performance against the dynamic benchmark is particularly challenging as in any round $t\in [T]$, only the reward of the pulled arms can be observed. Not being able to observe the reward of unpulled arms in dynamic and time-varying environments introduces \emph{missing values} in the AR process associated with arms. This, in turn, deteriorates the prediction quality of the future expected rewards and the performance of any algorithm that relies on such predictions (see \cite{Anava2015} for a work that studies the impact of missing values on the prediction quality in AR processes). 

We now formally define how we measure the performance of any non-anticipating algorithm. 
\begin{definition}[Dynamic steady state regret]
\label{def:dynamic-and-steady-state-regret}
Let $\mathcal A$ be a non-anticipating algorithm that, at each round $t \in [T]$, pulls arm $I_t$ based on the history $\{I_1, R_1, \dots, I_{t-1}, R_{t-1}\}$, where $R_{t'} = R_{I_{t'}}(t')$ is the observed reward at round $t' < t$. The dynamic regret of $\mathcal A$ at round $t$ is defined as 
\[
\reg_{\mathcal A}(t) = r^{\star}(t) -  r_{I_t}(t),
\]
where $r^{\star}(t)=\max_{i\in [k]}\{r_{i}(t)\}$ is the maximum expected reward at round $t$.  
Furthermore, the per-round steady state regret of algorithm $\mathcal A$ is defined as 
\[
\overline{\reg}_{\mathcal A}=\lim\sup_{T\rightarrow \infty } \mathbb{E} \left[\frac{1}{T}\sum_{t= 1}^{T}\reg_{\mathcal A}(t)\right]. 
\]
\end{definition}
We remark that in Definition \ref{def:dynamic-and-steady-state-regret}, our per-round regret is defined asymptotically mainly because we would like to focus on evaluating the steady state performance of our algorithm (i.e., under the steady state distribution).\footnote{If we assume that the initial state $r_i(0)$ is drawn from the steady state distribution, defined in \Cref{sec:stationary_dist}, 
we can simply define our per-round regret as $\overline{\reg}_{\mathcal A}=\mathbb{E} \big[\frac{1}{T}\sum_{t= 1}^{T}\reg_{\mathcal A}(t)\big]$, and all of our results remain valid. This also matches the definition of dynamic regret in works such as \citep{Besbes2014, Bogunovic2016, liu2023nonstationary}.} All of our theoretical analyses for regret lower bound (\Cref{sec:regret-lower-bound}) and upper bounds (\Cref{sec:regret}) are also performed under the steady state distribution.

In this work, we would like to design an algorithm with a small per-round steady state regret, where the per-round regret is a function of the stochastic rate of change ($\sigma$) and the temporal correlation ($\alpha$). 

\section{Regret Lower Bound}
\label{sec:regret-lower-bound}
We now provide a lower bound on the best achievable dynamic per-round steady state regret.
\begin{theorem}[Regret Lower Bound]
\label{thm:lower-bound}
Consider a non-stationary MAB problem with $k$ arms, where the expected reward of each arm evolves as an independent AR-1 process with parameter $\alpha$, stochastic rate of change $\sigma$ and truncating boundaries $[-\fancyR, \fancyR]$.
Then the per-round steady state regret of any algorithm $\mathcal{A}$ is at least $\Omega(g(k, \alpha, \sigma)\alpha\sigma)$, where $g(k,\alpha,\sigma)$ is the probability that two best arms are within $\alpha\sigma$ distance of each other at any given round, under the steady state distribution. See the expression of this probability in Equation \eqref{eqn:def-g-k-alpha-sigma} in \Cref{apdx:lower-bound}.
\end{theorem}

The proof of \Cref{thm:lower-bound}, provided in \Cref{apdx:lower-bound}, builds on the following idea: even if algorithm $\mathcal{A}$ has access to all past information at round $t$, due to stochastic noise $\epsilon_i(t)$, $\mathcal{A}$ will pull a sub-optimal arm with constant probability at round $t+1$. We show that the probability of the two best arms' expected rewards being within $\alpha\sigma$ of each other can be expressed as $g(k, \alpha, \sigma)$ (see \eqref{eqn:def-g-k-alpha-sigma}).\footnote{See, also, \Cref{sec:illustration-lower-bound} for an illustration of how $g(k, \alpha, \sigma)$ evolves with respect to $\sigma$ under different $\alpha$.} If the two best arms are $\alpha\sigma$ close, $\mathcal{A}$ will incur $\Omega(\alpha\sigma)$ regret with constant probability at round $t+1$. 

In the extreme case when $\alpha$ is close to one, the steady state distribution of $r_i(t)$ can be approximated with a uniform distribution within the boundaries and two probability masses at the boundaries 
(see \Cref{sec:stationary_dist}). 
Using the uniform approximation, 
one can show that $g(k, \alpha, \sigma) = \Omega(k \alpha\sigma)$ (see discussion in \Cref{apdx:lower-bound}). This then yields a regret lower bound of order $\Omega(k\alpha^2\sigma^2)$.

\Cref{thm:lower-bound} implies that under our setup, the best achievable per-round regret with respect to a strong dynamic benchmark does not converge to zero, which differs from stationary or adversarial bandits with a time-invariant benchmark where algorithms like UCB for stationary bandits and Exp3 for adversarial bandits can achieve zero per-round regret as $T$ approaches infinity. 
In our setting, each arm undergoes linear amount of changes, making it challenging for any algorithm to adapt to the changing environment in time. Our result aligns with the lower bound from \cite{Besbes2014}, which states that when the total variation of expected rewards is $O(T)$, the regret also grows linearly.

\section{Algorithm \alg\xspace }
\label{sec:algorithm}
In this section, we present our algorithm, called \alg\xspace (Alternating and Restarting algorithm for non-stationary "AR" bandits). Algorithm~\ref{alg:main} outlines the workings of \alg\xspace, which operates in epochs of a fixed length. Within each epoch, \alg\xspace maintains and updates estimates of the expected rewards for each arm $i\in[k]$. It alternates between exploitation and exploration steps based on these estimates. During exploitation, \alg\xspace plays a "superior arm" expected to yield high rewards, while in exploration, it selects a "triggered arm" that hasn't been pulled recently but has high potential. At the end of an epoch, the algorithm restarts. As alluded earlier, \alg\xspace effectively balances two inherent tradeoffs in non-stationary MAB problems with AR reward structure. Firstly, it addresses the exploration-exploitation tradeoff by alternating between exploiting the superior arm and exploring the triggered arm within each epoch. Secondly, it handles the tradeoff between ``remembering''-and-``forgetting'', a commonly considered tradeoff in non-stationary environments \citep{Besbes2014, Bogunovic2016}, via restarting.

\setlength{\textfloatsep}{5pt}
\begin{algorithm}[bt]
\caption{Alternating and Restarting algorithm for non-stationary AR bandits (\alg)}
\footnotesize{
\textbf{Input:} AR Parameter $\alpha$, stochastic rate of change $\sigma$, epoch size $\lenepoch$, parameter $c_0$. \\
\vspace{-8pt}
\begin{enumerate}[leftmargin=*]
  \item Set the epoch index $s = 1$, parameter $c_1 = \firstconstant c_0$.
  \item Repeat while $s \leq \lceil T/\lenepoch \rceil$:
        \begin{enumerate}[leftmargin=5mm]
            \item [(a)] Initialization: Set $t_0 = (s-1)\lenepoch$. Set the initial triggered set $\mathcal{T} = \emptyset$. Pull each arm $i \in [k]$ at round $t_0 + i$ and set $\curlyT_i = t_0 + i$. Set estimates $\widehat{r}_{i}(t_0 + k + 1) = \alpha^{k-i} \cdot \mathcal{B}(\alpha R_{i}(t_0 + i))$. 
            \item [(b)] Repeat for $t = t_0 + (k+1), \dots, \min\{t_0 + \lenepoch, T\}$ 
            \begin{itemize}[leftmargin=1mm]
            \item Update the identity of the superior arm and its estimated reward 
            \begin{equation}
            \begin{aligned}
            \label{eqn:def-of-i-star}
            \suparm(t) & = 
            \Big\{
            \begin{array}{ll}
            I_{t-1} & \text{if } 
            \widehat{r}_{I_{t-1}}(t) \geq \widehat{r}_{I_{t-2}}(t) \\
            I_{t-2} & \text{if } 
            \widehat{r}_{I_{t-1}}(t) < \widehat{r}_{I_{t-2}}(t)
            \end{array}\Big. \quad \text{and} \quad 
            \supbelief(t) = \widehat{r}_{\suparm(t)}(t)\,.
            \end{aligned}
            \end{equation}
            \item \ul{Trigger arms with potential}: 
            For $i \notin \mathcal{T} \cup \{\suparm(t)\}$, 
            trigger arm $i$ if 
            \begin{equation}
            \label{eqn:triggering-condition}
            \supbelief(t) - \widehat{r}_{i}(t) \leq c_1\sigma\sqrt{({\alpha^2-\alpha^{2(t-\curlyT_i+1)}})/{(1-\alpha^2)}}.
            \end{equation}
            If triggered, add $i$ to $\mathcal{T}$ and set $\trig_i = t$.
            \item \ul{Alternate between exploration and exploitation:} 
            \begin{itemize}
                \item If $t$ is odd and $\mathcal{T} \neq \emptyset$, pull a triggered arm with the earliest triggering time: $I_t = \argmin_{j \in \mathcal{T}} \trig_j\,.$
                \item Otherwise, pull the superior arm $I_t = \suparm(t)$.
            \end{itemize}
            \item Receive a reward $R_{I_t}(t)$, and set $\curlyT_{I_t} = t$.
            \item \underline{Maintain Estimates}: Set $\widehat{r}_{I_t}(t+1) = \mathcal{B}(\alpha R_{I_t}(t))$ and set $\widehat{r}_{i}(t+1) = \alpha \widehat{r}_{i}(t)$ for $i \neq I_t$.
            \end{itemize}
        \item [(c)] Set $s = s + 1$. 
        \end{enumerate}
\end{enumerate}
}
\label{alg:main}
\end{algorithm} 

\textbf{Maintaining estimates of expected reward of arms.}  
For any round $t$ and arm $i\in [k]$, let $\curlyT_i(t)$ be the last round before $t$ (including $t$) at which arm $i$ is pulled, and let $\tnext_i(t)$ be the next round after $t$ (excluding $t$) at which arm $i$ is pulled. Let $\Delta\curlyT_i(t) = \tnext_i(t) - \curlyT_i(t)$ be the gap between two consecutive pulls of arm $i$ around $t$.
Define $\widehat r_{i}(t)$ as the estimate of the reward of arm $i$ at $t$ based on the most recent observed reward of arm $i$ (i.e., $R_i(\curlyT_i(t))$).
Via recursive updates (see Step (b)), \alg\xspace maintains the following estimate of  expected reward for each arm: 
$$
\begin{aligned}
\widehat{r}_i(t) 
= \alpha^{t - \curlyT_i(t) -1} \widehat r_{i}( \curlyT_i(t) +1 ) 
= \alpha^{t - \curlyT_i(t)-1} \mathcal{B}\big( \alpha R_i(\curlyT_i(t))\big)\,.
\end{aligned}
$$

\textbf{Superior arms.} The superior arm at round $t$, denoted by $\suparm(t)$, is one of the two most recently pulled arms, i.e., $I_{t-1}$ or $I_{t-2}$, that has the higher estimated reward (see \eqref{eqn:def-of-i-star}).
We further define $\supbelief(t) = \widehat{r}_{\suparm(t)}(t)$ as the estimated reward of the superior arm at round $t$. We remark that here, for simplicity of analysis, our definition of the superior arm only considers the two most recently pulled arms. We can in fact set the superior arm to be the one with the highest estimated reward among the $m$ most recently pulled arms for any constant $m \ge 2$. A similar theoretical analysis will then yield the same theoretical guarantee, as shown in Theorem \ref{thm:regret_upper_bound}. We further verify the robustness of \alg\xspace to the choice of $m$ in our numerical studies, where we consider all arms as potential candidates for the superior arm, and \alg\xspace maintains its competitive performance (see \Cref{sec:numerics}).
  
\textbf{Triggered arms.} In order to adapt to changes, \alg\xspace identifies and keeps track of a set of arms with potential that are not pulled recently.  We refer to these arms as  \emph{triggered arms} and denote their associated set by $\mathcal{T}$. In a round $t$, an arm $i\ne \suparm(t)$ gets \emph{triggered} if the triggering criteria in \eqref{eqn:triggering-condition} is satisfied. We call the earliest such time $t > \curlyT_i(t)$ as the \emph{triggering time} of arm $i$ and denote it as $\trig_i(t)$. If arm $i$ is triggered, it is added to the triggered set $\mathcal{T}$. When arm $i$ gets pulled or is chosen as the superior arm, it is removed from $\mathcal{T}$. The right-hand side of the triggering criteria is a confidence bound constructed based on Hoeffding's Inequality (see details in \Cref{sec:proof-of-main-thm} and \Cref{apdx:missing_proof_main_thm}). 

Note that at the exploration step, \alg\xspace would only pull a triggered arm if the triggered set $\mathcal{T}$ is non-empty. This inherently adjusts the rate of exploration in our alternation mechanism. In a slowly-changing environment (e.g., with small $\sigma$), the triggered set may not always include arms needing exploration, thus allowing focused exploitation of the superior arm; in a fast-changing environment, the rate of exploration can be as high as the rate of exploitation.

\section{Regret of Algorithm \alg\xspace}
\label{sec:regret}
Our algorithm \alg\xspace works for AR-1 model with any choice of parameter $\alpha \in (0, 1)$. However, it turns out that the problem is less challenging when the future is weakly correlated with the past (i.e., when $\alpha$ is small). \Cref{thm:upper-bound-small-alpha} states that \emph{any} algorithm---including both a naive  approach that continuously pulls the same arm throughout the horizon and our algorithm \alg\xspace---can achieve near-optimal performance when $\alpha$ is too small. This is because with small $\alpha$, (i) for any arm $i \in [k]$, the steady state distribution of its expected reward $r_i(t)$ concentrates around zero with high probability, and (ii) our observations of past rewards quickly deteriorate in their value of providing insights on the evolution of expected rewards in the future.  
\begin{theorem}
\label{thm:upper-bound-small-alpha}
Any non-anticipating algorithm $\mathcal{A}$ incurs per-round steady state regret of at most 
$
O\Big(\min(\sqrt{\log(1/\alpha\sigma) + \log k} \cdot   \frac{\alpha\sigma}{\sqrt{1-\alpha^2}}, 2\fancyR) \Big).
$ 
\end{theorem}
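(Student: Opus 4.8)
The plan is to reduce the per-round steady state regret of the naive algorithm to the expected maximum of $k$ i.i.d.\ samples from the stationary distribution $f$ of \Cref{lemma:steady-state}, and then control that maximum by a sub-Gaussian-type maximal inequality. Since the naive algorithm picks one arm $I$ (uniformly at random, though the choice is immaterial) and pulls it in every round, $\reg(t) = \max_{i \in [k]} r_i(t) - r_I(t)$. The reflected AR-$1$ chain driving each $r_i(\cdot)$ has a strictly positive transition density on the compact set $[-\fancyR,\fancyR]$, hence is uniformly ergodic with stationary law $f$; since the $k$ chains are independent and $\reg(t) \le 2\fancyR$, bounded convergence gives $\mathbb{E}[\reg(t)] \to \mathbb{E}[\max_{i\in[k]} X_i - X_I]$ as $t \to \infty$, where $X_1,\dots,X_k$ are i.i.d.\ $\sim f$ and independent of $I$, regardless of the (unspecified) initial state. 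Consequently the Cesàro average, and thus the $\limsup$ in \Cref{def:dynamic-and-steady-state-regret}, equals this limit; and since $f$ is symmetric about $0$ we have $\mathbb{E}[X_I]=0$, so the per-round steady state regret is exactly $\mathbb{E}_f[\max_{i\in[k]} X_i]$.

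It remains to bound $\mathbb{E}_f[\max_{i\in[k]} X_i]$. Write the density of \Cref{lemma:steady-state} as $f(x) = C(\alpha,\sigma)\,\exp(-x^2/(2v))$ with $v = \alpha^2\sigma^2/(2(1-\alpha))$, and note $\sqrt{v} = \Theta(\alpha\sigma/\sqrt{1-\alpha^2})$ since $1+\alpha \in (1,2)$, so $\sqrt v$ is precisely the scale in the claimed bound. Because $f$ is decreasing on $[0,\fancyR]$, for $u \in [0,\fancyR]$ we have the crude tail estimate $\Pr[X_1 > u] \le (\fancyR - u)\,f(u) \le \fancyR\, C(\alpha,\sigma)\, e^{-u^2/(2v)}$, and from the explicit formula for $C(\alpha,\sigma)$ one checks that $\fancyR\, C(\alpha,\sigma) = O(1/(\alpha\sigma))$ --- the normalization constant of the truncated Gaussian blows up like $1/(\alpha\sigma)$ as $\alpha \to 0$. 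This is exactly the origin of the $\log(1/(\alpha\sigma))$ term in the bound.

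Then $\mathbb{E}_f[\max_{i\in[k]} X_i] \le \int_0^{\fancyR} \min\{1,\ k\,\fancyR\, C(\alpha,\sigma)\, e^{-u^2/(2v)}\}\,du$. Splitting this integral at $u^\star = \sqrt{2v\,\log(k\,\fancyR\, C(\alpha,\sigma))}$ --- bounding the integrand by $1$ on $[0,u^\star]$ and, on $[u^\star,\fancyR]$, using $\int_{u^\star}^{\infty} e^{-u^2/(2v)}\,du \le (v/u^\star)\,e^{-(u^\star)^2/(2v)}$ so the tail contributes at most $v/u^\star \le \sqrt v$ --- yields $\mathbb{E}_f[\max_{i\in[k]} X_i] = O(u^\star) = O\big(\sqrt{v\,(\log k + \log(1/(\alpha\sigma)))}\big) = O\big(\sqrt{\log k + \log(1/(\alpha\sigma))}\cdot \alpha\sigma/\sqrt{1-\alpha^2}\big)$, which is the claim. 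A handful of degenerate cases are routine: $k=1$ gives zero regret; and if $\log(k\,\fancyR\, C(\alpha,\sigma))$ is too small for $u^\star$ to dominate, then $\alpha$ is forced close to $1$, so $v$ is bounded below by a constant and the trivial bound $\mathbb{E}_f[\max_i X_i] \le \fancyR$ already suffices.

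The step I expect to be the main obstacle is the reduction in the first paragraph: verifying (uniform) ergodicity of the reflected AR-$1$ chain and confirming that transient behavior does not affect the $\limsup$ of the Cesàro-averaged regret. Once the crude tail bound is in place, the maximal-inequality computation in the last two paragraphs is routine.
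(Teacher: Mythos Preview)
Your proof is correct and reaches the same bound, but the execution differs from the paper's in two respects worth noting.

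First, the paper does not argue ergodicity at all: it simply asserts that $r_i(t)$ follows the stationary law of \Cref{lemma:steady-state} at each fixed $t$ and bounds the expected instantaneous regret round by round. Your explicit reduction via uniform ergodicity (Doeblin's condition from the strictly positive transition density, then bounded convergence for the Ces\`aro average) is more careful and fills a gap the paper leaves implicit.

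Second, the tail bound is obtained differently. The paper compares the stationary law to the \emph{unbounded} Gaussian $N\big(0,\alpha\sigma/\sqrt{1-\alpha^2}\big)$ via a stochastic-domination claim $\mathbb{P}[|r_i(t)|>s]\le \mathbb{P}[|r'_i(t)|>s]$, then applies the standard sub-Gaussian concentration inequality and a union bound over the $k$ arms, splitting the expectation into a high-probability event $\mathcal{E}$ and its complement. You instead work directly with the truncated-Gaussian density, using monotonicity to get $\Pr[X_1>u]\le \fancyR\,C(\alpha,\sigma)\,e^{-u^2/(2v)}$ and the explicit growth $C(\alpha,\sigma)=O(1/(\alpha\sigma))$, and then apply the integral-splitting maximal inequality. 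Your route avoids the need to justify the stochastic-domination step (which the paper asserts ``can be obtained by directly computing both probabilities'') and makes the origin of the $\log(1/(\alpha\sigma))$ term transparent as coming from the normalization constant. Both routes are short; yours is slightly more self-contained, while the paper's comparison to the unreflected AR-$1$ stationary law is conceptually appealing.
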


\begin{figure}[ht]
\centering
\includegraphics[width=0.4\textwidth]{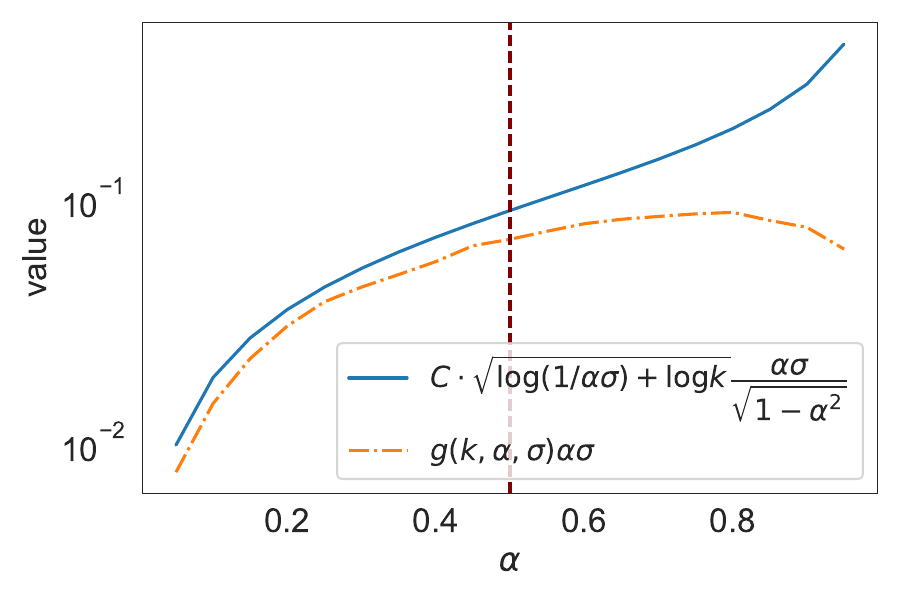}
\caption{Comparison of the orders of the regret upper bound in \Cref{thm:upper-bound-small-alpha} and the lower bound in \Cref{thm:lower-bound}, obtained with $k = 5, \sigma = 0.2, C = 0.4$. (Similar plots can be obtained for different values of $k$ and $\sigma$.)}
\label{fig:comparison-lower-upper}
\end{figure}

In \Cref{fig:comparison-lower-upper}, we compare the order of the regret upper bound in \Cref{thm:upper-bound-small-alpha} with the order of the regret lower bound in \Cref{thm:lower-bound}.\footnote{See, also, \Cref{fig:UB-LB-comparison-sigma-small-alpha} in \Cref{sec:illustration-small-alpha} for a comparison of upper/lower bounds with respect to $\sigma$.} 
Observe that $\threshold \triangleq \thresholdvalue$ serves as a rough threshold value such that when $\alpha \in (0, \threshold)$, the orders of lower and upper bounds almost match each other, which suggests limited room for improvement. (That being said, our numerical studies in \Cref{sec:numerics} in fact show that \alg\xspace still outperforms other benchmarks even when $\alpha$ is small.)
On the other hand, as $\alpha \rightarrow 1$, the gap between lower and upper bounds start to expand, implying that the problem becomes more difficult and simplistic approaches such as the naive algorithm would no longer produce satisfying performance. 

In the rest of the discussion, we thereby focus on the more challenging regime (i.e., $\alpha \in [\threshold, 1)$), and establish a regret upper bound for \alg\xspace when $\alpha \in [\threshold, 1)$.

\begin{theorem}
\label{thm:regret_upper_bound}
Let $\mathcal{K}(\alpha) \triangleq \lfloor \frac{1}{2}(\log(\secondconstant)/{\log\alpha}+1) \rfloor$, and 
let $\mathcal{A}$ be the \alg\xspace algorithm that gets restarted in each epoch of length $\lenepoch = \lceil k \alpha^{-3}\sigma^{-3} \rceil$. If $\alpha \in [\threshold, 1)$ and $k \leq \mathcal{K}(\alpha)$, the per-round steady state regret of $\mathcal{A}$ satisfies
$
\overline{\reg}_{\mathcal A}
\leq O(c_0^2\alpha^2\sigma^2 k^3 \log(c_0\alpha\sigma\sqrt{k}))\,,
$
where $c_0 = \sqrt{4\log(1/\alpha\sigma)+4\log \lenepoch + 2\log(4k)}$.
\end{theorem}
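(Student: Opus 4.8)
Because \alg\xspace restarts at the start of every epoch, a run of $T$ rounds splits into $\lceil T/\lenepoch\rceil$ epochs whose dynamics differ only through the carried-over reward vector, so
$$
\overline{\reg_{\mathcal A}}\ \le\ \frac{1}{\lenepoch}\,\sup_{x_{0}}\ \mathbb{E}_{x_{0}}\!\Big[\ \sum_{t\in\text{epoch}}\reg_{\mathcal A}(t)\ \Big],
$$
and it suffices to bound the expected cumulative regret of one epoch by $O\big(c_{0}^{2}\alpha^{2}\sigma^{2}k^{3}\log(c_{0}\alpha\sigma\sqrt k)\,\lenepoch\big)$, uniformly in the starting vector $x_{0}$. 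The $k$ initialization rounds cost at most $2\fancyR k$, which is lower order (here $\lenepoch=\lceil k\alpha^{-3}\sigma^{-3}\rceil$ is large enough to amortize re-initialization, yet small enough that an arm whose reward has ``jumped'' gets re-examined after a bounded delay), so I focus on rounds $t_{0}+k+1,\dots,t_{0}+\lenepoch$.

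I would first set up the \emph{good event} $\goodevent$: for every arm $i$ and every pair of rounds $\curlyT<t$ of the epoch the weighted noise sum obeys $\big|\sum_{s=\curlyT+1}^{t-1}\alpha^{t-s}\epsilon_{i}(s)\big|\le c_{0}\sigma\sqrt{(\alpha^{2}-\alpha^{2(t-\curlyT)})/(1-\alpha^{2})}$, and $|\epsilon_{i}(t)|\le c_{0}\sigma$ for all $i,t$. A Gaussian-tail union bound over these $O(k\lenepoch^{2})$ events, with the stated $c_{0}$, gives $\mathbb{P}(\goodevent^{c})=O((\alpha\sigma)^{2})$, so (each round's regret being at most $2\fancyR$) $\goodevent^{c}$ contributes negligibly to the epoch regret. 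On $\goodevent$, a short lemma, invoking \Cref{lemma:hoeffding} and the fact that the reflective map $\mathcal{B}$ is $1$-Lipschitz so the reflected estimation error is dominated by its un-reflected counterpart, shows that all confidence widths used by \alg\xspace are valid: $|r_{i}(t)-\widehat r_{i}(t)|\le w_{i}(t)\triangleq c_{0}\sigma\sqrt{(\alpha^{2}-\alpha^{2(t-\curlyT_{i}(t)+1)})/(1-\alpha^{2})}$ for all $i,t$; in particular the superior arm, pulled at most two rounds earlier, has estimation error $O(\alpha c_{0}\sigma)$.

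The heart of the argument is a dichotomy valid on $\goodevent$: \emph{every round $t$ is either (A) an exploitation round on which the pulled arm is actually $i^{\star}(t)=\argmax_{i}r_{i}(t)$, so $\reg_{\mathcal A}(t)=0$, or (B) a round on which $\mathcal T\ne\emptyset$.} Indeed, if at an exploitation round we fail to pull $i^{\star}(t)$, then combining $\supbelief(t)\le r^{\star}(t)+O(\alpha c_{0}\sigma)$ with $\widehat r_{i^{\star}(t)}(t)\ge r^{\star}(t)-w_{i^{\star}(t)}(t)$ and $1+\alpha^{2}+\cdots\ge 1$ forces $\supbelief(t)-\widehat r_{i^{\star}(t)}(t)\le \firstconstant\,w_{i^{\star}(t)}(t)$ (the constant $\firstconstant$ absorbs the slack), i.e.\ $i^{\star}(t)$ satisfies the triggering rule and lies in $\mathcal T$. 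Two further facts bound type-(B) rounds: (i) a \emph{no-surprise} property, namely that the total reward drift over any $2k$ rounds is at most $c_{0}\sigma\sqrt{2k}$ and, by $k\le\mathcal K(\alpha)$ (equivalently $\alpha^{2k-1}\ge\secondconstant$), the widths stay within constant factors of the ``small'' regime, so an untriggered arm can never drift far enough to overtake the leader before it would be pulled; and (ii) \emph{fast pull}, namely $|\mathcal T|\le k$, triggered arms are pulled FIFO, exploration occurs every other round, and a just-pulled best arm is the superior arm up to an $O(\alpha c_{0}\sigma)$ estimate gap, so (re-triggering whenever it leaves $\mathcal T$) the current leader was itself pulled within the last $2k$ rounds. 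Together these give $\reg_{\mathcal A}(t)=O(c_{0}\sigma\sqrt k)$ on every type-(B) round, and hence $\sum_{t\in\text{epoch}}\reg_{\mathcal A}(t)\le(\#\text{type-(B) rounds})\cdot O(c_{0}\sigma\sqrt k)$; since $\mathcal T$ is nonempty only in stretches during each of which every second round removes one arm from $\mathcal T$, $\#\text{type-(B) rounds}\le 2\cdot(\text{number of triggering events in the epoch})$.

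It then remains to bound the expected number $G$ of triggering events per epoch: on $\goodevent$ an arm can (re-)trigger at a round only when its true reward is within $O(c_{0}\sigma\alpha/\sqrt{1-\alpha^{2}})$ of the running maximum or its estimate is still fresh, so $\mathbb{E}[G]$ is controlled by a union bound over rounds, estimating $\mathbb{P}(\text{some }r_{i}\text{ within that band of }r^{\star})$ against the stationary density $f$ of \Cref{lemma:steady-state} (the marginals being within $O(1/(1-\alpha))$ rounds of $f$, the transient lower order), counting the $O(k)$ triggers such a round can carry, and distributing the regret of a round with a neglected arm over the $\le 2k$ rounds up to that arm's next pull. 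Carrying out this (deliberately conservative) accounting and taking $\limsup$ over $T$ yields $\overline{\reg_{\mathcal A}}\le O\big(c_{0}^{2}\alpha^{2}\sigma^{2}k^{3}\log(c_{0}\alpha\sigma\sqrt k)\big)$. I expect this last step to be the main obstacle: controlling the number of triggering events and the overlapping regret-charging windows while keeping every factor down to the claimed powers of $k,c_{0},\alpha,\sigma$; by comparison the good-event estimate, the validity of the widths under the reflective boundary, and the dichotomy itself are routine.
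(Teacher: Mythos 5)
Your epoch decomposition, good event (with the same $c_0$ and union bound over $O(k\lenepoch^2)$ sub-intervals), and the fact that on the good event the best arm is pulled within $O(k)$ rounds all match ingredients of the paper (its \Cref{def:good-event}, \Cref{lemma:high-prob-around-t}, \Cref{lemma:bound_on_gap}). But your accounting of the regret is genuinely different from the paper's --- you count ``bad'' rounds directly and multiply by a per-round regret bound, whereas the paper defines a \emph{distributed regret} $D_i(t)$ (\Cref{def:contribution}) that spreads the one-shot regret of each pull of arm $i$ over the entire interval until $i$'s next pull --- and it is precisely at this point that your argument has a gap. Your claim that every type-(B) round has regret $O(c_0\sigma\sqrt{k})$ fails on exploration rounds. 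When a stale triggered arm $i$ is pulled, the triggering rule only guarantees $\supbelief(t)-\widehat r_i(t)\le \firstconstant\, c_0\sigma\sqrt{(\alpha^2-\alpha^{2(t-\curlyT_i+1)})/(1-\alpha^2)}$, and the width on the right tends to $c_0\sigma\alpha/\sqrt{1-\alpha^2}$, which in the regime of interest ($\alpha\to 1$) is far larger than $c_0\sigma\sqrt{k}$; the true regret of that pull can be as large as $\min\{2\fancyR,\,O(c_0\sigma\alpha/\sqrt{1-\alpha^2})\}$, i.e.\ $\Theta(1)$. So the sum over type-(B) rounds cannot be bounded by (number of triggering events)$\times O(c_0\sigma\sqrt k)$.

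The missing idea is the quantitative trade-off that rescues these expensive exploration pulls: if pulling arm $i$ incurs regret $\eta$, then on the good event arm $i$ will not be re-triggered (hence not re-pulled) for at least $\Delta\curlyT_i \gtrsim \eta^2/(c_0^2\sigma^2\alpha^2 k^2)$ rounds, because the confidence width must grow back to order $\eta/k$ before the triggering criterion can fire again. This is the paper's \Cref{claim:denominator}, and it is what makes the amortized per-round cost of arm $i$ of order $c_0^2\sigma^2\alpha^2 k^2/\eta$; integrating over $\eta$ against the stationary density (bounded above and below by constants for $\alpha\ge\threshold$) then produces the $\log(c_0\alpha\sigma\sqrt k)$ factor and, after summing over $k$ arms, the $k^3$. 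Your plan to distribute the regret only ``over the $\le 2k$ rounds up to that arm's next pull'' uses the wrong window: $2k$ is the delay before the \emph{best} arm gets pulled (your fast-pull lemma), not the gap between consecutive pulls of a \emph{bad} explored arm, and a $2k$-round amortization of a $\Theta(1)$ regret does not yield the claimed bound. Relatedly, your bound on the number of re-triggering events in terms of ``the true reward being within a band of the running maximum'' mischaracterizes the trigger, which fires based on the decaying \emph{estimate} and the growing width, not on the true reward. The step you flag as the main obstacle is exactly where this inverse relationship between incurred regret and re-pull frequency must be proved; without it the argument does not close.
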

Recall that if $\alpha$ is close to one, the regret lower bound can be characterized as $\Omega(k\alpha^2\sigma^2)$. 
This shows that our algorithm is optimal in terms of AR parameter $\alpha$ and stochastic rate of change $\sigma$ (up to logarithmic factors). In \Cref{sec:illustration}, 
we further highlight the significance of Theorem \ref{thm:regret_upper_bound} by illustrating the evolution of upper and lower bounds, as well as the per-round regret attained by \alg\xspace, at different values of $\alpha$ and $\sigma$. We show that Theorem \ref{thm:regret_upper_bound} well characterizes the performance of \alg\xspace under different settings.

We remark that in terms of the dependency on $k$, \Cref{thm:regret_upper_bound} requires $k \leq \mathcal{K}(\alpha)$ mainly for the rigor of theoretical analysis, and this bound loosens as $\alpha$ approaches one (for example, when $\alpha = 0.95$, $\mathcal{K}(\alpha) = 20$.).\footnote{Here, the bound $\mathcal{K}(\alpha)$ results from our loose upper bound for the number of triggered arms in the proof of Lemma \ref{lemma:bound_on_gap}, where we used the fact that there are at most $k-1 < \mathcal{K}(\alpha)$ triggered arms at any round to limit how quickly the values of the past observations diminish.
If, at any given round, the number of triggered arm is $O(1)$, the upper bound $\mathcal{K}(\alpha)$ would no longer be required, and the regret upper bound can be further reduced to $O(c_0^2\alpha^2\sigma^2 k \log(c_0\alpha\sigma))$, which matches the lower bound up to logarithmic factors.} 
Our numerical studies in \Cref{sec:numerics} also reveal that (i) \alg\xspace maintains its competitive performance even when the number of arms $k$ exceeds $\mathcal{K}(\alpha)$; and (ii) its per-round regret grows modestly with $k$. These suggest that the assumption/dependency on $k$ for our upper bound are artifacts of our analysis, rather than an intrinsic property of our algorithm.

\section{Proof of Theorem~\ref{thm:regret_upper_bound}}
\label{sec:proof-of-main-thm}
The proof of \Cref{thm:regret_upper_bound} proceeds in four steps. Step 1 introduces \emph{distributed regret}, a novel notion that distributes instantaneous regret over subsequent rounds, enabling us to rephrase per-round steady state regret. Step 2 defines a high-probability \emph{good event} crucial for analyzing distributed regret. In Step 3, we leverage the good event to establish an upper bound for each round's distributed regret. Finally, Step 4 aggregates these regrets across rounds and epochs, establishing the per-round steady state regret of \alg\xspace against the dynamic benchmark.

\textbf{Step 1: Distributed regrets.}
Let $\Delta r_i(t) = r^\star(t) - r_i(t)$ be the instantaneous regret from pulling arm $i$ at round $t$, where $r^{\star}(t)=\max_{i\in [k]} r_{i}(t)$ is the maximum expected reward at round $t$.
Consider the expected per-round regret: 
$
\label{eqn:expected_avg_regret}
\overline{\reg}_{\mathcal A} = \mathbb{E}\Big[\frac{1}{T}\sum_{t=1}^T \reg_{\mathcal A}(t)\Big]
= \frac{1}{T} \sum_{t=1}^T \mathbb{E}\Big[\Delta r_{I_t}(t)\Big]\,,
$
where $\Delta r_{I_t}(t) = r^{\star}(t) - r_{I_t}(t)$ is the instantaneous regret incurred at round $t$. In the following, we show that we can distribute this instantaneous regret over subsequent rounds.

Note that for any $i \in [k]$, let $\fancyT_i^{(j)}$ denote the round at which arm $i$ gets pulled for the $j$th time within an epoch. Then during the period between two consecutive pulls $\fancyT_i^{(j)} \leq t < \fancyT_i^{(j+1)}$, the only regret incurred from pulling arm $i$ is the instantaneous regret $\Delta r_{i}(\fancyT_i^{(j)})$. 
By previous definition, we also have $\curlyT_i(t) = \fancyT_i^{(j)}$, $\tnext_i(t) = \fancyT_i^{(j+1)}$ and $\Delta \curlyT_i(t) = \fancyT_i^{(j+1)} - \fancyT_i^{(j)}$ for all $\fancyT_i^{(j)} \leq t < \fancyT_i^{(j+1)}$. Observe that we can decompose $\Delta r_i(\fancyT_i^{(j)})$ as follows, which then leads to the notion of \emph{distributed regret}:
$$
\begin{aligned}
\Delta r_i(\fancyT_i^{(j)}) 
& = \sum_{t=\fancyT_i^{(j)}}^{\fancyT_i^{(j+1)}-1}
\Big(\dfrac{\Delta r_i(\fancyT_i^{(j)})\alpha^{2(t-\fancyT_i^{(j)})}} {1+\alpha^2+\dots+\alpha^{2(\fancyT_i^{(j+1)} - 1 - \fancyT_i^{(j)})}}\Big) 
= \sum_{t=\fancyT_i^{(j)}}^{\fancyT_i^{(j+1)}-1}
\Big(\dfrac{\Delta r_i(\curlyT_i(t))\alpha^{2(t-\curlyT_i(t))}} {1+\alpha^2+\dots+\alpha^{2(\Delta \curlyT_i(t)-1)}}\Big)\,.
\end{aligned}
$$ 
\begin{definition}[Distributed regret]
\label{def:contribution}
The distributed regret of arm $i$ at round $t$ is defined as 
\begin{equation}
\label{eqn:dist_regret}
D_i(t) = \left(\dfrac{\Delta r_i(\curlyT_i(t))}{1+\alpha^2+\dots+\alpha^{2(\Delta \curlyT_i(t)-1)}}\right)\alpha^{2(t-\curlyT_i(t))} 
\end{equation}
\end{definition}

Note that $D_i(t) = 0$ if arm $i$ is the best arm at round $\curlyT_i(t)$. 
Now, let $T_i$ be the total number of rounds we pull arm $i$ in the horizon, we can rewrite 
the expected per-round regret as 
\begin{equation}
\label{eqn:sum-of-distributed-regrets}
\overline{\reg}_{\mathcal A}
 = \frac{1}{T} \sum_{i=1}^k \sum_{j=1}^{T_i}
\mathbb{E}\Big[ \Delta r_i(\fancyT_i^{(j)})\Big] 
 = \frac{1}{T} \sum_{i=1}^k \sum_{i=1}^T
\mathbb{E}\Big[ D_i(t)\Big] = \frac{1}{T} \sum_{i=1}^k  \sum_{s=1}^S \sum_{t = (s-1)\lenepoch + 1}^{s\lenepoch} \mathbb{E}[D_i(t)]\,,
\end{equation}
where the second equality follows from the Definition \ref{def:contribution} and the third equality follows from that \alg\xspace proceeds in epochs of length $\lenepoch$, and $S = T/\lenepoch$ is the total number of epochs. 

\textbf{Step 2: Good event and its implications.} Before proceeding, we first define a good event $\goodevent(t)$ at round $t$, which would help simplify our analysis. In principle, we say that a good event $\goodevent(t)$ happens at round $t$ if the noises within the epoch including $t$ are not too large in magnitude. 
Recall from \Cref{sec:set-up} that $r_i(t)$ follows an AR-1 process with truncating boundaries: $r_i(t+1) = \mathcal{B}\left(\alpha(r_i(t) + \epsilon_i(t))\right)$, where $\mathcal{B}(y) = \min\{\max\{y, -\fancyR\}, \fancyR\}$. Hence, we need to first define a new noise term that shows the influence of the truncating boundary.
\begin{equation}
\label{eqn:epsilon_tilde}
\Tilde{\epsilon}_i(t) = \Big\{
\begin{array}{ll}
\epsilon_i(t)  \quad \hspace{1.2cm} \text{if } ~ \alpha(r_i(t) + \epsilon_i(t)) \in [-\fancyR, \fancyR] \\
\frac{1}{\alpha}[\mathcal{B}\Big(\alpha(r_i(t) + \epsilon_i(t))\Big) - \alpha r_i(t)]  \quad \text{otherwise} 
\end{array}\Big.
\end{equation}
In particular, the new noise $\tilde\epsilon_i(t)$ satisfies the following recursive relationship: $r_i(t+1) = \alpha r_i(t) + \alpha\Tilde{\epsilon}_i(t)$. We now formally define the good event $\goodevent(t)$, which states that for any sub-interval $[t_0, t_1]$ within the epoch including $t$, the weighted sum of $\tilde\epsilon_i(t)$ satisfies a concentration inequality. 
\begin{definition}[Good event at round $t$]
\label{def:good-event}
We say that the {good event} $\goodevent(t)$ occurs at round $t$ if for every $i \in [k]$, and for every sub-interval $[t_0, t_1]$, where $t - \lenepoch \leq t_0 < t_1 \leq t + \lenepoch$, the weighted sum of the noises $\tilde\epsilon_i(t)$ satisfies
$
\Big|\sum_{t=t_0}^{t_1-1} \alpha^{t_1 - t}\tilde\epsilon_i(t)\Big| < c_0\sigma\sqrt{(\alpha^2 - \alpha^{2(t_1-t_0+1)})/(1-\alpha^2)}\,,
$
where $c_0 = \sqrt{4\log(1/\alpha\sigma)+4\log \lenepoch + 2\log(4k)}$. 
\end{definition}

By building a connection between $\tilde\epsilon_i(t)$ and $\epsilon_i(t)$ and Hoeffding's inequality, we show \Cref{lemma:high-prob-around-t}, 
which confirms that the good event $\goodevent(t)$ happens \emph{with high probability}.
\begin{lemma}
\label{lemma:high-prob-around-t}
For any $t \in [T]$, we have
$
\mathbb{P}[\goodevent^c(t)] \leq (\alpha\sigma)^2.
$
\end{lemma}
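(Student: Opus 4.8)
The plan is to prove \Cref{lemma:high-prob-around-t} via a union bound that reduces the claim to a single Gaussian-type concentration inequality for one arm over one sub-interval. Fix a round $t$. By \Cref{def:good-event}, $\goodevent^c(t)$ is the union, over arms $i \in [k]$ and over all integer pairs $(t_0,t_1)$ with $t-\lenepoch \le t_0 < t_1 \le t+\lenepoch$, of the events
\[
E_{i,t_0,t_1} = \Big\{\, \Big| \sum_{s=t_0}^{t_1-1}\alpha^{t_1-s}\tilde\epsilon_i(s) \Big| \ge c_0\sigma\sqrt{\tfrac{\alpha^2-\alpha^{2(t_1-t_0+1)}}{1-\alpha^2}} \,\Big\}.
\]
There are at most $k$ choices of $i$ and $O(\lenepoch^2)$ choices of $(t_0,t_1)$ (since $[t-\lenepoch,t+\lenepoch]$ contains $2\lenepoch+1$ integers), so it suffices to bound $\mathbb{P}[E_{i,t_0,t_1}]$ for a fixed triple and multiply by $O(k\lenepoch^2)$; the constants inside $c_0$ are calibrated precisely so that this product evaluates to $(\alpha\sigma)^2$.

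For a fixed $(i,t_0,t_1)$, I would first pass from the reflected noise $\tilde\epsilon_i$ to the honest Gaussian noise $\epsilon_i$. By \eqref{eqn:epsilon_tilde} we can write $\tilde\epsilon_i(s) = \psi_{i,s}(\epsilon_i(s))$ with $\psi_{i,s}(x) = \tfrac1\alpha\big(\mathcal{B}(\alpha r_i(s)+\alpha x) - \alpha r_i(s)\big)$; since $\mathcal{B}$ is $1$-Lipschitz and fixes $[-\fancyR,\fancyR]$, and $\alpha r_i(s) \in [-\alpha\fancyR,\alpha\fancyR] \subseteq [-\fancyR,\fancyR]$, the map $\psi_{i,s}$ is $1$-Lipschitz with $\psi_{i,s}(0)=0$, so in particular $|\tilde\epsilon_i(s)| \le |\epsilon_i(s)|$. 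Conditioned on the history $\mathcal{F}_{s-1}$ (which determines $r_i(s)$ and hence the map $\psi_{i,s}$), $\tilde\epsilon_i(s)$ is a $1$-Lipschitz image of an $N(0,\sigma^2)$ variable, hence conditionally sub-Gaussian with variance proxy $\sigma^2$. Consequently $\big(\alpha^{t_1-s}\tilde\epsilon_i(s)\big)_{s=t_0}^{t_1-1}$ is a martingale-difference sequence with conditional sub-Gaussian proxies $\alpha^{2(t_1-s)}\sigma^2$, whose total proxy is
\[
\sum_{s=t_0}^{t_1-1}\alpha^{2(t_1-s)} = \alpha^2+\alpha^4+\dots+\alpha^{2(t_1-t_0)} = \frac{\alpha^2-\alpha^{2(t_1-t_0+1)}}{1-\alpha^2}.
\]
So the threshold appearing in $E_{i,t_0,t_1}$ is exactly $c_0$ standard deviations, and the Hoeffding/Azuma-type bound of \Cref{lemma:hoeffding} yields $\mathbb{P}[E_{i,t_0,t_1}] \le 2\exp(-c_0^2/2)$.

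Finally, substituting $c_0^2 = 4\log(1/\alpha\sigma) + 4\log\lenepoch + 2\log(4k)$ gives $2\exp(-c_0^2/2) = (\alpha\sigma)^2/(2k\lenepoch^2)$, and summing over the $k$ arms and the $O(\lenepoch^2)$ sub-intervals yields $\mathbb{P}[\goodevent^c(t)] \le (\alpha\sigma)^2$, as claimed. The main obstacle is the first step — replacing $\tilde\epsilon_i$ by $\epsilon_i$ — since the reflected noises are neither independent across $s$, nor Gaussian, nor a priori sharply bounded, so classical Hoeffding does not apply verbatim; the fix is to run the concentration argument along the natural filtration and invoke Gaussian Lipschitz concentration conditionally, which keeps the variance proxy at $\sigma^2$. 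The one genuinely delicate point is controlling the bias $\mathbb{E}[\tilde\epsilon_i(s)\mid\mathcal{F}_{s-1}]$ introduced by the reflective boundary: one must check it is of lower order relative to the $c_0$-standard-deviation margin (it decays with the distance $(1-\alpha)\fancyR$ of $\alpha r_i(s)$ from the boundary), so that it can be absorbed without weakening the bound.
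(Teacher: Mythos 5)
Your union-bound skeleton, the count of $O(k\lenepoch^2)$ events, the weighted-Hoeffding step with total variance proxy $\sum_{s}\alpha^{2(t_1-s)}\sigma^2=\sigma^2(\alpha^2-\alpha^{2(t_1-t_0+1)})/(1-\alpha^2)$, and the calibration of $c_0$ all match the paper exactly. The divergence, and the gap, is entirely in how you pass from $\epsilon_i$ to the reflected noise $\tilde\epsilon_i$.

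You correctly observe that $\tilde\epsilon_i(s)=\psi_{i,s}(\epsilon_i(s))$ with $\psi_{i,s}$ $1$-Lipschitz and $\psi_{i,s}(0)=0$, and that Gaussian Lipschitz concentration makes $\tilde\epsilon_i(s)$ conditionally sub-Gaussian with proxy $\sigma^2$ \emph{about its conditional mean}. But your martingale-difference/Azuma step then requires that mean to vanish, and it does not: when $\alpha r_i(s)$ sits within $O(\sigma)$ of a boundary, the reflection folds one tail of the Gaussian inward and $\big|\mathbb{E}[\tilde\epsilon_i(s)\mid\mathcal{F}_{s-1}]\big|$ can be of order $\sigma$. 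You flag this yourself, but the proposed fix --- that the bias is ``of lower order'' and can be absorbed --- fails in the worst case: the accumulated drift $\sum_{s}\alpha^{t_1-s}\mathbb{E}[\tilde\epsilon_i(s)\mid\mathcal{F}_{s-1}]$ is only bounded a priori by $O\big(\sigma\,\alpha/(1-\alpha)\big)$, whereas the deviation budget is $c_0\sigma\sqrt{(\alpha^2-\alpha^{2(t_1-t_0+1)})/(1-\alpha^2)}=O\big(c_0\sigma/\sqrt{1-\alpha^2}\big)$; for $\alpha$ near $1$ and long sub-intervals (and $\lenepoch=\lceil k\alpha^{-3}\sigma^{-3}\rceil$ forces long sub-intervals into \Cref{def:good-event}) the drift term dominates. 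So the argument as written does not close.

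What rescues the statement is not that the bias is small but that it always points \emph{toward the origin}: a reflection maps the running weighted sum to a point strictly closer to zero, and since the remaining noise increments are symmetric Gaussians, the probability of the final sum landing in a symmetric band $[-s,s]$ can only increase. This is exactly the content of the paper's \Cref{lemma:relationship_btw_noises}, proved by an induction on the number of reflections: one compares the true process to a fictional one that reflects only at its first $m$ boundary hits, and at each stage the inequality $\mathbb{P}[-s-q_1\le E_2\le s-q_1]\ge\mathbb{P}[-s-q_2\le E_2\le s-q_2]$ for $|q_1|<|q_2|$ (unimodality and symmetry of the Gaussian $E_2$) shows the reflected sum is stochastically no farther from zero than the unreflected one. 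With that monotonicity in hand, the plain Hoeffding bound for the i.i.d.\ $\epsilon_i$'s transfers verbatim to the $\tilde\epsilon_i$'s, and the rest of your computation goes through unchanged. To repair your proof you would need to supply this coupling/monotonicity argument (or an equivalent sign-aware treatment of the drift); the conditional sub-Gaussianity alone is not enough.
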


\textbf{Step 3: Distributed regret analysis.}
For a given round $t$ in the $s$-th epoch, we can bound its expected distributed regret by first decomposing it into two terms:
$
\mathbb{E}[D_i(t)] =  \mathbb{E}[D_i(t)\mathbbm{1}_{\goodevent^c(t)}] + 
\mathbb{E}[D_i(t)\mathbbm{1}_{\goodevent(t)}],
$ 
where $\mathbbm{1}_{\mathcal{E}}$ is the indicator function of event $\mathcal{E}$.
We can bound the first term by applying \Cref{lemma:high-prob-around-t}
\begin{equation}
\label{eqn:bound-for-D-good-event-complement}
\mathbb{E}[D_i(t)\mathbbm{1}_{\goodevent^c(t)}] \leq 2\fancyR \cdot \mathbb{P}[\goodevent^c(t)] \leq 2\fancyR \cdot (\alpha\sigma)^2 = O(\sigma^2\alpha^2).
\end{equation}
To bound the second term, we rely on the implications of the good event $\goodevent(t)$ (presented in \Cref{apdx:missing_proof_main_thm_step2}) to show the following.
\begin{lemma}
\label{lemma:deterministic} 
Suppose that we apply \alg\xspace{} to the non-stationary MAB problem with $\alpha \in [\threshold, 1)$ and and $k \leq \mathcal{K}(\alpha)$, for every arm $i\in [k]$ and some round $\tau_i(t) \leq t < \tau^{\mathrm{next}}_i(t)$, where $\tau_i(t)$ and $\tau^{\mathrm{next}}_i(t)$ are two consecutive rounds at which $i$ gets pulled, we have
$
\label{eqn:bound-for-D-good-event}
    \mathbb{E}[D_i(t)\mathbbm{1}_{
    \goodevent(t)}] \leq O(c_0^2\alpha^2\sigma^2 k^2 \log(c_0\sigma\alpha\sqrt{k})).
$
\end{lemma}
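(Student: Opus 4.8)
The plan is to prove \eqref{eqn:bound-for-D-good-event} by deriving a \emph{deterministic} bound valid on $\goodevent(t)$ and then taking expectations. Write $\tau:=\curlyT_i(t)$ for the round at which arm $i$ was last pulled; since $\tau$ lies in the epoch containing $t$, every sub-interval concentration bound packaged into $\goodevent(t)$ is available around $\tau$. Because the geometric sum $1+\alpha^2+\dots+\alpha^{2(\Delta\curlyT_i(t)-1)}$ in \Cref{def:contribution} is at least $1$ and $\alpha^{2(t-\curlyT_i(t))}\le 1$, we have the crude estimate $D_i(t)\le \Delta r_i(\tau)$; more generally $D_i(t)=\Delta r_i(\tau)\,\alpha^{2(t-\tau)}/\bigl(1+\dots+\alpha^{2(\Delta\curlyT_i(t)-1)}\bigr)$, so a larger instantaneous regret at $\tau$ is automatically diluted over the (correspondingly longer) stretch of rounds before arm $i$ is re-triggered and re-pulled --- this is precisely the leverage the distributed-regret notion was introduced to provide. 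Either way, the task reduces to bounding the instantaneous regret $\Delta r_i(\tau)=r^\star(\tau)-r_i(\tau)$ on $\goodevent(t)$ and, where needed, lower-bounding the re-pull gap $\Delta\curlyT_i(t)$.

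\textbf{Two cases for why $i$ is pulled at $\tau$.} Either (i) $i=\suparm(\tau)$, or (ii) $\tau$ is odd, $\mathcal T\neq\emptyset$ and $i=\argmin_{j\in\mathcal T}\trig_j$. In case (i), $\supbelief(\tau)=\widehat r_i(\tau)$, so $\Delta r_i(\tau)=\bigl(r^\star(\tau)-\supbelief(\tau)\bigr)+\bigl(\widehat r_i(\tau)-r_i(\tau)\bigr)$, and the two terms are handled by the third and first implications of the good event (\Cref{lemma:best_expected_reward} and \Cref{lemma:well-behaved-properties}) respectively. In case (ii), let $\tau^-$ be the pull of $i$ immediately before $\tau$ and let $\trig_i\in(\tau^-,\tau]$ be the round $i$ entered $\mathcal T$, so that $\supbelief(\trig_i)-\widehat r_i(\trig_i)\le c_1\sigma\sqrt{(\alpha^2-\alpha^{2(\trig_i-\tau^-+1)})/(1-\alpha^2)}$; I then telescope
\[
\Delta r_i(\tau)=\bigl(r^\star(\tau)-\supbelief(\tau)\bigr)+\Bigl[\bigl(\supbelief(\tau)-\supbelief(\trig_i)\bigr)+\bigl(\widehat r_i(\trig_i)-\widehat r_i(\tau)\bigr)\Bigr]+\bigl(\supbelief(\trig_i)-\widehat r_i(\trig_i)\bigr)+\bigl(\widehat r_i(\tau)-r_i(\tau)\bigr).
\]
The first term is implication (iii); the third is the triggering bound; the last is implication (i) applied over $[\tau^-,\tau]$. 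The bracketed \emph{drift} term is the crux: since $\widehat r_i(\tau)=\alpha^{\tau-\trig_i}\widehat r_i(\trig_i)$ and the superior arm's estimate is refreshed essentially every other round, both $\supbelief(\trig_i)-\supbelief(\tau)$ and $\widehat r_i(\trig_i)-\widehat r_i(\tau)=(1-\alpha^{\tau-\trig_i})\widehat r_i(\trig_i)$ are, to leading order, $(1-\alpha^{\tau-\trig_i})r^\star$ --- the AR recursion contracts \emph{every} arm toward $0$ at the same rate $\alpha$, so the two drifts almost cancel and what survives is of the same order as the triggering and noise terms. All of these are $c_0\sigma$ (or $c_1\sigma$) times a weighted-noise radius $\sqrt{(\alpha^2-\alpha^{2(\cdot)})/(1-\alpha^2)}$ that saturates at $\alpha/\sqrt{1-\alpha^2}$; \Cref{lemma:bound_on_gap} caps the relevant inter-pull gaps, and the hypotheses $\alpha\in[\threshold,1)$ and $k\le\mathcal K(\alpha)$ --- equivalently $\alpha^{2k-1}\le 1/8$ --- are used to force $1/\sqrt{1-\alpha^2}=O(\sqrt k)$ and to keep the at most $k-1$ stale triggered estimates in $\mathcal T$ from decaying past a constant factor over the queueing delay.

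\textbf{Assembly and main obstacle.} Combining the two cases gives a per-round bound on $\Delta r_i(\tau)$ on $\goodevent(t)$; substituting it (and, when the instantaneous regret is appreciable, the companion lower bound on $\Delta\curlyT_i(t)$ obtained by running the triggering criterion forward from $\tau$) into $D_i(t)=\Delta r_i(\tau)\,\alpha^{2(t-\tau)}/\bigl(1+\dots+\alpha^{2(\Delta\curlyT_i(t)-1)}\bigr)$ and evaluating the geometric sums yields the claimed $O\!\bigl(c_0^2\alpha^2\sigma^2 k^2\log(c_0\sigma\alpha\sqrt k)\bigr)$; the complementary contribution $\mathbb E[D_i(t)\mathbbm 1_{\goodevent^c(t)}]$ is already handled by \eqref{eqn:bound-for-D-good-event-complement}. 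I expect the drift term in case (ii) to be the hard part: one must track how far $\supbelief$, the stale estimate $\widehat r_i$, and $r^\star$ itself move between the triggering round $\trig_i$ and the pull round $\tau$ accurately enough that the final bound still exhibits the $\alpha^2\sigma^2$ scaling that \Cref{thm:lower-bound} shows is unavoidable --- a crude bound on the decay of a stale triggered estimate over the queueing delay is $\Theta(1)$ and would be fatal, so the near-cancellation above, the good-event concentration inequalities, and the joint regime restrictions must all be used together.
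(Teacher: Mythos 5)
There is a genuine gap, and it is in the step you yourself flag as the plan: ``deriving a deterministic bound valid on $\goodevent(t)$ and then taking expectations.'' A worst-case bound on $D_i(t)$ over the good event cannot give the stated rate. On $\goodevent(t)$, the best uniform bound one can extract from the triggering criterion and the concentration radii is of order $c_0\alpha\sigma\cdot\mathrm{poly}(k)$ --- \emph{one} power of $\alpha\sigma$, not two, and with no logarithm. The paper's proof is not deterministic-then-average: it conditions on the random quantity $\eta=\Delta r_i(t)$ (the gap at the \emph{current} round $t$, not at the pull round $\tau$), proves the numerator bound $\alpha^{2(t-\tau)}\Delta r_i(\tau)\le O(\eta+c_0\sigma\alpha\sqrt k)$ (Claim~\ref{claim:nominator}) together with the denominator bound $1+\alpha^2+\dots+\alpha^{2(\Delta\tau-1)}\ge\max\{\Omega(\eta^2/(c_0^2\sigma^2\alpha^2k^2)),1\}$ (Claim~\ref{claim:denominator}), giving $D_i(t)\lesssim\min\{k(\tilde C^2/\eta+\tilde C^3/\eta^2),\,\eta+\tilde C\}$ with $\tilde C=c_0\alpha\sigma\sqrt k$, and then \emph{integrates over the stationary distribution of $\eta$} using the near-uniform density bounds of Remark~\ref{remark:uniform_approx}. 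The second factor of $\tilde C$ comes from $\mathbb P[0<\eta\le\tilde C]=O(\tilde C)$, and the $\log(\tilde C)$ comes from $\int_{\tilde C}^1\tilde C^2\,d\eta/\eta$. The case $\Delta r_i(t)=0$ likewise requires a distributional argument (the probability that the two best arms are within $2\mathcal F(t-\tau)$ of each other is $O(k\,\mathcal F(t-\tau))$, via order statistics of the near-uniform law). None of this averaging over the reward distribution appears in your proposal, and without it your assembly step cannot produce either the $\alpha^2\sigma^2$ scaling or the logarithm --- ``evaluating the geometric sums'' will not generate a $\log$.

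Two further concrete errors. First, your claim that $k\le\mathcal K(\alpha)$ forces $1/\sqrt{1-\alpha^2}=O(\sqrt k)$ is backwards: $\mathcal K(\alpha)$ is defined so that $\alpha^{2k-1}\ge 1/8$, which gives $1-\alpha\lesssim 1/k$, i.e.\ $1/\sqrt{1-\alpha^2}\gtrsim\sqrt k$; as $\alpha\to1$ with $k$ fixed, the saturated radius $c_0\sigma\alpha/\sqrt{1-\alpha^2}$ blows up, so you cannot use it as an $O(c_0\sigma\alpha\sqrt k)$ bound. The $\sqrt k$ in the paper enters instead through $\mathcal F(\Delta t)\le c_0\sigma\alpha\sqrt{\Delta t}$ combined with the $O(k)$ bound on the relevant inter-pull gaps from Lemma~\ref{lemma:bound_on_gap}; the condition $k\le\mathcal K(\alpha)$ is used only to keep $\alpha^{2k-1}$ bounded below by a constant so that stale estimates do not decay by more than a constant factor over the queueing delay. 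Second, your case split (why $i$ was pulled at $\tau$) and the telescoping through $\supbelief(\trig_i)$ bound $\Delta r_i(\tau)$ in isolation, whereas the usable bound must be expressed in terms of $\eta=\Delta r_i(t)$ so that it can be paired with the $\eta$-dependent denominator bound before integrating; relating $\Delta r_i(\tau)$ to $\Delta r_i(t)$ via inequality \eqref{eqn:property-4} of Lemma~\ref{lemma:well-behaved-properties} is the missing link.
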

The proof of \Cref{lemma:deterministic} is deferred to \Cref{apdx:missing_proof_main_thm_step3}. There, we provide a bound for the nominator and denominator of $D_i(t)$ respectively, conditioning on the good event $\goodevent(t)$ and the value of $\Delta r_i(t)$. In the proof, we critically use the implications of the good event (see \Cref{apdx:missing_proof_main_thm_step2}). 
Now, combining \eqref{eqn:bound-for-D-good-event-complement} and \Cref{lemma:deterministic},
for any round $t$ within the $s$th epoch such that $\tau_i(t) \leq t < \tau^{\mathrm{next}}_i(t)$, we have
\begin{equation}
\label{eqn:bound-on-distributed-regret}
\begin{aligned}
\mathbb{E}[D_i(t)] & =  \mathbb{E}[D_i(t)\mathbbm{1}_{\goodevent^c(t)}] + 
\mathbb{E}[D_i(t)\mathbbm{1}_{\goodevent(t)}] \leq O(c_0^2\sigma^2\alpha^2 k^2 \log(c_0\sigma\alpha\sqrt{k})).
\end{aligned}
\end{equation}

\textbf{Step 4: Summing the distributed regrets.} 
Given Equations \eqref{eqn:sum-of-distributed-regrets}, \eqref{eqn:bound-on-distributed-regret} and $\lenepoch = \lceil k\alpha^{-3}\sigma^{-3} \rceil$, summing the expected distributed regrets first within an epoch, and then along the horizon yields
$$
\begin{aligned}
& \overline{\reg}_{\mathcal A}
= \frac{1}{T} \sum_{i=1}^k  \sum_{s=1}^S \sum_{{t = (s-1)\lenepoch + 1}}^{s\lenepoch} \mathbb{E}[D_i(t)] 
\leq \frac{1}{T} \sum_{i=1}^k S \cdot [ O(\Tilde{C}^2\log(\Tilde{C}))\lenepoch + 2\fancyR] 
= O(k\Tilde{C}^2\log(\Tilde{C})),
\end{aligned}
$$
where the additional $2\fancyR$ term comes from the initialization round and the last round we pull arm $i$ within an epoch. This concludes the proof if we plug in $\Tilde{C} = c_0\alpha\sigma\sqrt{k}$. $\blacksquare$

\section{A Real-World Case Study on Tourism Demand Prediction}
\label{sec:case-study}

In this section, we numerically demonstrate the efficacy of \alg\xspace via a real-world case study based on a international Tourism Demand dataset for Australia \citep{lim2001time}. In this case study, we act as a travel agency that needs to determine which vacation package to offer, where the demand for each vacation package is highly dependent on the tourism demand during each quarter. Our case study is further complemented by a number of synthetic experiments in \Cref{sec:numerics}, where we compare \alg\xspace against various benchmarks designed for both stationary and non-stationary settings and again  show the superior performance of \alg\xspace in adapting to the rapidly changing environment.

\textbf{Dataset and setup.} 
The international tourism demand dataset \citep{lim2001time}, obtained from the Australian Bureau of Statistics, records the number of individual tourist arrivals to Australia from Hong Kong during each quarter between the years 1975-1989. The authors of \cite{lim2001time} have fitted an AR model to the logarithms of quarterly tourist arrivals, which results in an AR-$4$ model with a trend component\footnote{We keep the significant lags in forecasting tourist arrivals, which are the second and fourth lags.}:
$
r_i(t) = -0.01 + 0.32 R_i(t-2) + 0.6 R_i(t-4)\,.
$
Note that here, the number of tourist arrivals is strongly correlated with the number of arrivals four quarters before, which is likely due to seasonal patterns.
The time series analysis in \cite{lim2001time}  also supports our assumption in \Cref{sec:set-up} that the AR parameters are known. In this dataset, as well as many others from the real world, decision-makers are likely to have access to historical data, which enables them to fit AR processes with good precision.

Given the AR model, we consider a recommendation setting where the travel agencies have $k = 5$ vacation packages (arms) to offer to the tourists from Hong Kong, and wish to dynamically feature the package with the highest demand (reward) at each round. For each arm, we simulate its reward sequence by randomly drawing the initial reward from a uniform distribution in $[0, 1]$ and simulate $r_i(t)$ for a total of $T = 200$ rounds, based on the AR-4 model above. We take the stochastic rate of change to be $\sigma = 0.1$ for all arms.

\textbf{Extension of \alg\xspace to AR-$p$ processes.} We extend our algorithm \alg\xspace to handle MAB problems with rewards modeled using general AR-$p$ processes with trends; see Algorithm \ref{alg:extension-AR-p} in \Cref{sec:extension-AR-p}. At a high level, the extension of \alg\xspace uses similar techniques as described in \Cref{sec:algorithm}. The key difference is that, for each arm $i \in [k]$, it maintains not only an estimate $\widehat{r}_i(t)$ of the arm's reward at round $t$, but also an estimate $\widehat{E}_i(t)$ that captures the associated estimation error. These estimates are dynamically updated based on the structure of the AR-$p$ process with trends. The extended algorithm, akin to Algorithm \ref{alg:main}, (i) selects the arm with the highest estimated reward $\widehat{r}_i(t)$ and (ii) triggers arms with potential, where the confidence bound in the triggering criteria now depends on the estimate of the error $\widehat{E}_i(t)$. For a comprehensive discussion of our extension, please refer to \Cref{sec:extension-AR-p}.

\textbf{Results.} We compare the performance of \alg\xspace against the two most competitive benchmarks ($\epsilon$-greedy and a modified UCB algorithm) that we identfied in synthetic experiments; see \Cref{sec:numerics} for comparisons with the other benchmarks.
The comparison is shown in Table \ref{tbl:real-world}.
It can be seen that \alg\xspace stands out in terms of both the regret\footnote{Here we evaluate each algorithm using the \emph{normalized regret}: 
$
\sum_{t=1}^T \reg_\mathcal{A}(t)/\sum_{t=1}^T r^\star(t)\,,
$
which normalizes the total regret with the optimal reward in hindsight.} and the number of times it pulls the optimal arm. In this case study, while the $\epsilon$-greedy algorithm frequently selects the optimal arm, it still accumulates high regret due to its purely random exploration, which can lead to the selection of arms with low rewards. Similarly, the modified UCB algorithm, which leverages the knowledge of the temporal structure in its confidence bound (see description in \Cref{apdx:mod-UCB}),  doesn't perform well. Due to the rapidly changing nature of the AR-$p$ process, modified UCB over-explores. \alg\xspace, on the other hand, prove to be effective even amidst the rapid changes introduced by more complex time series.

\begin{table}[hbt]
\centering
\caption{Performance comparison.}
\label{tbl:real-world}
\begin{tabularx}{0.5\textwidth}{c|cl}
\toprule
Algorithm & normalized regret & \# optimal arms \\
\midrule
\alg\xspace & 0.26 (0.14) & 142.04 (15.71) \\
UCB-mod & 0.60 (0.27) & 106.62 (7.12) \\
$\epsilon$-greedy & 0.38 (0.16) & 133.83 (12.47)\\
\bottomrule
\end{tabularx}
\end{table}

\section{Extension on Learning the AR Parameter}
\label{sec:extension}

In the previous sections, we have assumed full knowledge of the AR parameter, which as discussed in Sections \ref{sec:set-up} and \ref{sec:case-study}, when we have access to past data. This section proposes an algorithm extension for when the AR parameter needs estimation. We introduce a maximum likelihood method and numerically evaluate the performance of \alg\xspace, which show that \alg\xspace still remains competitive despite noise in the estimated AR parameter.

\textbf{A Maximum Likelihood Estimator.} If the decision maker does not have prior knowledge of the AR parameter $\alpha$, one can learn the AR parameter via Maximum Likelihood Estimation (MLE). In the first $T_\text{est}$ rounds of the time horizon, where $T_\text{est} = O(T^\beta)$ for some $\beta \in (0, 1)$, we pull one arm $i \in [k]$ consecutively for a fixed arm $i$ and observe rewards $R_i(1), \dots, R_i(T_\text{est})$. We then define the maximum likelihood estimator $\widehat{\alpha} = \argmin_{\alpha \in (0, 1)} \mathcal{L}(\alpha) \,,$ where $\mathcal{L}(\alpha)$ is the negative of the log-likelihood function defined in \eqref{eqn:def-loss} of \Cref{apdx:extension-proof}.
Observe that here we cannot directly apply linear regression for estimating $\alpha$ because the existence of truncating boundaries would lead to a biased estimator. MLE, on the other hand, remains robust even when the expected reward of our arm hits the boundary. Since the number of rounds used for estimation $T_\text{est}$ scales sublinearly with $T$, the regret incurred within the first $T_\text{est}$ rounds would not impact our per-round regret upper bound. If we have heterogeneous AR parameters $\alpha_i$ for arms $i \in [k]$ (as in \Cref{sec:numerics}), one can simply pull each arm $i \in [k]$ consecutively for $T_\text{est}$ rounds and perform MLE for each $\alpha_i$. 

The following proposition quantifies the amount of noise in the estimated AR parameter. 
\begin{proposition}
\label{thm:estimation-alpha}
Let $\widehat{\alpha} = \argmin_{\alpha \in (0, 1)} \mathcal{L}(\alpha)$ be the estimated AR parameter. 
Then, for $\gamma > 0$, with probability at least $1-2/T_\text{est}^\gamma - 2\exp({-T_\text{est}V^2/2\fancyR^2})$, there exists constants $L_1, L_2$ such that 
$
|\widehat{\alpha} - \alpha| \leq (4\sigma\fancyR L_1)/(V L_2) \cdot \sqrt{2\gamma \log{T_\text{est}}/{T_\text{est}}}\,,
$
where $V$ is the variance of the observed reward in the steady state distribution, which is independent of our algorithm or the time horizon $T$.
\end{proposition}

We plot the normalized regret incurred by \alg\xspace, $\epsilon$-greedy and mod-UCB using the estimated AR parameters, obtained through MLE with $T_\text{est} = \{25, 50, 100\}$. 
We observe that whenever the AR parameters $\alpha_i$ are small or large, the performance of \alg\xspace outcompetes the other two benchmarks. In particular, \alg\xspace appears to be robust to the noises in the estimated AR parameter, with performance close to what we would otherwise obtain with the accurate AR parameters (see \Cref{tbl:numerical_results} in \Cref{sec:numerics}). 

\vspace{2mm}
\begin{figure}[hbt]
\centering
\includegraphics[width=0.8\textwidth]{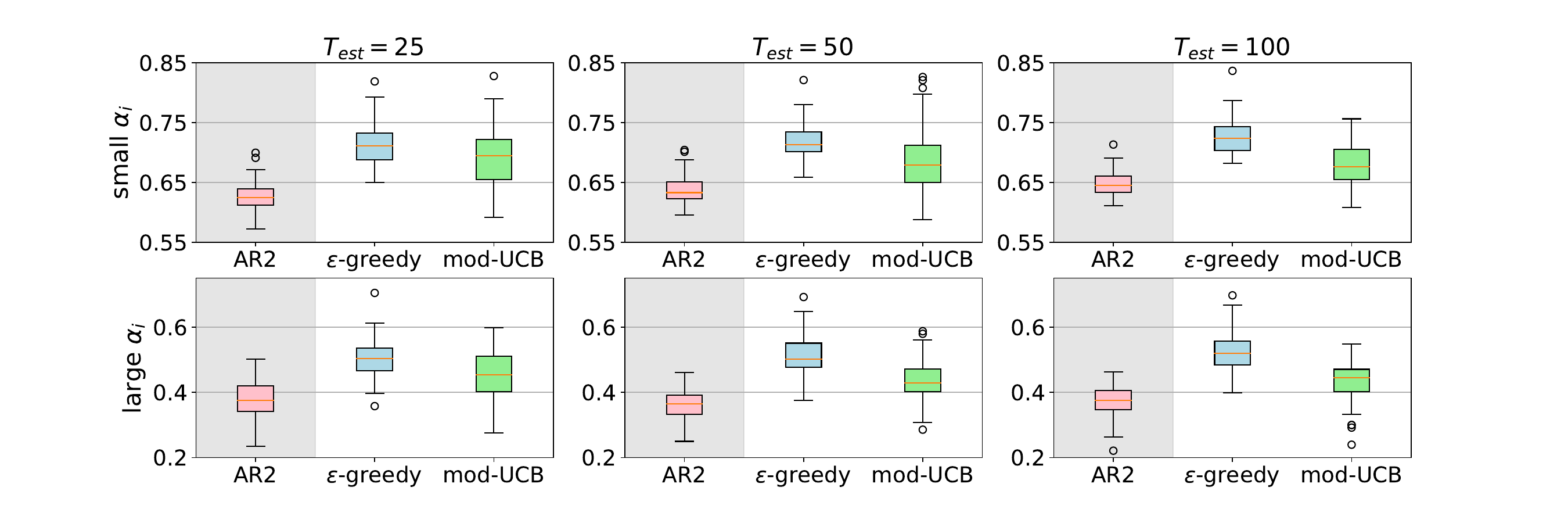}
\caption[PDF]{\small{Box plots of normalized regret incurred by each algorithm at $T = 10,000$ (\alg\xspace highlighted in gray). 
The input AR parameter $\widehat{\alpha}_i$ is the maximum-likelihood estimator of the true AR parameter. We vary the magnitudes of AR parameters $\mathbb{E}[\alpha_i] \in \{0.4, 0.9\}$ and the number of rounds used for estimation $T_\text{est} \in \{25, 50, 100\}$. The parameters of our instances are the same as in \Cref{sec:numerics}, and we take $k = 6$.
}}
\label{fig:robustness}
\end{figure}  

\section{Conclusion and Future Directions}
In this paper, we studied a non-stationary MAB problem with an AR structure, which captures the rapid changes commonly observed in real-world dynamics. Our proposed algorithm, \alg\xspace, leverages our knowledge or estimate of the temporal dependency to effectively handle the challenges associated with the fast-changing environment, and our techniques can be potentially adapted to more complex temporal series. As the realm of non-stationary bandits with rewards governed by temporal structures remains largely unexplored, there are several exciting avenues for future research. One intriguing direction is to incorporate seasonality into our framework, exploring models such as seasonal ARIMA with both long-term and short-term changes.
Additionally, building on the promising numerical results in \Cref{sec:extension}, it would be interesting to theoretically characterize the performance of our algorithm when estimation of the AR parameter and online-decision making are performed simultaneously.

\ACKNOWLEDGMENT{We are grateful to the MIT-IBM Watson AI Lab for their generous support.}

\bibliography{references}
\bibliographystyle{plainnat}

\begin{APPENDICES}

\renewcommand{\theHsection}{A\arabic{section}}

\section{Numerical Studies on Synthetic Data}
\label{sec:numerics}
\textbf{Set-up.} We consider a non-stationary bandit problem with $k$ arms and $T = 10,000$ rounds. To make the setting more general, we let the expected reward $r_i(t)$ of each arm $i \in [k]$ follow an independent AR-1 process with heterogeneous AR parameter $\alpha_i$, stochastic rate of change $\sigma_i$, and truncating boundary $[-1, 1]$. 
We consider two different regimes of $\alpha_i$'s:
(i) \emph{Small $\alpha_i$'s}: the AR-1 parameters $\{\alpha_i\}_{i=1}^k$ are drawn from a dirichlet distribution with rescaling such that for all $i \in [k]$, $\mathbb{E}[\alpha_i] = 0.4$, which is less than the threshold value $\threshold = 0.5$ that we defined in \Cref{sec:regret}. 
(ii) \emph{Large $\alpha_i$'s}: the AR-1 parameters $\{\alpha_i\}_{i=1}^k$ are chosen from a dirichlet distribution with rescaling such that $\mathbb{E}[\alpha_i] = 0.9$ for all $i \in [k]$. 
The concentration parameters of the dirichlet distributions are all set to be $5$.
For each regime of $\alpha_i$'s (small v.s. large), we consider $100$ instances, where each instance corresponds to AR parameters $(\alpha_1, \dots, \alpha_k)$ generated as described above; and stochastic rates of change $(\sigma_1, \dots, \sigma_k)$ generated from i.i.d. uniform distributions in $(0, 0.5)$. The initial reward $r_i(0)$ is drawn independently from the steady state distribution discussed in \Cref{sec:stationary_dist}.

\textbf{Performance comparison.} 
We implement \alg\xspace and compare it against a number of benchmarks designed for stationary and non-stationary bandits respectively. 
(See \Cref{apdx:implementation} for the implementation details, where we generalize \alg\xspace to the setting with heterogeneous $\alpha_i$'s and $\sigma_i$'s.) All parameters used in the following algorithms are tuned.

The benchmark algorithms designed for stationary settings are:
(i) ``explore-then-commit'' (ETC) policy \citep{Garivier2016} that first selects each arm for $m$ rounds, and commits to the arm with the highest average pay-off in the remaining $T - mk$ rounds;
(ii) UCB algorithm \citep{Auer2002}, which achieves sublinear regret under static settings;
(iii) $\epsilon$-greedy algorithm \citep{Sutton1998} with probability of exploration $\epsilon = 0.1$, which acts greedily with probability $(1-\epsilon)$ and explores a random arm otherwise.
(iv) Exp3 algorithm \citep{Auer2002nonstochastic}, which is designed for the adversarial setting and achieves sublinear regret when evaluated against the weak static benchmark. 

Further, we also consider several benchmark algorithms designed for non-stationary settings:
(i) Rexp3 algorithm \citep{Besbes2014}, which modifies the Exp3 algorithm for the non-stationary setting by introducing a restarting mechanism, and assumes that the total variation of expected rewards is bounded by a variation budget $V_T$;
(ii) a modified UCB algorithm (mod-UCB), which modifies the upper confidence bound  based on the structure of the AR-1 process; see \Cref{apdx:mod-UCB} for a description and discussion of the algorithm; 
(iii) the sliding-window UCB 
(SW-UCB) algorithm \citep{Garivier2011} and the sliding-window Thompson Sampling (SW-TS) algorithm \citep{trovo2020sliding}, both of which adopt sliding-window approaches;
(iv) the predictive sampling (PS) algorithm for AR-1 bandits recently introduced by \cite{liu2023nonstationary}, which is shown to outperform the Thompson sampling algorithm in non-stationary environments.

\textbf{Results.} In \Cref{tbl:numerical_results}, we evaluate and compare the performance of \alg\xspace against the benchmark algorithms in instances with small $\alpha_i$'s ($\mathbb{E}[\alpha_i] = 0.4)$ as well as large $\alpha_i$'s ($\mathbb{E}[\alpha_i] = 0.9)$. We also vary the number of arms $k \in \{2, 10, 20\}$. To keep our results consistent across instances, we evaluate each algorithm $\mathcal{A}$ using the \emph{normalized regret}\footnote{ Note that since the expected rewards can be negative, the normalized regret can exceed one. Further, when $\alpha_i$'s are small, the normalized regret of each algorithm is in general larger. This is because the optimal reward used in normalization tends to be smaller, and hence does not contradict the fact that our problem is more challenging when the AR parameters are small. }: 
$
\sum_{t=1}^T \reg_\mathcal{A}(t)/\sum_{t=1}^T r^\star(t),
$
which normalizes the total regret using the reward of optimal in hindsight dynamic policy. 

From \Cref{tbl:numerical_results}, 
we see that \alg\xspace clearly outperforms the benchmarks in almost all choices of $k$ and regimes of $\alpha_i$'s. The algorithms designed for stationary settings (ETC, UCB, Exp3) perform poorly, as the AR setting includes non-stationarity that differs substantially from the stochastic or adversarial settings they were intended for. Similarly, the algorithms designed for non-stationary setting with limited amount of changes (RExp3, SW-UCB, SW-TS) do not perform well since they are designed to tackle environments either with abrupt changes, or with sublinear amount of changes, and are not suited for the fast changes in the AR environment.
Finally, when compared against benchmarks that take into account the AR structure of rewards ($\epsilon$-greedy, modified UCB, PS), \alg\xspace also displays superior performance. 
Here, the PS algorithm indeed dominates TS-based approaches in non-stationary environments, as suggested by \cite{liu2023nonstationary}, but AR2 remains competent in all kinds of AR-based setups; further, \cite{liu2023nonstationary} also suggests that deriving the conditional probability distribution in the predictive sampling procedure and implementing it can be complicated in general, and hence it is unclear how one can adapt the PS algorithm for more complicated AR-$p$ processes as we did in our case study (see \Cref{sec:case-study} and \Cref{sec:extension-AR-p}).
Among all stationary benchmarks, $\epsilon$-greedy is the most competitive; while mod-UCB stands out as the most competitive within the non-stationary benchmarks. However, there are instances where \alg\xspace can contribute to a decrease in regret as high as $50\%$ when compared to $\epsilon$-greedy, and a $15.5\%$ decrease in regret when compared to modified UCB. All of the aforementioned results attest to the efficacy of \alg\xspace in rapidly changing environments governed by the AR process.

\begin{table}[bt]
    \centering
    \caption{The mean and standard deviation (in parenthesis) of normalized regret incurred by each algorithm at $T = 10,000$. All experiments are run with $100$ instances. The algorithm with the smallest normalized regret is highlighted in gray.}
    \label{tbl:numerical_results}
    ~~
    \subfloat[Comparison with Stationary Benchmarks
    \label{subtbl:numerical_results_stationary}]{
    \footnotesize
    \begin{tabular}{ cc|c|cccc } 
    \toprule
    $\mathbb{E}[\alpha_i]$ & $k$ & \alg & ETC & UCB & $\epsilon$-greedy & Exp3 \\ 
    \midrule
    \multirow{3}{*}[0ex]{0.4} 
    & 2 & \gray 
    0.38 (0.10) 
    &
    1.00 (0.03)
    &
    0.68 (0.09) 
    &
    0.43 (0.09)
    & 
    0.99 (0.02)
    \\
    & 10 & 
    \gray 
    0.67 (0.01)
    &
    1.00 (0.01)
    &
    0.80 (0.01)
    &
    0.76 (0.02)
    &
    1.00 (0.01)
    \\
    & 20 & 
    \gray
    0.72 (0.01)
    &
    1.00 (0.01)
    &
    0.84 (0.01) 
    &
    0.81 (0.01)
    &
    1.00 (0.01)
    \\
    \midrule
    \multirow{3}{*}[0ex]{0.9} 
    & 2 & \gray 
    0.18 (0.06)
    &
    1.00 (0.11)
    &
    0.65 (0.13) 
    &
    0.36 (0.14)
    &
    0.95 (0.12)
    \\
    & 10 & \gray 
    0.40 (0.04)
    &
    1.00 (0.05)
    &
    0.58 (0.04)
    &
    0.60 (0.06)
    &
    0.99 (0.02) 
    \\
    & 20 & \gray 
    0.49 (0.02)
    &
    1.00 (0.02)
    &
    0.62 (0.02)
    &
    0.64 (0.03)
    &
    0.99 (0.01)
    \\
    \bottomrule
    \end{tabular}
    }
    
    \subfloat[Comparison with Non-Stationary Benchmarks\label{subtbl:numerical_results_non_stationary}]{
    \footnotesize
    \begin{tabular}{ cc|c|ccccc } 
    \toprule
    $\mathbb{E}[\alpha_i]$ & $k$ & \alg & Rexp3 & mod-UCB & SW-UCB & SW-TS & PS \\ 
    \midrule
    \multirow{3}{*}[0ex]{0.4} 
    & 2 & \gray 
    0.38 (0.10) 
    &
    0.96 (0.03)
    &
    0.45 (0.05)
    &
    0.69 (0.05)
    &
    1.00 (0.03)
    & 
    0.51 (0.04)
    \\
    & 10 & 
    \gray 
    0.67 (0.01)
    &
    0.99 (0.01)
    &
    \gray
    0.67 (0.03)
    &
    0.90 (0.02)
    &
    0.98 (0.01)
    & 
    0.78 (0.03)
    \\
    & 20 & 
    \gray{0.72 (0.01)}
    &
    1.00 (0.01)
    &
    \gray 
    0.72 (0.02)
    &
    0.95 (0.01)
    &
    0.99 (0.01)
    & 
    0.84 (0.02)
    \\
    \midrule
    \multirow{3}{*}[0ex]{0.9} 
    & 2 & \gray 
    0.18 (0.06)
    &
    0.87 (0.08)
    &
    0.20 (0.07)
    &
    0.55 (0.13)
    &
    0.99 (0.15)
    & 
    0.52 (0.33)	
    \\
    & 10 & 
    \gray{0.40 (0.04)}
    &
    0.97 (0.01)
    &
    0.43 (0.05)
    &
    0.70 (0.03)
    &
    0.86 (0.04)
    & 
    0.47 (0.03)
    \\
    & 20 & 
    \gray 
    0.49 (0.02)
    &
    0.99 (0.01)
    &
    0.53 (0.03)
    & 
    0.84 (0.02)
    &
    0.91 (0.02)
    & 
    0.52 (0.02)
    \\
    \bottomrule
    \end{tabular}
    }
\end{table}

\section{Implementation Details and Extensions of Algorithm \alg\xspace}
\label{apdx:numerics}

\subsection{Implementation of \alg}
\label{apdx:implementation}
To apply \alg\xspace to the more general setting described in \Cref{sec:numerics}, where the expected reward of each arm $i$ undergoes an AR-1 process with heterogeneous AR parameter $\alpha_i$ and stochastic rate of change $\sigma_i$, we make a few straightforward updates to \alg: 
(i) when maintaining estimated reward $\widehat{r}_{i}(t)$ of arm $i$, we use its AR parameter $\alpha_i$; 
(ii) we assume that an arm $i \neq \suparm(t)$ gets triggered if at round $t$,
$$
\supbelief(t) - \widehat{r}_{i}(t) \leq c_1\sigma_i\sqrt{({\alpha_i^2-\alpha_i^{2(t-\curlyT_i+1)}})/{(1-\alpha_i^2)}}\,.
$$
Empirically, we make two more minor changes to \alg\xspace in its numerical implementation. 
(i) The definition of the superior arm: instead of considering one of the two most recently pulled arms, we consider all arms and choose the one with the highest estimated reward at round $t$ to be the superior arm at round $t$, i.e., $\suparm(t) = \argmax_{i \in [k]} \widehat{r}_i(t)$. 
(ii) The triggered arm to be played: in the exploration round $t$ (when $t$ is odd and $\mathcal{T} \neq \emptyset$), we consider the upper confidence bound of the expected reward of each arm $i$, defined as:
$$
\texttt{UCB}_i(t) = \widehat{r}_i(t) + c_1\sigma_i\sqrt{{(\alpha_i^2-\alpha_i^{2(t-\curlyT_i(t))})}/{(1-\alpha_i^2)}}
$$
and pull the triggered arm with the highest upper confidence bound, i.e. $I_t = \argmax_{j \in \mathcal{T}} \texttt{UCB}_j(t)$. Note that the arms in the triggered set $\mathcal{T}$ differ in terms of both triggering time and estimated rewards. Previously, our criteria for selecting the triggered arm to play only takes into account one aspect of such heterogeneity, i.e., the triggering time. The new criteria allows us to also consider the estimated reward of each arm, which is another indicator of the arm's potential. 

\subsection{Extension of \alg\xspace to AR-$p$ Process with Trend}
\label{sec:extension-AR-p}
In this section, we provide an extension of \alg\xspace (Algorithm \ref{alg:main}), which would apply to non-stationary bandits where the rewards follow general AR-$p$ processes with trends for any $p \ge 1$. This algorithm is applied when we perform our case study on tourism demand prediction in \Cref{sec:case-study}, and is shown to adapt well to the more complex, rapidly evolving environment.

Let us assume that for each arm $i \in [k]$, the reward distribution following an AR-$p$ process, potentially with a trend component. That is, conditioning on past history during rounds $t-1, \dots, 1$, the expected reward of arm $i$ at round $t$ is 
$$
r_i(t) = \alpha_{i,0} + \sum_{j=1}^p \alpha_{i,j} R_i(t-j)\,,
$$
where $\alpha_{i,0}$ is the trend component and $(\alpha_{i,1}, \dots, \alpha_{i,p})$ is the set of AR parameters associated with arm $i$. 

The details of the extension of \alg\xspace is presented in Algorithm \ref{alg:extension-AR-p}, which we called \texttt{AR2-p}. At a high level, the main techniques used in \texttt{AR2-p} are very similar to those used in \alg\xspace. As \texttt{AR2-p} is specifically designed to address practical scenarios where real-world dynamics are represented by more complex time series, we have incorporated the implementation changes discussed in \Cref{apdx:implementation} to enhance its practical applicability.

At each round $t$ of Algorithm \ref{alg:extension-AR-p}, we first determine the superior arm $\suparm(t)$, which is the arm with the highest expected reward. We then decide if there are any arms that we wish to trigger, depending on whether some untriggered arm has an upper confidence bound that exceeds the expected reward of the superior arm $\supbelief(t)$.
We then alternate between exploration and exploitation: (i) during exploitation rounds, we pull the {superior arm}; (ii) during exploration rounds, we pull the arm with the highest upper confidence bound out of the set of \emph{triggered arms}, denoted as $\mathcal{T}$. 
Finally, after we pull arm $I_t$ at each round, we update the estimate for the state of each arm $i \in [k]$.

\setlength{\textfloatsep}{5pt}
\begin{algorithm}[hbt]
\caption{Alternating and Restarting algorithm for non-stationary bandits with AR-$p$ reward structure (\texttt{AR2-p})}
\footnotesize{
\textbf{Input:} AR parameters $\{\alpha_{i,j}\}_{i \in [k] \atop {j \in [p]}}$, trend components $\{\alpha_{i,0}\}_{i \in [k]}$, stochastic rate of change $\{\sigma_i\}_{i \in [k]}$, epoch size $\lenepoch$, parameter $c$. \\
\begin{enumerate}[leftmargin=*]
  \item Set the epoch index $s = 1$.
  \item Repeat while $s \leq \lceil T/\lenepoch \rceil$:
        \begin{enumerate}[leftmargin=5mm]
            \item [(a)] Initialization: 
            \begin{itemize}
                \item Set $t_0 = (s-1)\lenepoch$. Set the initial triggered set $\mathcal{T} = \emptyset$. 
                \item For each arm $i \in [k]$, set estimates for expected rewards $\widehat{r}_i(t) = 0$ and error bound estimates $\widehat{E}_i(t) = \infty$ for $t \le 0$.
            \end{itemize}
             
            \item [(b)] For $t = t_0 + 1, \dots, \min\{t_0 + \lenepoch, T\}$ 
            \begin{itemize}
                \item If $t - t_0 \leq p \cdot k$, pull arm $i = \lfloor (t-t_0-1)/p \rfloor$ + 1. \quad \quad // \emph{Pull each arm consecutively for $p$ rounds.}
                \item Else,
                \begin{itemize}[leftmargin=1mm]
            \item Update the identity of the superior arm and its estimated reward 
            $$
            \suparm(t) = \argmax_{i \in [k]} \,\widehat{r}_i(t) \quad \text{and} \quad
            \supbelief(t) = \widehat{r}_{\suparm(t)}(t)
            $$
            \item \ul{Trigger arms with potential}: 
            For $i \notin \mathcal{T} \cup \{\suparm(t)\}$, 
            trigger arm $i$ if 
            $$
            \supbelief(t) - \widehat{r}_{i}(t) \leq c \sigma_i \sqrt{\widehat{E}_i(t)}.
            $$
            If triggered, add $i$ to $\mathcal{T}$.
            \item \ul{Alternate between exploration and exploitation:} 
            \begin{enumerate}
                \item If $t$ is odd and $\mathcal{T} \neq \emptyset$, pull a triggered arm with the highest upper confidence bound estimate: $I_t = \argmax_{i \in \mathcal{T}} \widehat{r}_i(t) + c \sigma_i \sqrt{\widehat{E}_i(t)}\,.$
                \item Otherwise, pull the superior arm $I_t = \suparm(t)$.
            \end{enumerate}
            \end{itemize}
            \item Receive a reward $R_{I_t}(t)$. Update $\widehat{r}_{I_t}(t) = R_{I_t}(t)$ and $\widehat{E}_{I_t}(t) = 0$.
            \item \underline{Maintain Estimates}: 
            For each arm $i \in [k]$, set estimates for expected rewards and error bounds
            \[
            \widehat{r}_{i}(t) = 
            \alpha_{i,0} + \sum_{j=1}^p \alpha_{i, j} \widehat{r}_i(t - j)
            \quad \text{and} \quad
            \widehat{E}_{i}(t) = \sum_{j=1}^p \alpha_{i, j} \widehat{E}_i(t - j) + \alpha_{i, j}\,.\]
            \end{itemize}
        \item [(c)] Set $s = s + 1$. 
        \end{enumerate}
\end{enumerate}
}
\label{alg:extension-AR-p}
\end{algorithm}

The main difference between \texttt{AR2-p} and \alg\xspace for AR-1 process lies in how we maintain estimates for each arm. To account for the more complicated structure of an AR-$p$ process with trends, we need to maintain two estimates for each arm---an estimate for the expected reward $\widehat{r}_i(t)$, and an estimate for the error bound $\widehat{E}_i(t)$, defined as follows:
\[
\widehat{r}_{i}(t) = 
\alpha_{i,0} + \sum_{j=1}^p \alpha_{i, j} \widehat{r}_i(t - j)
\quad \text{and} \quad
\widehat{E}_{i}(t) = \sum_{j=1}^p \alpha_{i, j} \widehat{E}_i(t - j) + \alpha_{i, j}\,.
\]
Here, the estimate for error bound $\widehat{E}_i(t)$ essentially captures how much error is contained in our estimate for arm $i$ at round $t$, i.e., $\widehat{r}_i(t)$. Since we estimate $\widehat{r}_i(t)$ based on $p$ past estimates $\widehat{r}_i(t-1), \dots, \widehat{r}_i(t-p)$, the error associated with each of these terms, i.e., $\widehat{E}_i(t-1), \dots, \widehat{E}_i(t-p)$, would contribute the total error for arm $i$ at round $t$, i.e., $\widehat{E}_i(t)$. 

We would then use the error estimate $\widehat{E}_i(t)$ in determining our triggering condition. We trigger an arm $i \notin \mathcal{T} \cup \{\suparm(t)\}$ at round $t$ if 
$$
\supbelief(t) - \widehat{r}_{i}(t) \leq c \sigma_i \sqrt{\widehat{E}_i(t)}\,.
$$
The error bound estimates would also be useful when we determine which arm to pull out of all the triggering arms. To do that, we form the upper confidence bound for each arm 
$$
\texttt{UCB}_i(t) = \widehat{r}_i(t) + c\sigma_i\sqrt{\widehat{E}_i(t)}
$$
and pick the one with the highest upper confidence bound: $I_t = \argmax_{i \in \mathcal{T}} \texttt{UCB}_i(t) \,.$

\section{Description of Modified UCB Algorithm}
\label{apdx:mod-UCB}
In our numerical studies (\Cref{sec:numerics}), we have considered the modified UCB algorithm as one of the non-stationary benchmarks. We now describe this algorithm, which borrows idea from the UCB algorithm \citep{Auer2002} for stochastic MAB, and modifies the expression of the upper confidence bound based on the AR temporal structure. 

\vspace{5pt}
\begin{algorithm}[hbt]
\caption{A modified UCB algorithm for non-stationary AR bandits (mod-UCB)}
\footnotesize{
\textbf{Input:} AR Parameters $(\alpha_1, \dots, \alpha_k)$, stochastic rates of change $(\sigma_1, \dots, \sigma_k)$, parameter $\delta$. \\
\vspace{-10pt}
\begin{enumerate}[leftmargin=*]
  \item Initialization: pull each arm $i \in [k]$ at round $i$. Set estimates $\widehat{r}_{i}(k + 1) = \alpha^{k-i} \cdot \mathcal{B}(\alpha R_{i}(i))$ and $\tau_i = i$. 
  \item While $t \leq T$:
    \begin{enumerate}[leftmargin=5mm]
        \item [(a)] For each arm $i \in [k]$, compute its upper confidence bound (UCB):
        $$
        \texttt{UCB}_i(t) = \widehat{r}_i(t) + \sqrt{2 \log(2/\delta)} \cdot  \sigma_i\sqrt{{(\alpha_i^2-\alpha_i^{2(t-\curlyT_i(t))})}/{(1-\alpha_i^2)}}\,.
        $$
        \item [(b)] {Pull the arm with the highest UCB}:
        $
        I_t = \argmax_{i \in [k]} \texttt{UCB}_i(t).
        $
        Receive a reward $R_{I_t}(t)$, and set $\tau_{I_t} = t$.
        \item [(c)] {Maintain Estimates}: Set $\widehat{r}_{I_t}(t+1) = \mathcal{B}(\alpha R_{I_t}(t))$ and set $\widehat{r}_{i}(t+1) = \alpha \widehat{r}_{i}(t)$ for $i \neq I_t$.
    \end{enumerate}
\end{enumerate}
}
\label{alg:mod-UCB}
\end{algorithm}
\vspace{2mm}

The details of modified UCB is presented in \Cref{alg:mod-UCB}. 
Similar to the original UCB algorithm, we consider an upper confidence bound of the expected reward of each arm $i$ at round $t$, denoted as $\texttt{UCB}_i(t)$, which consists of two terms. The first term $\widehat{r}_{i}(t)$ is the estimated reward of arm $i$ based on past observations; the second term is the size of the one-sided confidence interval for the expected reward $r_i(t)$. The confidence interval is constructed based on Hoeffding's Inequality (see \Cref{lemma:hoeffding}), and is designed such that the true expected reward $r_i(t)$ will fall into the confidence interval with probability at least $1 - \delta$, where $\delta$ is an input parameter. To derive the expression of the upper confidence bound in the AR setting, consider a non-stationary MAB problem, in which the expected reward of arm $i$ follows an AR-1 process with parameter $\alpha_i \in (0, 1)$, stochastic rate of change $\sigma_i \in (0, 1)$, and truncating boundary $[-\fancyR, \fancyR]$. (See \Cref{sec:set-up} for details about the set-up.) The estimated reward of arm $i$ at round $t$, $\widehat{r}_{i}(t)$, can be maintained in the same way as done in \Cref{alg:main}. The modified UCB algorithm always selects the arm with the highest upper confidence bound. To construct the confidence bound, we first recall from \Cref{sec:proof-of-main-thm} that we define a new noise term $\tilde\epsilon_i(t)$ (see \eqref{eqn:epsilon_tilde}) that incorporates the influence of the truncating boundary, and the new noise term satisfies the recursive relationship: $r_i(t+1) = \alpha r_i(t) + \alpha\Tilde{\epsilon}_i(t)$. Suppose that at round $t$, we last observed the reward of arm $i$ at round $\tau_i(t)$. Using the recursive relationship, we have:
\begin{equation}
\label{eqn:mod-UCB-recursive}
\begin{aligned}[b]
r_i(t) & = \alpha_i r_i(t-1) + \alpha_i \Tilde{\epsilon}_i(t-1) 
= \alpha_i^{t-\tau_i(t)-1} r_i(\tau_i(t)+1) + \alpha_i^{t-\tau_i(t)-1} \Tilde{\epsilon}_i (\tau_i(t)+1) + \dots + \alpha_i\Tilde{\epsilon}_i(t-1) \\
& = \alpha_i^{t-\tau_i(t)-1} r_i(\tau_i(t)+1) + \sum_{t'=\tau_i(t)+1}^{t-1} \alpha^{t - t'}\Tilde{\epsilon}_i(t') 
= \widehat{r}_i(t) + \sum_{t'=\tau_i(t)+1}^{t-1} \alpha^{t - t'}\Tilde{\epsilon}_i(t') \,,
\end{aligned}
\end{equation}
where the last equality follows from the way we maintain estimates. By \Cref{lemma:relationship_btw_noises}, we have that for any $s > 0$,
\begin{equation}
\label{eqn:mod-UCB-bound-1}
\mathbb{P}\Big[\Big|\sum_{t'=\tau_i(t)+1}^{t-1} \alpha^{t - t'}\Tilde{\epsilon}_i(t')\Big| \leq s\Big] \geq \mathbb{P}\Big[\Big|\sum_{t'=\tau_i(t)+1}^{t-1} \alpha^{t - t'}\epsilon_i(t')\Big| \leq s\Big]\,,
\end{equation}
where $\epsilon_i(t') \sim N(0, \sigma_i)$ is the independent Gaussian noise applied to arm $i$ at round $t'$. By Hoeffding's Inequality (\Cref{lemma:hoeffding}), for fixed $\delta > 0$, we also have the following concentration result:
\begin{equation}
\label{eqn:mod-UCB-bound-2}
\mathbb{P}\Big[\Big|\sum_{t'=\tau_i(t)+1}^{t-1} \alpha^{t - t'}\epsilon_i(t')\Big| \leq \sqrt{2 \log(2/\delta)} \cdot \sigma_i \sqrt{{(\alpha_i^2-\alpha_i^{2(t-\curlyT_i(t))})}/{(1-\alpha_i^2)}} \Big] \geq 1 - \delta \,.
\end{equation}
Equations \eqref{eqn:mod-UCB-recursive}, \eqref{eqn:mod-UCB-bound-1} and \eqref{eqn:mod-UCB-bound-2} together establish a confidence bound centered around $\widehat{r}_{i}(t)$, such that the true expected reward $r_i(t)$ falls into the confidence bound with probability at least $1 - \delta$. In \Cref{alg:mod-UCB}, we thus define the upper confidence bound of arm $i$ at round $t$ as:
$$
\texttt{UCB}_i(t) = \widehat{r}_i(t) + \sqrt{2 \log(2/\delta)} \cdot \sigma_i\sqrt{{(\alpha_i^2-\alpha_i^{2(t-\curlyT_i(t))})}/{(1-\alpha_i^2)}}\,,
$$
and we have
$
\mathbb{P}[r_i(t) > \texttt{UCB}_i(t)] \leq 1 - \delta
$
when implementing the modified UCB algorithm for numerical experiments in \Cref{sec:numerics}. 

Finally, we remark that our algorithm \alg\xspace and the modified UCB algorithm share some similarities. Both algorithms incorporate our knowledge of the AR temporal structure into their design, which makes them more competitive against the rest of the benchmarks.
The triggering criteria of \alg\xspace essentially constructs a confidence interval around the estimated reward $\widehat{r}_{i}(t)$ of arm $i$ at round $t$, and triggers arm $i$ if the upper confidence bound of arm $i$ exceeds the estimated reward of the superior arm. On the other hand, \alg\xspace also differs from the modified UCB in two main aspects.
(i) By alternating between exploration and exploitation rounds, \alg\xspace avoids potential over-exploration. This is especially common when $\alpha_i$'s are close to one, and any arm that has not been explored for a few rounds would have a very high confidence bound. In that case, modified UCB is likely to shift in between arms very often, which leads to too little exploitation. 
(ii) As discussed in \Cref{sec:algorithm}, unlike the modified UCB algorithm, \alg\xspace also deals with the tradeoff of ``remembering'' and ``forgetting'' via the restarting mechanism, which is especially important in a fast changing environment such as the AR setting. 

\section{Steady State Distribution of Rewards}
\label{sec:stationary_dist}
In this section, we characterize the steady state distribution of the expected reward $r_{i}(t)$. Lemma \ref{lemma:steady-state} shows that the steady state distribution depends on two probability masses at the boundaries and a probability density function between the boundaries.
\begin{lemma}
[Probability density of steady state distribution] \label{lemma:steady-state}
Consider the steady state distribution of $r_i(t)$, which follows the AR process with truncating boundaries defined in \Cref{sec:set-up}. Suppose that at the steady state distribution, we have two probability masses at the boundaries, i.e., $\mathbb{P}[r_i(t) = \fancyR] = \mathbb{P}[r_i(t) = -\fancyR] = p$, where $p \in (0, 1/2)$ depends on $\alpha$ and $\sigma$. Then, the steady state distribution between the boundaries can be characterized by the following probability density function (PDF), denoted by 
$f_p: (-\fancyR,\fancyR) \rightarrow  \mathbbm{R}^+$.
\begin{equation}
\label{eqn:pdf}
f_p(x) = (1-2p)C(\alpha, \sigma) \exp\big(\frac{(\alpha-1)x^2}{\alpha^2\sigma^2}\big) \quad 
x \in (-\fancyR,\fancyR), 
\end{equation}
where $C(\alpha, \sigma) = \frac{\sqrt{1-\alpha}}{{\sqrt{\pi} \alpha \sigma \operatorname{erf}\Big(\frac{\fancyR \sqrt{1-\alpha}}{\alpha \sigma}\Big)}}$ is a normalization factor and $\operatorname{erf}(z) =\frac{2}{\sqrt{\pi}} \int_{0}^{z} e^{-s^{2}} ds$ is the error function. 
\end{lemma}

Given \Cref{lemma:steady-state}, we are now ready to define the steady state distribution of $r_i(t)$:
\begin{definition}[Steady state distribution of rewards]
\label{def:stat-dist}
Let $r$ be a random variable with range $[-\fancyR, \fancyR]$ whose probability distribution characterizes the steady state distribution of $r_i(t)$. It has probability masses on the boundaries, i.e., $\mathbb{P}[r = \fancyR] = \mathbb{P}[r = -\fancyR] = p\,,$ and probability density function $f_p$ on $(-\fancyR, \fancyR)$ as defined in \eqref{eqn:pdf}, that satisfies the following conditions:
\setlist{nolistsep}
\begin{enumerate}
    \item $\mathbb{P}\left[\alpha(r + \epsilon) \geq \fancyR \right] = \mathbb{P}\left[\alpha(r + \epsilon) \leq -\fancyR \right] = p$.
    \item $\mathbb{P}\left[\alpha(r + \epsilon) < y \right] = p + \int_{-\fancyR}^y f_p(x) dx \quad $ for all $y \in (-\fancyR, \fancyR)$.
\end{enumerate}
\end{definition}
Note that the probability masses on the boundaries result from the truncating boundary condition, and by symmetry, the masses on upper and lower boundaries are the same. If one adopts a reflecting boundary condition, we would have $p = 0$; if one adopts an absorbing boundary condition, we would have $p = 1/2$ (see \cite{pavliotis_2016} for definitions of the aforementioned boundary conditions).

We remark that for any fixed $\sigma \in (0,1)$, when $\alpha \rightarrow 1$, the probability density function $f_p(x)$ in the steady state distribution of $r_i(t)$ can be well approximated by a constant function. In \Cref{fig:steady-state-dist}, we plot the PDF of the steady state distributions for $\alpha \in \{0.3, 0.6, 0.9\}$ (with $\sigma = 0.8$ and $\fancyR = 1$), and numerically compute the value of $p$ (see legends in \Cref{fig:steady-state-dist})\footnote{For numerically simulating steady state distributions of SDEs, see \cite{Kloeden1992}.}.
It can be seen that for larger $\alpha$, the distribution within the boundaries becomes almost ``uniform''. In the other extreme case, when $\alpha \rightarrow 0$, it can be seen both from \Cref{fig:steady-state-dist} and from the definition of $f_p(x)$ in Lemma \ref{lemma:steady-state} that the distribution can be approximated by a normal distribution. 

\vspace{1mm}
\begin{figure}[ht]
\centering
\includegraphics[width=0.5\textwidth]{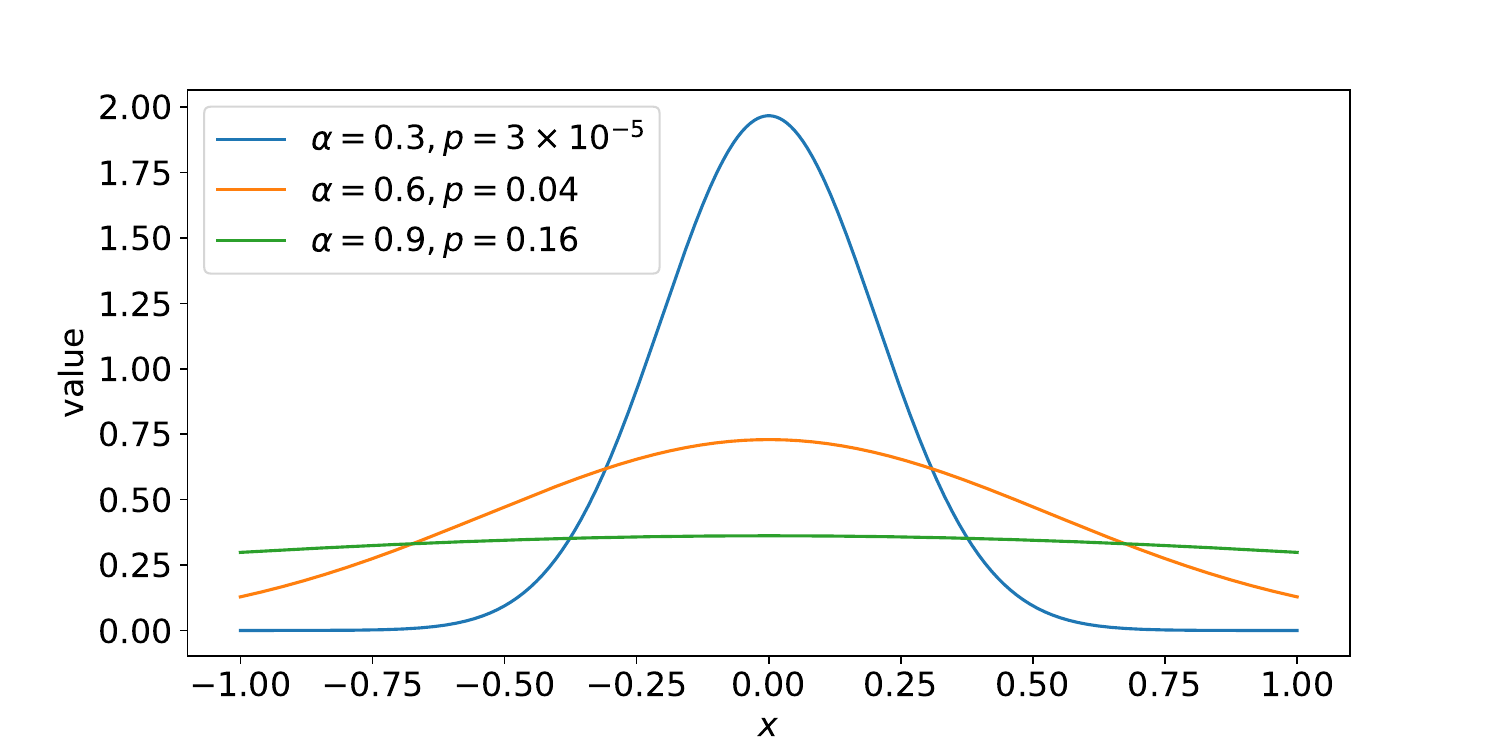}
\caption[PDF]{PDF of the steady state distribution of AR process with truncating boundaries for $\alpha \in \{0.3, 0.6, 0.9\}, \sigma = 0.8, \fancyR =1$. The probability masses $p$ associated with the steady state distribution are included in the legends.}
\label{fig:steady-state-dist}
\end{figure}  

\paragraph{Proof of \Cref{lemma:steady-state}:}
We first note that the continuous-time analogue of an AR-1 process is a Ornstein-Uhlenbeck (OU) process \citep{Gard1988}, defined as the following stochastic differential equation (SDE):
\begin{equation}
\label{eqn:OU-process}
d r_{t}=\kappa\left(\theta- r_{t}\right) d t+\zeta d W_{t},
\end{equation}
where $\kappa, \theta, \zeta$ are constant parameters\footnote{We will see that when $r_t$ follows an AR-1 process with AR parameter $\alpha$ and stochastic rate of change $\sigma$, these constants take the following values: $\kappa = 1-\alpha, \theta = 0, \eta = \alpha\sigma$.} and $W_t$ denotes the Wiener process\footnote{A Wiener process $W_t$ is a continuous-time stochastic process that satisfies the following properties: (i) $W_0$ = 0; (ii) $W_t$ is continuous in $t$; (iii) $\{W_t\}_{t \geq 0}$ has stationary, independent increments; (iv) $W_{t+s}-W_s \sim N(0, t)$. (See \cite{Durrett2019} for more details.)}. 
If we perform Euler-Maryuama discretization (see \cite{Kloeden1992}) on the above OU process at $\Delta t, 2\Delta t, \dots$, we get
\begin{equation}
\label{eqn:OU-discretization}
r_{t+1}=\kappa \theta \Delta t-(\kappa \Delta t-1) r_{t}+ \zeta \epsilon'_{t} \sqrt{\Delta t} ,
\end{equation}
where $\epsilon'_t$ is a standard normal noise. 
Now, recall from \Cref{sec:set-up} that the evolution of expected rewards evolve as an independent AR-1 process:
\begin{equation}
\label{eqn:mu_t_relation}
r_i(t+1) = \alpha R_i(t) = \alpha \left(r_i(t) + \epsilon_i(t)\right),
\end{equation}
where $\epsilon_i(t) \sim N(0, \sigma)$. Let $r_t \triangleq r_i(t)$. By coefficients in \eqref{eqn:OU-discretization} and \eqref{eqn:mu_t_relation}, the two equations are equivalent if we let $\kappa = 1-\alpha, \theta = 0, \zeta = \alpha\sigma, \Delta t = 1$. 
By plugging the values of coefficients into \eqref{eqn:OU-process}, the continuous-time analogue of \eqref{eqn:mu_t_relation} can be described by the following SDE:
$$
d r_t = (\alpha - 1) r_t d t + \alpha\sigma dW_t.
$$
Let $f_p(x)$ be the PDF for the steady state distribution of $r_i(t)$ on $(\fancyR, \fancyR)$, given that the probability mass on either side of the boundary is $p$. 
We can now solve for $f_p(x)$ by solving the following ordinary differential equation (see \cite{gardiner_2009, pavliotis_2016} for a discussion of forward and backward equations for SDEs):
\begin{equation}
\label{eqn:ode-stationary}
(1-\alpha)(f_p(x) + x f_p'(x)) + \frac{1}{2} \alpha^2\sigma^2 f_p''(x) = 0\,,
\end{equation}
which gives
$$
f_p(x) = \Tilde{C}(\alpha, \sigma) \exp(\frac{(\alpha-1)x^2}{\alpha^2\sigma^2}),
$$
for $\Tilde{C}$ that depends on $\alpha, \sigma$. Since $f_p(x)$ is a probability density, we can solve for $\Tilde{C}(\alpha, \sigma)$ by requiring
$
\int_{-\fancyR}^{\fancyR} f_p(x) \,dx = 1 - 2p\,,
$
given that the total probability within the boundaries is $1-2p$.
This then gives 
$$
\Tilde{C}(\alpha, \sigma) = (1-2p){\sqrt{1-\alpha}}/ ({\sqrt{\pi} \alpha \sigma \operatorname{erf}\Big(\frac{\fancyR \sqrt{1-\alpha}}{\alpha \sigma}\Big)}) = (1-2p)C(\alpha,\sigma)\,,
$$
where $C(\alpha,\sigma)$ is defined in the statement of \Cref{lemma:steady-state}.
$\blacksquare$

\section{Additional Illustrations}
\label{sec:additional-illustrations}

\subsection{Illustration of $g(k, \alpha, \sigma)$ in Regret Lower Bound}
\label{sec:illustration-lower-bound}

Recall that in \Cref{sec:regret-lower-bound}, we characterize our regret lower bound to be $\Omega(g(k,\alpha,\sigma)\alpha\sigma)$, where $g(k,\alpha,\sigma)$ is the probability that two best arms are within $\alpha\sigma$ distance of each other at any given round under the steady state distribution, with its closed form presented in Equation \eqref{eqn:def-g-k-alpha-sigma} in \Cref{apdx:lower-bound}. 

To better understand how $g(k,\alpha,\sigma)$ evolves in terms of our parameters, in \Cref{fig:evolution_g}
we plot the evolution of $g(k, \alpha, \sigma)$ in terms of $\sigma$ in two different regimes: (i) a small-$\alpha$ regime when $\alpha = 0.4$, and (ii) a large-$\alpha$ regime when $\alpha = 0.9$. Note that here, we consider the two regimes separately because in \Cref{sec:regret}, we will show that the large-$\alpha$ regime is inherently much more challenging than the small-$\alpha$ regime.

In \Cref{fig:evolution_g}, we see that in the small-$\alpha$ regime ($\alpha = 0.4$), $g(k, \alpha, \sigma)$ is roughly constant as $\sigma$ increases, while in the large-$\alpha$ regime ($\alpha = 0.9$), $g(k, \alpha, \sigma)$ is roughly linear in terms of $\sigma$. To understand why, recall from our discussion in \Cref{sec:stationary_dist} that when $\alpha \rightarrow 0$, the steady state distribution approaches a normal distribution with standard deviation proportional to $\alpha\sigma$, while when $\alpha \rightarrow 1$, the steady state distribution resembles a uniform distribution. Hence, when $\alpha$ is small, we expect $g(k, \alpha, \sigma)$ to be roughly constant in terms of $\sigma$, while when $\alpha$ is close to one, we expect $g(k, \alpha, \sigma)$ to be roughly linearly in terms of $\sigma$. Here, our observations in \Cref{fig:evolution_g} concur with our discussion in \Cref{sec:stationary_dist}.

\vspace{2mm}
\begin{figure}[htb]
\centering
\includegraphics[width=0.4\textwidth]{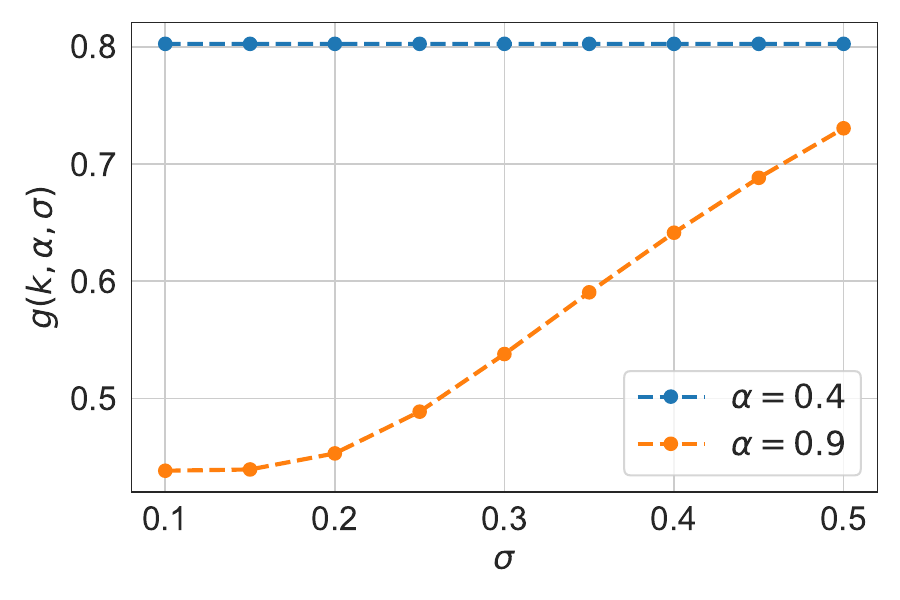}
\caption[PDF]{The evolution of function $g(k, \alpha, \sigma)$, as defined in Equation \eqref{eqn:def-g-k-alpha-sigma} in Appendix \ref{apdx:lower-bound}, with respect to $\sigma$. Here, we set $\alpha \in \{0.4, 0.9\}$, $k = 5$ and $\fancyR = 1$.}
\label{fig:evolution_g}
\end{figure}

\subsection{Illustration of Upper/Lower Bounds in the Small-$\alpha$ Regime}
\label{sec:illustration-small-alpha}

In Figure \ref{fig:UB-LB-comparison-sigma-small-alpha}, we illustrate the evolution of our upper and lower bounds in the small-$\alpha$ regime (i.e., $\alpha \in (0, \Bar{\alpha})$). Here, we take $\alpha = 0.4$, and similar plots can be obtained for different values of $\alpha < \Bar{\alpha}$. Recall from \Cref{sec:regret} that this regime is considered less challenging due to our result in \Cref{thm:upper-bound-small-alpha}, which states that any algorithm can attain near-optimal performance when $\alpha \in (0, \Bar{\alpha})$. Here, Figure \ref{fig:UB-LB-comparison-sigma-small-alpha} reveals that our upper/lower bounds indeed have the same trend of increase in terms of 
$\sigma$ in the small-$\alpha$ regime, which complements our result in \Cref{fig:comparison-lower-upper} (that shows our upper/lower bounds match in terms of $\alpha$ in the small-$\alpha$ regime).

\vspace{2mm}
\begin{figure}[htb]
\centering
\includegraphics[width=0.4\textwidth]{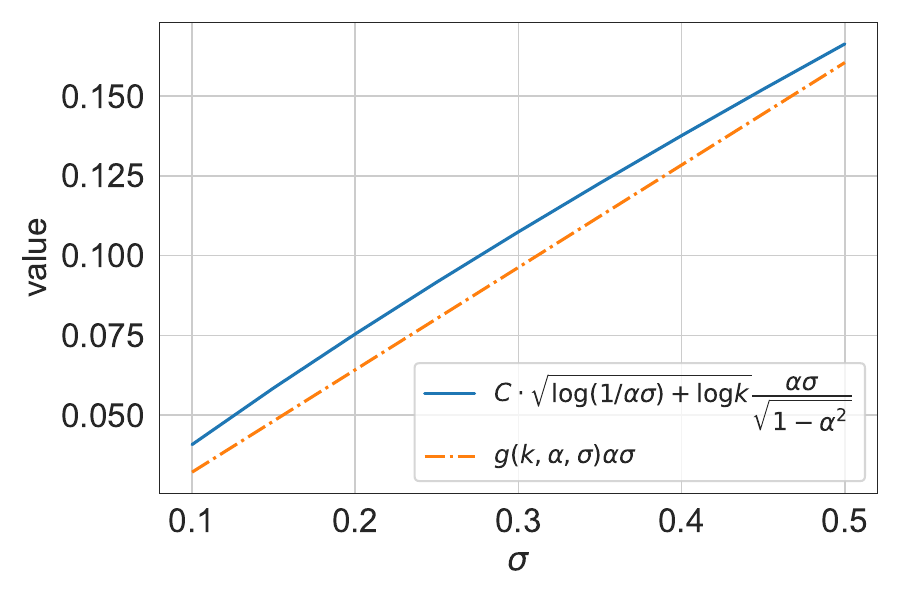}
\caption[PDF]{Comparison of the order of the regret upper bound in Theorem \ref{thm:upper-bound-small-alpha} and the order of the lower bound in Theorem \ref{thm:lower-bound}, with respect to $\sigma$, obtained with $k = 5, \alpha = 0.4, \fancyR = 1, C = 0.4$. This plot complements Figure \ref{fig:comparison-lower-upper} that shows dependency on $\alpha$.}
\label{fig:UB-LB-comparison-sigma-small-alpha}
\end{figure}  

\subsection{Illustrations of Upper/Lower Bounds and Per-Round Regrets of \texttt{AR2}}
\label{sec:illustration}

To further highlight the significance of Theorem \ref{thm:regret_upper_bound} and how well it characterizes the performance of \alg\xspace, in Figure \ref{fig:upper_lower_bound}, we plot the evolution of the upper bound in Theorem \ref{thm:regret_upper_bound}, the lower bound $O(k\alpha^2\sigma^2)$ for $\alpha$ close to one, and the per-round regret attained by \alg\xspace, at different values of parameters $\alpha$ and $\sigma$ respectively.
In Figure \ref{subfig:upper_lower_bound_varying_alpha}, the per-round regret of \alg\xspace increases at roughly the same rate as the upper and lower bounds, and stays close to the lower bound, as $\alpha$ increases. In Figure \ref{subfig:upper_lower_bound_varying_sigma}, the upper and lower bounds follow the same trend of increase with respect to $\sigma$. We note that as $\sigma$ increases, the upper bound loosens up as the per-round regret of \alg\xspace experiences smaller growth and approaches the worst-case lower bound. This especially highlights the strength of \texttt{AR2} in adapting to a rapidly changing environment.

\begin{figure}[htb]
\centering
\subfloat[Evolution of the upper/lower bounds and per-round regret of \alg\xspace with varying $\alpha$. We take $\alpha \in {[0.85, 0.95]}$ and $\sigma = 0.4$.]{%
\includegraphics[width=0.4\textwidth]{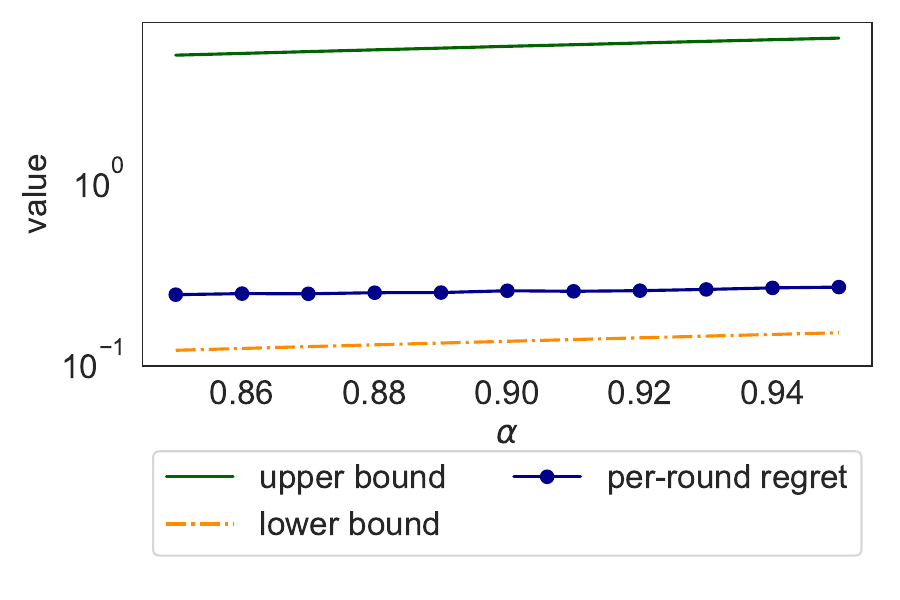}%
\label{subfig:upper_lower_bound_varying_alpha}
}\qquad
\subfloat[Evolution of the upper/lower bounds and per-round regret of \alg\xspace with varying $\sigma$. We take $\sigma \in {[0.1, 0.5]}$ and $\alpha = 0.9$.]{%
\includegraphics[width=0.4\textwidth]{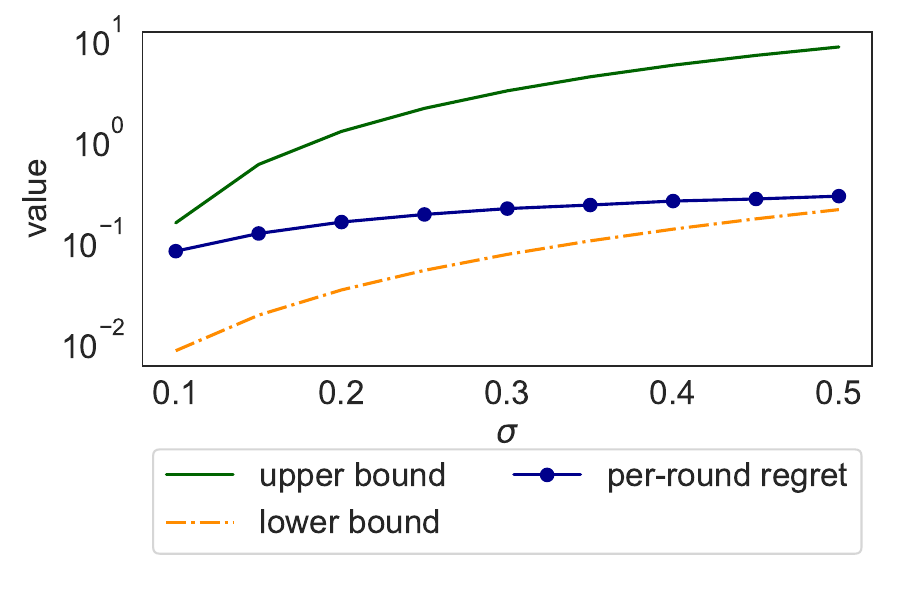}%
\label{subfig:upper_lower_bound_varying_sigma}
}
\caption[PDF]{Comparison of the order of regret upper bound of \alg\xspace (Theorem \ref{thm:regret_upper_bound}), the order of the lower bound, and per-round regret attained by \alg\xspace (averaged across 100 experiments). The constants of upper/lower bounds are tuned for illustration purposes. Here, $k = 5$ and $\fancyR = 1$.
}
\label{fig:upper_lower_bound}
\end{figure}

\section{Missing Proof of Section~\ref{sec:regret-lower-bound}}
\label{apdx:lower-bound}

Before we show the proof for Theorem~\ref{thm:lower-bound}, we first present \Cref{lemma:order-stats} \citep{David2003} which characterizes the joint distribution for order statistics.

\begin{lemma}[The joint distribution of order statistics of a continuous distribution]
\label{lemma:order-stats}
Let $X^{(1)} \leq X^{(2)} \leq \dots \leq X^{(k)}$ be the order statistics of $k$ i.i.d. samples drawn from some continuous distribution with probability density function (PDF) $f(x)$ and cumulative density function (CDF) $F(x)$. The joint PDF of $X^{(j)}, X^{(j)}$ for any $1 \leq i < j \leq k$ is
\[
f_{X^{(i)}, X^{(j)}}\left(u, v\right)
=\frac{k!F\left(u\right)^{i-1}\left(F\left(v\right)-F\left(u\right)\right)^{j-i-1}\left(1-F\left(v\right)\right)^{k-j} f\left(u\right) f\left(v\right)}{(i-1)!(j-i-1)!(k-j)!} 
\]
\end{lemma}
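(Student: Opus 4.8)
The plan is to derive the joint density by marginalizing the joint density of the full vector of order statistics. Recall the elementary fact that if $X_1,\dots,X_k$ are i.i.d.\ with PDF $f$, then the vector $(X^{(1)},\dots,X^{(k)})$ has joint density $k!\,\prod_{\ell=1}^{k} f(x_\ell)$ on the open simplex $\{x_1 < x_2 < \dots < x_k\}$ and $0$ elsewhere; this holds because the unordered sample has density $\prod_{\ell} f(x_\ell)$ and each strictly ordered configuration arises from exactly $k!$ equally likely permutations (ties have probability zero since $f$ is continuous). To obtain the joint density of $(X^{(i)},X^{(j)})$ at a point $(u,v)$ with $u < v$, I would integrate this density over the remaining $k-2$ coordinates, which split into three independent ``ordered blocks'': $x_1 < \dots < x_{i-1} < u$, then $u < x_{i+1} < \dots < x_{j-1} < v$, then $v < x_{j+1} < \dots < x_k$.

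The single computation that makes everything work is the ordered-simplex integral
\[
\int_{a < t_1 < \dots < t_m < b} \prod_{\ell=1}^{m} f(t_\ell)\, dt_1 \cdots dt_m = \frac{\big(F(b) - F(a)\big)^m}{m!},
\]
which one proves by the substitution $s_\ell = F(t_\ell)$ (on the support where $F$ is strictly increasing; the null set where $F$ is flat contributes nothing), reducing the left side to $(F(b)-F(a))^m$ times the Lebesgue volume $1/m!$ of the standard ordered simplex, or alternatively by a one-line induction on $m$. Applying this identity to the three blocks with $(a,b,m) = (-\infty,u,i-1)$, $(u,v,j-i-1)$, and $(v,+\infty,k-j)$ produces the factors $F(u)^{i-1}/(i-1)!$, $(F(v)-F(u))^{j-i-1}/(j-i-1)!$, and $(1-F(v))^{k-j}/(k-j)!$; multiplying by the leftover $k!\,f(u)f(v)$ gives exactly the claimed expression.

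As a sanity check and alternative derivation worth recording, I would also note the infinitesimal multinomial heuristic: the event $\{X^{(i)} \in [u,u+du],\, X^{(j)} \in [v,v+dv]\}$ requires the $k$ samples to partition into groups of sizes $i-1,\,1,\,j-i-1,\,1,\,k-j$ landing in $(-\infty,u]$, $[u,u+du]$, $[u,v]$, $[v,v+dv]$, $[v,\infty)$ with respective probabilities $F(u)$, $f(u)\,du$, $F(v)-F(u)$, $f(v)\,dv$, $1-F(v)$; the multinomial coefficient $k!/\big((i-1)!\,(j-i-1)!\,(k-j)!\big)$ then reproduces the formula, and this can be made rigorous by computing the CDF $\mathbb{P}(X^{(i)} \le u,\, X^{(j)} \le v)$ as a double sum over the number of samples below $u$ and below $v$ and then differentiating in $u$ and $v$.

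There is no serious obstacle here --- this is a classical identity (hence the citation to \citet{David2003}); the only points demanding care are the bookkeeping of which of the $k-2$ integrated coordinates lie in which interval and the verification that the product of the three block integrals carries precisely the factorials appearing in the denominator, so that the remaining constant matches the $k!$ in the numerator.
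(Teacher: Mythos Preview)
The paper does not actually prove this lemma; it simply states the result and cites \citet{David2003}. Your derivation is correct and is in fact the standard textbook argument (marginalizing the full order-statistic density $k!\prod_\ell f(x_\ell)$ over the three ordered blocks via the simplex-integral identity, with the multinomial heuristic as an alternative), so it matches what one would find in the cited reference.
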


In particular, if we take $i = k-1$ and $j = k$, \Cref{lemma:order-stats} suggests that 
\[
f_{X^{(k-1)}, X^{(k)}}\left(u, v\right)
= k(k-1)F\left(u\right)^{k-2} f(u)f(v),
\]
which will be used in the following proof of Theorem~\ref{thm:lower-bound}.

\paragraph{Proof of Theorem~\ref{thm:lower-bound}:}
The proof proceeds in two steps. First, we compute the probability that the expected rewards of two best arms are close to each other at any given round $t$. Then, conditioning on this event, we show that at the next round $t+1$, any algorithm incurs $\Omega(\alpha\sigma)$ regret with constant probability. 

\textbf{Step 1.} We first lower bound the probability that the expected rewards of the two best arms are within $\alpha\sigma$ of each other at round $t$. Recall from \Cref{sec:stationary_dist} that $r_i(t)$ follows the steady state distribution with probability mass $p$ on the boundary and PDF $f_p(x)$ defined in \eqref{eqn:pdf} within the boundaries. Let $F(x)$ be the CDF of the the steady state distribution of $r_i(t)$. We can compute the probability that the expected rewards of the two best arms are within $\alpha\sigma$ by breaking the event into the following cases:
\begin{enumerate}
    \item \textit{Both arms are at the upper boundary.} This will happen with probability
    $
    \sum_{j=2}^k {k \choose j} p^j(1-p)^{k-j}\,,
    $
    where $j$ denote the number of arms are at the upper boundary, i.e., have expected reward $\fancyR$.
    \item 
    \textit{The best arm is at the upper boundary, and the second best arm is within the boundaries.} 
    This will happen with the following probability:
    $
    k(k-1) \cdot p \cdot \int_{\fancyR-\alpha\sigma}^{\fancyR} F(x)^{k-2} f_p(x) dx 
    $
    \item 
    \textit{Both the best arm and the second best arm are between the boundaries.} This will happen with the following probability:
    $
    k(k-1) \int_{0}^{\alpha\sigma} \int_{-\fancyR}^{\fancyR} F(x)^{k-2}f_p(x)f_p(x+z) dx dz\,.
    $
    To obtain this probability, we used the joint PDF stated in \Cref{lemma:order-stats}. 
    \item 
    \textit{The best arm is within the boundaries, and the second best arm is at the lower boundary.} This will happen with the following probability
    $
    k \cdot \int_{-\fancyR}^{\alpha\sigma-\fancyR} f_p(x) dx \cdot p^{k-1}\,.
    $
    \item 
    \textit{Both arms are at the lower boundary.}
    This will happen with probability $p^k$.
\end{enumerate}

We define $g(k, \alpha, \sigma)$ as the probability that the expected rewards of the two best arms are within $\alpha\sigma$ of each other at round $t$, which is the sum of the terms above:
\begin{equation}
\label{eqn:def-g-k-alpha-sigma}
\begin{aligned}
g(k, \alpha, \sigma) & = 
\sum_{j=2}^k {k \choose j} p^j(1-p)^{k-j} 
+ k(k-1) \cdot p \cdot \int_{\fancyR-\alpha\sigma}^{\fancyR} F(x)^{k-2} f_p(x) dx \\
& + k(k-1) \int_{0}^{\alpha\sigma} \int_{-\fancyR}^{\fancyR} F(x)^{k-2}f_p(x)f_p(x+z) dx dz 
+ k \cdot \int_{-\fancyR}^{\alpha\sigma-\fancyR} f_p(x) dx \cdot p^{k-1} + p^k
\end{aligned}
\end{equation}

We now make a brief remark about the order of $g(k, \alpha, \sigma)$ as $\alpha$ approaches one. First note that among the aforementioned cases, the first three cases would happen with probability at least $1/2$ given that the steady state distribution of expected reward is symmetric and the expected rewards of the two best arms are skewed towards the upper boundary. Conditioning on the third case, i.e., both the best and the second best arms are within the boundaries, as $\alpha$ approaches one, the conditional PDF can be approximated as a uniform function (see \Cref{remark:uniform_approx} in \Cref{apdx:proof-sec-regret} for a related discussion), and hence the conditional probability that two best arms are within $\alpha\sigma$ of each other are of order $\Omega(k\alpha\sigma)$. Further, conditioning on the first and the second cases, i.e., the best arm is at the upper boundary, the conditional probability that two best arms are within $\alpha\sigma$ of each other are lower bounded by $\Omega(k\alpha\sigma)$. We thus have $g(k, \alpha, \sigma) \geq \Omega(k\alpha\sigma)$ as $\alpha$ approaches one.

\textbf{Step 2.}
Let $\mathcal{A}$ be any algorithm that selects arm $\mathcal{A}(t)$ at round $t$. In this step, we show that conditioning on the event that the two best arms are within $\alpha\sigma$ of each other at round $t$, with constant probability, $\mathcal{A}$ will make a mistake at round $t+1$ and incur regret of order $O(\alpha\sigma)$.

Without loss of generality, let us suppose $r_1(t) \geq \dots \geq r_k(t)$, and that algorithm $\mathcal{A}(t)$ selects arm $1$ at round $t$ (if $\mathcal{A}(t)$ selects a different arm at round $t$, the following proof works similarly). We first assume that $r_1(t) \in [-\fancyR + 3\sigma, \fancyR - 3\sigma]$. 
Let $\Phi(x)$ be the CDF of standard normal distribution.
Since $\epsilon_i(t) \overset{i.i.d.}\sim N(0, \sigma)$, when $\mathcal{A}$ pulls arm $1$ at round $t$, with constant probability $2\Phi(\frac{1}{2}) - 1$, we have $|\epsilon_1(t)| \leq \frac{\sigma}{2}$. Let $\mathcal{A}(t+1)$ be the arm that $\mathcal{A}$ chooses to pull at round $t+1$. At round $t+1$, one of the following scenarios will take place:
\begin{enumerate}
    \item[(i)] If $\mathcal{A}(t+1) = 1$, then if $\epsilon_2(t) \in [2\sigma, 3\sigma]$, it will incur regret 
    \begin{equation*}
    \begin{aligned}
    r_2(t+1) - r_1(t+1) 
    & = \alpha\left(r_2(t) + \epsilon_2(t) - r_1(t) - \epsilon_1(t)\right) \\
    & \geq \alpha(\epsilon_2(t) - |r_2(t) - r_1(t)| - |\epsilon_1(t)|) \\
    & \geq \alpha(2\sigma - \alpha \sigma - \frac{\sigma}{2}) \\
    & \geq \frac{\alpha\sigma}{2}.
    \end{aligned}
    \end{equation*}
    The event $\epsilon_2(t) \in [2\sigma, 3\sigma]$ happens with constant probability $\Phi(3) - \Phi(2)$.
    \item[(ii)] If $\mathcal{A}(t+1) = j$ for any $j \neq 1$ at $t+1$, then if $\epsilon_j(t) \in [-2\sigma, -\sigma]$, it will incur regret 
    \begin{equation*}
    \begin{aligned}
    r_1(t+1) - r_j(t+1) 
    & = \alpha\left(r_1(t) + \epsilon_1(t) - r_j(t) - \epsilon_j(t)\right) \\
    & \geq \alpha(\epsilon_1(t) - \epsilon_j(t)) \\
    & \geq \alpha(- \frac{\sigma}{2} + \sigma) \\
    & = \frac{\alpha\sigma}{2}.
     \end{aligned}
    \end{equation*}
    The event $\epsilon_j(t) \in [-2\sigma, -\sigma]$ happens with constant probability $\Phi(2) - \Phi(1)$.
\end{enumerate}

In all of the above cases, we show that $\mathcal{A}$ will incur regret at least $\frac{\alpha\sigma}{2}$ with constant probability, conditioning on the event that $r^{(1)} - r^{(2)} \leq \alpha\sigma$. 
If $r_1(t)$ is close to either side of the boundary, a similar analysis can establish the same result. To see that, if $r_1 \in [\fancyR - 3\sigma, \fancyR]$ and $\mathcal{A}$ pulls arm 1 at round $t$, with constant probability we have $\epsilon_1(t) \in [-2\sigma, -3\sigma]$. Then, we have one of the following: (i) $\mathcal{A}(t+1) = 1$. If $|\epsilon_2(t)| \leq \frac{\sigma}{2}$, arm $1$ is sub-optimal and incurs regret by at least $\frac{\alpha\sigma}{2}$; (ii) $\mathcal{A}(t+1) = j$ for some $j \neq 1$. If $\epsilon_j(t) \in [-3\sigma, -4\sigma]$, arm $j$ is sub-optimal and incurs regret by at least $\alpha\sigma$. This, again, shows that $\mathcal{A}$ incurs regret at least $\Omega(\alpha\sigma)$ at round $t+1$ with constant probability. The proof follows similarly when $r_1 \in [-\fancyR, -\fancyR + 3\sigma]$.

Combining step 1 and 2, we conclude that the per-round regret of any algorithm $\mathcal{A}$ is at least $\Omega(g(k, \alpha, \sigma )\alpha\sigma)$, which concludes the proof. $\blacksquare$

\section{Missing Proof of Section~\ref{sec:regret}}
\label{apdx:proof-sec-regret}

\paragraph{Proof of Theorem~\ref{thm:upper-bound-small-alpha}:}
At any given round $t$, recall from \Cref{sec:stationary_dist} that $r_i(t)$ follows a steady state distribution as defined in \Cref{def:stat-dist}. Let us define $r'_i(t) \overset{i.i.d.}\sim N(0, \frac{\alpha\sigma}{\sqrt{1-\alpha^2}})$. The variable $r'_i(t)$ has the following PDF:
$$
f'(x) = \frac{\sqrt{1-\alpha^2}}{\sqrt{2\pi}\alpha\sigma } \exp\left(\frac{(\alpha^2-1)x^2}{2\alpha^2\sigma^2}\right)
$$
for all $x \in \mathbb{R}$. Note that for all $0 < s \leq \fancyR$, we have that 
$
\mathbb{P}\left[|r_i(t)| > s\right] \leq \mathbb{P}\left[|r'_i(t)| > s\right]\,,
$
which can be obtained by directly computing both probabilities. Intuitively, one can think of the normal distribution above as the steady state distribution of $r_i(t)$ that follows the same AR-1 process $r_i(t+1) = \alpha(r_i(t) + \epsilon_i(t))$, without the truncating boundaries at $[-\fancyR, \fancyR]$. As a result of the truncating boundaries,
the steady state distribution $\mathcal{D}$ (characterized in \Cref{sec:stationary_dist}) is more ``concentrated'' than the normal distribution $N(0, \frac{\alpha\sigma}{\sqrt{1-\alpha^2}})$.
Now, using the concentration bound for normal random variable (see \Cref{thm:subgaussian_concentration}), we have
$$
\mathbb{P}\left[|r_i(t)| > c\frac{\alpha\sigma}{\sqrt{1-\alpha^2}}\right] 
\leq 
\mathbb{P}\left[|r'_i(t)| > c\frac{\alpha\sigma}{\sqrt{1-\alpha^2}}\right] 
\leq 
2\exp(-\frac{c^2}{2})
$$
for every arm $i \in [k]$. Now let $\delta = 2\exp(-\frac{c^2}{2})$. By the union bound, with probability at least $1 - k\delta$, we would have
$
|r_i(t)| \leq c\frac{\alpha\sigma}{\sqrt{1-\alpha^2}}
$
for all $i \in [k]$. And hence
$
\Delta r_i(t) \leq 2c\frac{\alpha\sigma}{\sqrt{1-\alpha^2}}
$
for all $i \in [k]$. Let us denote the above event as $\mathcal{E}$. 
If we set $c =  \sqrt{2\log(1/\alpha\sigma) + 2\log(2k)}$, we would have $\delta = \frac{\alpha\sigma}{k}$, and hence $\mathbb{P}[\mathcal{E}] \geq 1-k\delta = 1-\alpha\sigma$.

Let $\mathcal{A}$ be any non-anticipating algorithm that selects arm $\mathcal{A}(t)$ at round $t$. The expected regret $\mathcal{A}$ incurs at round $t$ is 
$$
\begin{aligned}
\mathbb{E}[\Delta r_{\mathcal{A}(t)}(t)] & = \mathbb{E}[\Delta r_{\mathcal{A}(t)}(t) | \mathcal{E}] \mathbb{P}[\mathcal{E}] + \mathbb{E}[\Delta r_{\mathcal{A}(t)}(t) | \mathcal{E}^c] \mathbb{P}[\mathcal{E}^c] 
\leq 2c\frac{\alpha\sigma}{\sqrt{1-\alpha^2}} + 2\fancyR \cdot \alpha\sigma \\
& = O\left(c\frac{\alpha\sigma}{\sqrt{1-\alpha^2}}\right) 
= O\left(\sqrt{\log(1/\alpha\sigma) + \log k)}\frac{\alpha\sigma}{\sqrt{1-\alpha^2}}\right)
\end{aligned}
$$
Given that the per-round regret is also bounded by $2\fancyR$, this naturally gives
\[
\frac{1}{T}\mathbb{E}\left[\sum_{t=1}^T \Delta r_{\mathcal{A}(t)}(t)\right] = O\left(\min(\sqrt{\log(1/\alpha\sigma) + \log k} \frac{\alpha\sigma}{\sqrt{1-\alpha^2}}, 2\fancyR)\right). \quad \blacksquare
\]

In \Cref{sec:regret}, we choose $\threshold \triangleq \thresholdvalue$ as the threshold value such that when $\alpha \in [\threshold, 1)$, the problem is more challenging and thereby of our interest. 
Given the threshold $\threshold$, 
we further expand on the remark made in \Cref{sec:stationary_dist}, which will be useful in the proof of \Cref{lemma:deterministic} in \Cref{apdx:missing_proof_main_thm}.
\begin{remark}[Uniform Approximation]
\label{remark:uniform_approx}
\normalfont
For any fixed $\sigma \in (0, 1)$, when $\alpha \rightarrow 1$, the probability density function that characterizes steady state distribution of $r_i(t)$ within the boundaries can be well approximated by a constant function. This can be seen from 
$$
\max_{x \in (-\fancyR, \fancyR)} f_p(x) - 
\min_{x \in (-\fancyR, \fancyR)} f_p(x) = 
(1-2p)\left[1 -\exp\left(\frac{(\alpha-1)\fancyR^2}{\alpha^2\sigma^2}\right) \right]C(\alpha, \sigma) \underset{\alpha \rightarrow 1}\longrightarrow 0.
$$
In particular, by \Cref{lemma:steady-state}, for any $\alpha \in [\threshold, 1)$ and $\sigma \geq c'\sqrt{1-\alpha}$, we can bound the PDF $f_p(x)$ that characterizes the steady state distribution of $r_i(t)$ for all $x \in [-\fancyR, \fancyR]$ using constants:
    \begin{equation}
    \label{eqn:steady-state-const-bounds}
    \rho_- \leq f_p(x) \leq \rho_+,
    \end{equation}
    where $\rho_- = (1-2p)
    C(\threshold, c'\sqrt{1-\threshold})
    \exp\left(-\fancyR^2/\threshold^2c'^2 \right), \rho_+ = (1-2p)C(\threshold, c'\sqrt{1-\threshold})$ are both constants.
\end{remark}
When $\alpha \in [\threshold, 1)$ and $\sigma = o(\sqrt{1-\alpha})$, the steady state distribution of $r_i(t)$ can be well approximated by a normal distribution, and similar to what we show in \Cref{thm:upper-bound-small-alpha}, in this case even the naive algorithm performs well. We thus focus on the more challenging regime when 
$\alpha \in [\threshold, 1)$ and $\sigma = \Omega(\sqrt{1-\alpha})$ in the following analysis. 

\section{Missing Proofs of Section~\ref{sec:proof-of-main-thm}}
\label{apdx:missing_proof_main_thm}

\subsection{Step 2: Good event and its implications.}
\label{apdx:missing_proof_main_thm_step2}

In \Cref{sec:proof-of-main-thm}, we define a new noise term in \eqref{eqn:epsilon_tilde} that shows the influence of truncating boundary:
\begin{equation*}
\Tilde{\epsilon}_i(t) = \left\{
\begin{array}{ll}
\epsilon_i(t) & \text{if } \alpha(r_i(t) + \epsilon_i(t)) \in [-\fancyR, \fancyR] \\
\frac{1}{\alpha}[\mathcal{B}\Big(\alpha(r_i(t) + \epsilon_i(t))\Big) - \alpha r_i(t)] & \text{otherwise}
\end{array}\right.
\end{equation*}
The new noise $\tilde\epsilon_i(t)$ satisfies the recursive relationship: $r_i(t+1) = \alpha r_i(t) + \alpha\Tilde{\epsilon}_i(t)$. Recall that here, the boundary function is defined as
$\mathcal{B}(y) = 
\min\{\max\{y, -\fancyR\}, \fancyR\}$.

Before proceeding with the proof of Lemma~\ref{lemma:high-prob-around-t}, we first show an auxiliary lemma that establishes the relationship between $\tilde\epsilon_i(t)$ and $\epsilon_i(t)$. In particular, \Cref{lemma:relationship_btw_noises} shows that if the weighted sum of $\epsilon_i(t)$ is small with high probability, the weighted sum of $\tilde\epsilon_i(t)$ is also small with high probability. 

\begin{lemma}
\label{lemma:relationship_btw_noises}
For any $s > 0$, $i \in [k]$, and rounds $t_0 < t_1$, we have \begin{equation}
\label{eqn:bound_on_new_noise}
\mathbb{P}\Big[\Big|\sum_{t=t_0}^{t_1-1} \alpha^{t_1 - t}\Tilde{\epsilon}_i(t)\Big| \leq s\Big] \geq \mathbb{P}\Big[\Big|\sum_{t=t_0}^{t_1-1} \alpha^{t_1 - t}\epsilon_i(t)\Big| \leq s\Big].
\end{equation}
\end{lemma}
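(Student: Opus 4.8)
The plan is to show that the reflective-boundary map only ever *shrinks* the magnitude of the running weighted sum, so that whenever the $\epsilon$-sum stays in an interval $[-s,s]$ the $\tilde\epsilon$-sum does as well, which gives the probability inequality by monotonicity. The key observation is the recursion $r_i(t+1) = \alpha r_i(t) + \alpha\tilde\epsilon_i(t)$, which unrolls (within a stretch where we track from some $t_0$) to
\[
r_i(t_1) = \alpha^{t_1 - t_0} r_i(t_0) + \sum_{t=t_0}^{t_1-1}\alpha^{t_1-t}\tilde\epsilon_i(t).
\]
Thus the weighted sum $W_i(t_0,t_1) \triangleq \sum_{t=t_0}^{t_1-1}\alpha^{t_1-t}\tilde\epsilon_i(t)$ is exactly $r_i(t_1) - \alpha^{t_1-t_0} r_i(t_0)$; it is the ``reflected'' counterpart of the analogous sum $S_i(t_0,t_1) \triangleq \sum_{t=t_0}^{t_1-1}\alpha^{t_1-t}\epsilon_i(t)$, which is what the unreflected AR-1 recursion $r^{\mathrm{free}}_i(t+1) = \alpha(r^{\mathrm{free}}_i(t)+\epsilon_i(t))$ produces. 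So I would set up a coupling: run both the reflected process $r_i$ and a ``shadow'' free process $r^{\mathrm{free}}_i$ on the *same* noise sequence $\{\epsilon_i(t)\}$, started so that $r^{\mathrm{free}}_i(t_0) = r_i(t_0)$, and compare $W_i(t_0,t_1)$ against $S_i(t_0,t_1) = r^{\mathrm{free}}_i(t_1) - \alpha^{t_1-t_0}r_i(t_0)$.

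The heart of the argument is a one-step (inductive) contraction claim: for each intermediate round $t$, the reflected partial sum is sandwiched by the free partial sum, i.e. tracking $A_t \triangleq \sum_{\tau=t_0}^{t-1}\alpha^{t-\tau}\tilde\epsilon_i(\tau) = r_i(t) - \alpha^{t-t_0}r_i(t_0)$ against $B_t \triangleq r^{\mathrm{free}}_i(t) - \alpha^{t-t_0}r_i(t_0)$, one shows $|A_t| \le |B_t|$ for all $t_0 \le t \le t_1$, with the subtlety that one must carry the right invariant through the reflection step. The clean way: the map $\mathcal B$ is a composition of the mod-$4\fancyR$ fold; crucially, on each step the true state $r_i(t+1)=\mathcal B(\alpha R_i(t))$ satisfies $|r_i(t+1)| \le |\alpha R_i(t)|$ is *false* in general, so instead I would argue directly about the increments after multiplying through: writing $\alpha\tilde\epsilon_i(t) = r_i(t+1) - \alpha r_i(t)$ and $\alpha\epsilon_i(t) = $ (corresponding free increment), the reflection can only decrease $|r_i(t+1)|$ relative to where an unreflected step would land *when that unreflected target exceeds $\fancyR$ in absolute value*, and this is exactly the mechanism that forces $|W_i(t_0,t_1)| \le |S_i(t_0,t_1)|$ pathwise. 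Hence $\{|S_i(t_0,t_1)| \le s\} \subseteq \{|W_i(t_0,t_1)| \le s\}$ as events on the same probability space, and taking probabilities gives \eqref{eqn:bound_on_new_noise}.

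The main obstacle I anticipate is making the pathwise domination $|W_i(t_0,t_1)| \le |S_i(t_0,t_1)|$ fully rigorous through the \emph{mod}-$4\fancyR$ definition of $\mathcal B$, because a single reflection is a piecewise-affine (not $1$-Lipschitz-toward-zero) map and one reflection can in principle move the state \emph{closer} to the far boundary, so the naive ``reflection shrinks magnitude'' intuition needs care when several reflections compound. The safe route is to induct on $t$ with the stronger hypothesis that the reflected trajectory $r_i(t)$ and the free trajectory $r^{\mathrm{free}}_i(t)$ satisfy $|r_i(t) - \alpha^{t-t_0}r_i(t_0)| \le |r^{\mathrm{free}}_i(t) - \alpha^{t-t_0}r_i(t_0)|$ \emph{and} that $r_i(t)$ lies in $[-\fancyR,\fancyR]$, then check the step: if no reflection occurs the increment is identical; if reflection occurs, the reflected landing point is the image of $\alpha R^{\mathrm{free}}$-type target under a fold that maps $\mathbb R$ onto $[-\fancyR,\fancyR]$ and is distance-nonincreasing from any point already in $[-\fancyR,\fancyR]$, which yields the needed inequality. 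Once this lemma is in hand, combining it with Hoeffding's inequality applied to the subgaussian sum $S_i(t_0,t_1)$ (whose variance is $\sigma^2\sum_{j=1}^{t_1-t_0}\alpha^{2j} = \sigma^2(\alpha^2-\alpha^{2(t_1-t_0+1)})/(1-\alpha^2)$) is exactly what \Cref{lemma:high-prob-around-t} needs.
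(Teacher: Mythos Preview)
Your pathwise coupling argument has a genuine gap: the claimed inequality $|W_i(t_0,t_1)|\le |S_i(t_0,t_1)|$ (equivalently, your inductive invariant $|r_i(t)-\alpha^{t-t_0}r_i(t_0)|\le |r^{\mathrm{free}}_i(t)-\alpha^{t-t_0}r_i(t_0)|$) is \emph{false} in general. Here is a two–step counterexample. Take $\fancyR=1$, $r_i(t_0)=0$, and $\alpha$ close to $1$. Choose $\epsilon_i(t_0)$ so that $\alpha(r_i(t_0)+\epsilon_i(t_0))=1.5$; then $r^{\mathrm{free}}_i(t_0{+}1)=1.5$ while $r_i(t_0{+}1)=\mathcal B(1.5)=0.5$. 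Next take $\epsilon_i(t_0{+}1)=-1.5$. The free process lands at $r^{\mathrm{free}}_i(t_0{+}2)=\alpha(1.5-1.5)=0$, but the reflected process lands at $r_i(t_0{+}2)=\alpha(0.5-1.5)\approx -1$ (no further reflection). Thus $|S_i|=0$ while $|W_i|\approx 1$, so the event inclusion $\{|S_i|\le s\}\subseteq\{|W_i|\le s\}$ fails for small $s$. The mechanism is simple: once a reflection has separated the two trajectories, the \emph{same} noise increment can push the reflected trajectory farther from the anchor $\alpha^{t-t_0}r_i(t_0)$ than the free one; your ``fold is distance-nonincreasing'' observation controls the reflection step itself but says nothing about subsequent steps where neither trajectory reflects.

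The paper's proof avoids this by a purely \emph{distributional} (not pathwise) comparison. It introduces intermediate ``fictional'' noise sequences $\epsilon_m(\cdot)$ in which only the first $m$ reflections are applied, and shows for each $m$ that
\[
\mathbb P\!\left[\Big|\sum_t \alpha^{t_1-t}\epsilon_m(t)\Big|\le s \,\middle|\, r(t_0)\right]\ \ge\ \mathbb P\!\left[\Big|\sum_t \alpha^{t_1-t}\epsilon_{m-1}(t)\Big|\le s \,\middle|\, r(t_0)\right].
\]
The key step conditions on everything up to the $m$-th reflection time, writes the remaining weighted noise $E_2$ as a centered Gaussian, and observes that the reflection replaces a shift $q_2$ by a shift $q_1$ with $|q_1|<|q_2|$; symmetry and unimodality of the Gaussian then give $\mathbb P[|E_2+q_1|\le s]\ge \mathbb P[|E_2+q_2|\le s]$. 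Chaining over $m$ and integrating over $r(t_0)$ yields \eqref{eqn:bound_on_new_noise}. If you want to repair your argument, you need to abandon pathwise domination and exploit the Gaussian symmetry in this way.
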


\paragraph{Proof of \Cref{lemma:relationship_btw_noises}:}
For simplicity of notation, let us drop the notation for dependence on $i$. We first show that the following inequality holds for any $r(t_0)$:
\begin{equation}
\label{eqn:conditional_bound_on_new_noise}
\mathbb{P}\Big[\Big|\sum_{t=t_0}^{t_1-1} \alpha^{t_1 - t}\Tilde{\epsilon}(t)\Big| \leq s 
\,\vert\,  
 r(t_0) \Big] 
\geq 
\mathbb{P}\Big[\Big|\sum_{t=t_0}^{t_1-1} \alpha^{t_1 - t}\epsilon(t)\Big| \leq s 
\,  \vert\,  
 r(t_0) \Big].
\end{equation}
To show \eqref{eqn:conditional_bound_on_new_noise}, we first consider a fictional scenario in which $r(.)$ only gets truncated at the first time it hits the boundary, and no longer gets truncated     afterwards. Let $t'$ denote the first round at which $r(.)$ hits the boundary. With slight abuse of notation, we define corresponding fictional noises $\epsilon_1(t_0), \dots, \epsilon_1(t_1-1)$ such that $\epsilon_1(t) = \epsilon(t)$ for all $t \neq t'$ and $\epsilon_1(t') = \tilde\epsilon(t')$. 

Since the expected reward $r(.)$ hits the boundary for the first time at $t'$, we have
$|\alpha  r(t') + \alpha \epsilon(t')| > \fancyR$ for some $t_0 \leq t' < t_1$. Recall that in the fictional scenario, $\epsilon_1(t) = \epsilon(t)$ for all $t < t'$. Let $L = \alpha^{t' - t_0 + 1} r(t_0)$, $E_1 = \sum_{t=t_0}^{t'-1} \alpha^{t' - t + 1}\epsilon(t)$. Without loss of generality, let us assume that the expected reward hits the upper boundary $\fancyR$, i.e. 
$$
\fancyR < \alpha  r(t') + \alpha \epsilon(t') = L + \sum_{t=t_0}^{t'} \alpha^{t' - t + 1}\epsilon(t) = L + E_1 + \alpha\epsilon(t').
$$
Let $d = L + E_1 + \alpha\epsilon(t') - \fancyR > 0$ be the distance by which the expected reward exceeds the boundary at $t'$. 
By \eqref{eqn:epsilon_tilde} and the definition of the truncating boundary in \Cref{sec:set-up}, we have
$$
\alpha  r(t') + \alpha \epsilon_1(t') =  \mathcal{B}(\alpha  r(t') + \alpha \epsilon(t'))
= \fancyR .
$$
From the discussion above, we have
$
E_1 + \alpha\epsilon(t') = \fancyR + d - L
$ 
and 
$
E_1 + \alpha\epsilon_1(t') = \fancyR - L.
$
Since in the fictional scenario, the expected reward only gets reflected at the first time of hitting the boundary, we also have $\epsilon_1(t) = \epsilon(t)$ for all $t > t'$. Let $E_2 = \sum_{t = t'+1}^{t_1 - 1}\alpha^{t_1 -t}\epsilon(t)$ denote the weighted sum of the noises after time of reflection $t'$. Now conditioning on the initial expected reward $r(t_0)$, we have
\begin{equation}
\label{eqn:conditional_2}  
\begin{aligned}[b]
& \quad \mathbb{P}\Big[\Big|\sum_{t=t_0}^{t_1-1} \alpha^{t_1 - t}\epsilon_1(t)\Big| \leq s 
\,  \vert\,  
 r(t_0) \Big] 
= \quad \mathbb{P}\Big[\Big|\alpha^{t_1 - t' - 1}E_1 + \alpha^{t_1 - t'} \epsilon_1(t) + E_2\Big| \leq s \,  \vert\,  
 r(t_0) \Big] \\
= & \quad \mathbb{P}\Big[
-s - \alpha^{t_1 - t' - 1}(\fancyR-L) \leq E_2 \leq s - \alpha^{t_1 - t' - 1}(\fancyR-L) \,  \vert\,   r(t_0) \Big] \\
\geq & \quad \mathbb{P}\Big[
-s - \alpha^{t_1 - t' - 1}(\fancyR+d-L) \leq E_2 \leq s - \alpha^{t_1 - t' - 1}(\fancyR+d-L) \,  \vert\,   r(t_0) \Big] \\
= & \quad \mathbb{P}\Big[\Big|\alpha^{t_1 - t' - 1} E_1 + \alpha^{t_1 - t'}\epsilon(t) + E_2\Big| \leq s \,  \vert\,   r(t_0) \Big] 
= \quad \mathbb{P}\Big[\Big|\sum_{t=t_0}^{t_1-1} \alpha^{t_1 - t}\epsilon(t)\Big| \leq s 
\,  \vert\, r(t_0) \Big].
\end{aligned}
\end{equation}
Note that the inequality in \eqref{eqn:conditional_2} holds because $E_2$ is normally distributed (it is a weighted sum of normal noises). If we let $q_1 = \alpha^{t_1 - t' - 1}(\fancyR-L)$ and $q_2 = \alpha^{t_1 - t' - 1}(\fancyR+d-L)$, we must have $|q_1| < |q_2|$. This, along with the normal distribution of $E_2$, gives
$$
\mathbb{P}[-s-q_1 \leq E_2 \leq s-q_1] \geq \mathbb{P}[-s-q_2 \leq E_2 \leq s-q_2].
$$

Following the same procedure as above, we can define fictional scenarios in which $r(.)$ only gets truncated when it hits the boundary for the first $m$ times, for $1 \leq m \leq t_1 - t_0$. We can then define corresponding fictional noises $\epsilon_m(t_0), \dots, \epsilon_m(t_1 -1)$ and use the same proof ideas to show
$$
\mathbb{P}\Big[\Big|\sum_{t=t_0}^{t_1-1} \alpha^{t_1 - t}\epsilon_{m}(t)\Big| \leq s 
\,  \vert\,   r(t_0) \Big] \geq \mathbb{P}\Big[\Big|\sum_{t=t_0}^{t_1-1} \alpha^{t_1 - t}\epsilon_{m-1}(t)\Big| \leq s 
\,  \vert\,   r(t_0) \Big]
$$
for all $1 \leq m \leq t_1 - t_0$. Note that if $m = t_1 - t_0$, the fictional rewards $\epsilon_m = \tilde\epsilon$ for all $t \in [t_0, t_1-1]$. Hence,
$$
\mathbb{P}\Big[\Big|\sum_{t=t_0}^{t_1-1} \alpha^{t_1 - t}\tilde\epsilon(t)\Big| \leq s 
\,  \vert\,   r(t_0) \Big] \geq \mathbb{P}\Big[\Big|\sum_{t=t_0}^{t_1-1} \alpha^{t_1 - t}\epsilon(t)\Big| \leq s 
\,  \vert\,   r(t_0) \Big].
$$
Finally, since \eqref{eqn:conditional_bound_on_new_noise} holds for any initial expected reward $ r(t_0)$, taking expectation with respect to $ r(t_0)$ gives \eqref{eqn:bound_on_new_noise}, which concludes the proof.
$\blacksquare$

\paragraph{Proof of Lemma~\ref{lemma:high-prob-around-t}:} 
Let us first consider the sequence of noises $\epsilon_i(t)$ (i.e., the noises before applying the truncating boundary). For any fixed $i \in [k]$, and for $t' \in [t - \lenepoch, t + \lenepoch]$, we have $\epsilon_i(t') \overset{i.i.d.}\sim N(0, \sigma)$. Then, by Hoeffding's inequality (see \Cref{lemma:hoeffding}, where for $1 \leq j \leq t_1 - t_0$, we set $X_j = \alpha^j\epsilon_i(t_1 - j)$, $\sigma_j = \alpha^j\sigma$ and $s = \frac{c_0\sigma}{t_1 - t_0}\sqrt{\frac{\alpha^2 - \alpha^{2(t_1-t_0+1)}}{1-\alpha^2}}$~), for $t - \lenepoch \leq t_0 < t_1 \leq t + \lenepoch$,  we have 
$$
\mathbb{P}\Big[\Big|\sum_{t=t_0}^{t_1-1} \alpha^{t_1 - t}\epsilon_i(t)\Big| > c_0\sigma\sqrt{(\alpha^2 - \alpha^{2(t_1-t_0+1)})/(1-\alpha^2)}\,\Big]  \leq 2\exp(-\dfrac{c_0^2}{2}) = \frac{(\alpha\sigma)^2}{2k \lenepoch^2},
$$
where the last equality follows from $c_0 = \sqrt{4\log(1/\alpha\sigma)+4\log \lenepoch + 2\log(4k)}$. By \Cref{lemma:relationship_btw_noises}, we can use the above inequality to also obtain a concentration inequality for the weighted sum of the new noises $\tilde\epsilon_i(t)$, which incorporate the influence of truncating boundary:
$$
\begin{aligned}
& \mathbb{P}\Big[\Big|\sum_{t=t_0}^{t_1-1} \alpha^{t_1 - t}\tilde\epsilon_i(t)\Big| > c_0\sigma\sqrt{(\alpha^2 - \alpha^{2(t_1-t_0+1)})/(1-\alpha^2)}\,\Big] \\
\leq \quad & \mathbb{P}\Big[\Big|\sum_{t=t_0}^{t_1-1} \alpha^{t_1 - t}\epsilon_i(t)\Big| > c_0\sigma\sqrt{(\alpha^2 - \alpha^{2(t_1-t_0+1)})/(1-\alpha^2)}\,\Big] 
\leq \frac{(\alpha\sigma)^2}{2k \lenepoch^2}.
\end{aligned}
$$
Since we have $k$ arms and the number of sub-intervals to consider in $[t - \lenepoch, t + \lenepoch]$ is $\frac{2 \lenepoch (2 \lenepoch - 1)}{2}$, we can apply union bounds to get
\begin{align*}
\mathbb{P}[\goodevent^c(t)] & \leq \sum_{i=1}^k \sum_{t-\lenepoch \leq t_0 < t_1 \leq t+\lenepoch} \mathbb{P}\Big[\Big|\sum_{t=t_0}^{t_1-1} \alpha^{t_1 - t}\tilde\epsilon_i(t)\Big| > c_0\sigma\sqrt{(\alpha^2 - \alpha^{2(t_1-t_0+1)})/(1-\alpha^2)}\,\Big] \\
& \leq \sum_{i =1}^k \frac{2\lenepoch(2\lenepoch - 1)}{2} \cdot \frac{(\alpha\sigma)^2}{2k \lenepoch^2} 
\quad \leq \sum_{i =1}^k 2 \lenepoch^2 \cdot \frac{(\alpha\sigma)^2}{2k \lenepoch^2} 
\quad = (\alpha\sigma)^2. \quad \blacksquare
\end{align*}

Having showed that good event $\goodevent(t)$ happens with high probability, we now establish a number of important implications of the good event:
\begin{enumerate}
    \item [(i)] In \Cref{lemma:well-behaved-properties}, we show that the expected reward $r_i(t)$ satisfies a series of concentration inequalities. These inequalities guarantee that our estimated reward $\widehat{r}_i(t)$ does not deviate too much from the true expected reward $r_i(t)$.
    \item [(ii)] In \Cref{lemma:bound_on_gap}, we show that \alg\xspace does not wait too long to play an arm with high expected reward. 
    \item [(iii)] In \Cref{lemma:best_expected_reward}, we show that the best expected reward $r^\star(t)$ is not too far away from our estimate reward for the superior arm $\supbelief(t)$.
\end{enumerate}
These implications will be critical for our proof of \Cref{lemma:deterministic}. Before we proceed, let us define the following: 
\begin{equation}
\label{eqn:def-g}
\g{\Delta t} \triangleq c_0\sigma\sqrt{\alpha^2 + \alpha^4 + \dots + \alpha^{2\Delta t}} = c_0\sigma \sqrt{(\alpha^2 - \alpha^{2(\Delta t + 1)})/(1-\alpha^2)} ,
\end{equation}
where the second equality follows from the geometric sum formula. 
The expression \eqref{eqn:def-g} will be used throughout the rest of the analysis. In particular, since $c_1 = \firstconstant c_0$, the triggering criteria becomes: at round $t$, an arm $i \neq \suparm(t)$ gets triggered if the following is satisfied:
\[ 
\supbelief(t) - \widehat{r}_i(t) \leq c_1\sigma\sqrt{(\alpha^2 - \alpha^{2(t - \tau_i(t) + 1)})/(1-\alpha^2)} = \firstconstant \g{t-\tau_i(t)}\,.
\]

\begin{lemma}
\label{lemma:well-behaved-properties}
Given $\goodevent(t)$, for any arm $i \in [k]$ and any rounds $t_0, t_1$ such that $t - \lenepoch \leq t_0 < t_1 \leq t + \lenepoch$, the expected rewards satisfy the following inequalities: 
\begin{align}
    |r_i(t_1) - \alpha^{t_1-t_0}r_i(t_0)| & \leq
    \g{t_1-t_0} \label{eqn:property-2} \\
    |r^\star(t_1) - \alpha^{t_1-t_0}r^\star(t_0)| & \leq 
    \g{t_1-t_0}  \label{eqn:property-3} \\
    |\Delta r_i(t_1) - \alpha^{t_1-t_0}\Delta r_i(t_0)| & \leq 2\g{t_1-t_0} \label{eqn:property-4}
\end{align}
\end{lemma}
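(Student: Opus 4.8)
The plan is to reduce all three inequalities to one unrolling of the reflected AR recursion. Recall from \eqref{eqn:epsilon_tilde} and the line that follows it that the reflected noise obeys $r_i(u+1)=\alpha r_i(u)+\alpha\tilde\epsilon_i(u)$ for every $u$. Iterating this identity over a window $[t_0,t_1]$ with $t-\lenepoch\le t_0<t_1\le t+\lenepoch$ gives
\[
r_i(t_1)=\alpha^{t_1-t_0}r_i(t_0)+\sum_{u=t_0}^{t_1-1}\alpha^{t_1-u}\tilde\epsilon_i(u),
\]
so that the deviation $r_i(t_1)-\alpha^{t_1-t_0}r_i(t_0)$ equals exactly the weighted noise sum that is controlled in the definition of $\goodevent(t)$. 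Since the threshold on the right-hand side of \eqref{eqn:good-event-noise-bounded} is precisely $\g{t_1-t_0}$ by the geometric-sum identity \eqref{eqn:def-g}, conditioning on $\goodevent(t)$ immediately yields \eqref{eqn:property-2}.

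For \eqref{eqn:property-3} I would apply \eqref{eqn:property-2} to the arm attaining the maximum at each of the two endpoints separately. Let $i_1$ and $i_0$ be arms with $r_{i_1}(t_1)=r^\star(t_1)$ and $r_{i_0}(t_0)=r^\star(t_0)$. The upper bound comes from $r^\star(t_1)=r_{i_1}(t_1)\le\alpha^{t_1-t_0}r_{i_1}(t_0)+\g{t_1-t_0}\le\alpha^{t_1-t_0}r^\star(t_0)+\g{t_1-t_0}$, where the last step uses $\alpha^{t_1-t_0}>0$ and $r_{i_1}(t_0)\le r^\star(t_0)$; the matching lower bound comes from $r^\star(t_1)\ge r_{i_0}(t_1)\ge\alpha^{t_1-t_0}r_{i_0}(t_0)-\g{t_1-t_0}=\alpha^{t_1-t_0}r^\star(t_0)-\g{t_1-t_0}$. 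Together these give \eqref{eqn:property-3}. Finally, \eqref{eqn:property-4} follows by writing $\Delta r_i(t_1)-\alpha^{t_1-t_0}\Delta r_i(t_0)=\big(r^\star(t_1)-\alpha^{t_1-t_0}r^\star(t_0)\big)-\big(r_i(t_1)-\alpha^{t_1-t_0}r_i(t_0)\big)$ and applying the triangle inequality together with \eqref{eqn:property-2} and \eqref{eqn:property-3}, which is where the factor of $2$ enters.

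The argument is essentially bookkeeping, so there is no substantial obstacle; the only points requiring care are (i) verifying that the threshold in Definition~\ref{def:good-event} coincides with $\g{t_1-t_0}$ so that no constant is lost in passing from $\goodevent(t)$ to \eqref{eqn:property-2}, and (ii) in \eqref{eqn:property-3}, being careful that the maximizing arm may differ at the two endpoints---one obtains the upper estimate using the maximizer at $t_1$ and the lower estimate using the maximizer at $t_0$, each time relying on $\alpha^{t_1-t_0}>0$ to preserve the inequality $r_i(t_0)\le r^\star(t_0)$ after multiplying through by $\alpha^{t_1-t_0}$.
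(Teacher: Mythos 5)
Your proof is correct and follows essentially the same route as the paper's: unroll the reflected recursion $r_i(u+1)=\alpha r_i(u)+\alpha\tilde\epsilon_i(u)$ to identify the deviation with the weighted noise sum bounded in Definition~\ref{def:good-event}, handle the maximum via the (possibly different) maximizers at the two endpoints, and combine with the triangle inequality for the third bound. No gaps.
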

\paragraph{Proof of \Cref{lemma:well-behaved-properties}}
Given the AR-1 model and  \eqref{eqn:epsilon_tilde}, we have the following recursive relationship
\begin{equation}
\begin{aligned}[b]
     r_i(t_1) 
     & = \alpha\left[r_{i}(t_1-1) + \tilde\epsilon_i(t_1-1)\right] \\
     & = \alpha\left[\alpha(r_{i}(t_1-2) + \tilde\epsilon_i(t_1-2)) + \tilde\epsilon_i(t_1-1)\right] \\
     & = \alpha^{t_1-t_0} r_i(t_0) + 
    \sum_{t = t_0}^{t_1-1} \alpha^{t_1 - t}\tilde\epsilon_i(t) 
\end{aligned}
\end{equation}
by repeating the above steps.
Then by \Cref{lemma:high-prob-around-t}, we have that 
$$
\left|r_i(t_1) - \alpha^{t_1-t_0}r_i(t_0)\right| = \left|\sum_{t = t_0}^{t_1-1} \alpha^{t_1 - t}\tilde\epsilon_i(t)\right| \leq 
\g{t_1-t_0},
$$
which concludes the proof for \eqref{eqn:property-2}. 

As for \eqref{eqn:property-3}, note that 
$$
\begin{aligned}
r^\star(t_1) & = \max_{i \in [k]} r_i(t_1) 
\leq \max_{i \in [k]} \left\{\alpha^{t_1-t_0}r_i(t_0) + 
\g{t_1-t_0}
\right\}  
\leq \alpha^{t_1-t_0}r^\star(t_0) + 
\g{t_1-t_0}
\end{aligned}
$$
and the other side of the inequality can be derived similarly.

The last inequality \eqref{eqn:property-4} easily follows from \eqref{eqn:property-2}, \eqref{eqn:property-3}, and the triangle inequality
\[
\begin{aligned}
|\Delta r_i(t_1) - \alpha^{t_1-t_0}\Delta r_i(t_0)| & \leq |(r^\star(t_1) - r_i(t_1)) - \alpha^{t_1-t_0}(r^\star(t_0) - r_i(t_0))|\\
& \leq |r^\star(t_1) - \alpha^{t_1-t_0}r^\star(t_0)| + |r_i(t_1) - \alpha^{t_1-t_0}r_i(t_0)| \leq 2\g{t_1-t_0}.
\end{aligned}
\]
This concludes the proof of \Cref{lemma:well-behaved-properties}.
$\blacksquare$

\begin{lemma}
\label{lemma:bound_on_gap}
Suppose we apply \alg\xspace to the non-stationary MAB problem with $\alpha \in [\threshold, 1)$ and $k \leq \mathcal{K}(\alpha)$, given $\goodevent(t)$, 
the following holds for the best arm $i$ at round $t$:
$
t - \curlyT_i(t) \leq 4k - 3.
$
\end{lemma}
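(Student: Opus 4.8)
The plan is to argue by contradiction. Suppose $i$ is a best arm at round $t$ (ties occur with probability zero, so this is well defined) but $t-\curlyT_i(t)\ge 4k-2$, and write $\tau\triangleq\curlyT_i(t)$. I will exhibit a round in $(\tau,t]$ at which $i$ is pulled, contradicting that $\tau$ is the most recent pull of $i$ up to $t$. The case $k=1$ is trivial (the single arm is pulled every round), so assume $k\ge 2$; note also that $\tau$ and $t$ lie in a common epoch, so every sub-interval used below lies inside $[t-\lenepoch,t+\lenepoch]$ and the implications of $\goodevent(t)$ (in particular \Cref{lemma:well-behaved-properties}) are available.

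First I would record an elementary queueing fact about the triggered set. Since $\suparm(t)\notin\mathcal T$ at all times, $|\mathcal T|\le k-1$ at every round; and since triggered arms are served in order of their triggering time while newly triggered arms join the back of the queue, once $i$ enters $\mathcal T$ at some round $r_0$ it remains there until it is pulled (it cannot become the superior arm while staying in $\mathcal T$ without first having been pulled), and it gets pulled on the $(k-1)$-st odd round at or after $r_0$, hence by round $r_0+2k-3$. I would also use the trivial fact that if $i=\suparm(t')$ at a round $t'$, then $i$ was pulled at $t'-1$ or $t'-2$.

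The heart of the argument is to locate, on the good event $\goodevent(t)$, a round at which $i$ gets triggered. I would take $t'=t-(2k-3)$, so that $t'\le t$, $t'-\tau\ge 2k+1\ge 3$, and $\curlyT_i(t')=\tau$; the triggering threshold at $t'$ then equals $\firstconstant\,\g{t'-\tau}$. To bound the left-hand side of the triggering criterion I would combine three estimates, all valid on $\goodevent(t)$: $|r_i(t')-\widehat r_i(t')|\le\g{t'-\tau-1}$ (from \eqref{eqn:property-2} on $[\tau+1,t']$, since $\widehat r_i(t')=\alpha^{t'-\tau-1}r_i(\tau+1)$); $\supbelief(t')\le r^\star(t')+\g{1}$ (the superior arm having been pulled one or two rounds earlier); and $\Delta r_i(t')\le 2\,\g{2k-3}/\alpha^{2k-3}$ (from \eqref{eqn:property-4} on $[t',t]$ together with $\Delta r_i(t)=0$, using that $i$ is a best arm at $t$). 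Together with monotonicity of $\g{\cdot}$ and $\g{t'-\tau}\ge\g{2k+1}$, these reduce the triggering criterion $\supbelief(t')-\widehat r_i(t')\le\firstconstant\,\g{t'-\tau}$ to the numerical inequality
\[
\g{2k-3}/\alpha^{2k-3}\le 11\,\g{2k+1}\,,\qquad\text{i.e.,}\qquad\alpha^{8-4k}-\alpha^2\le 121\,(\alpha^2-\alpha^{4k+4})\,.
\]
This is exactly where the hypothesis $k\le\mathcal K(\alpha)$ is needed: it gives $\alpha^{2k}\ge\alpha/8$, so that substituting $u=\alpha^{2k}\in[\alpha/8,\alpha^4]$ turns the inequality into $\alpha^8\le\alpha^2u^2(122-121\alpha^2u^2)$, which I would verify at the two endpoints of the $u$-range (its right side is concave in $\alpha^2u^2$, so the minimum over the interval is attained at an endpoint) for all $\alpha\in[\threshold,1)$.

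It then remains to put the pieces together. On $\goodevent(t)$, at round $t'$ exactly one of the following holds: $i\in\mathcal T$ already (then $i$ re-entered $\mathcal T$ after $\tau$, so by the queueing fact it is pulled by round $t'+2k-3=t$, at a round $>\tau$); or $i=\suparm(t')$ (then $i$ was pulled at $t'-1$ or $t'-2$, both in $(\tau,t]$); or $i\notin\mathcal T\cup\{\suparm(t')\}$ (then the triggering inequality just verified adds $i$ to $\mathcal T$ at $t'$, and again it is pulled by round $t'+2k-3=t$ at a round $>\tau$). In every case $i$ is pulled somewhere in $(\tau,t]$, which is the desired contradiction. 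I expect the main obstacle to be the choice of $t'$ in the previous paragraph: one must push $\Delta r_i(t')$ back to the known value $\Delta r_i(t)=0$ while keeping $t'$ far enough from $\tau$ that the threshold $\firstconstant\,\g{t'-\tau}$ can still absorb the factor $1/\alpha^{t-t'}$ incurred in doing so, and it is precisely this balance that dictates the restriction $k\le\mathcal K(\alpha)$.
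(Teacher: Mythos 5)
Your proof is correct and follows essentially the same route as the paper's: both arguments rest on the fact that a triggered arm is pulled within $O(k)$ rounds (since $|\mathcal{T}|\le k-1$), and on combining the triggering criterion with the good-event concentration bounds of \Cref{lemma:well-behaved-properties} and the bound $\alpha^{2k-1}\ge \secondconstant$ supplied by $k\le\mathcal{K}(\alpha)$. The only difference is organizational — the paper shows that a late triggering time would contradict optimality of $i$ at round $t$, whereas you argue the contrapositive, that optimality at $t$ forces the criterion to fire by round $t-(2k-3)$ — and your constant-checking at the endpoints of the $u$-range goes through.
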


\paragraph{Proof of \Cref{lemma:bound_on_gap}}
Suppose $\alpha \in [\threshold, 1)$. Let arm $i$ be the best arm at round $t$, i.e. $i = \argmax_j r_j(t)$. We let $\curlyT = \curlyT_i(t)$, and let $\trig \geq \curlyT$ be the triggering time of arm $i$ after it gets pulled at round $\curlyT$. First note that by design of \alg, we must have $t - \trig \leq 2(k-1)$, since at any given round there are at most $k-1$ triggered arms. We would prove the statement of the lemma by showing the following inequality:
\begin{equation}
\label{eqn:relationship-between-t-and-tau}
    \trig - \curlyT \leq t - \trig + 1
\end{equation}
If \eqref{eqn:relationship-between-t-and-tau} holds, we then have $t - \curlyT = (\trig - \curlyT) + (t - \trig) \leq 2(t-\trig) + 1 \leq 4k - 3$.

Suppose by contradiction that \eqref{eqn:relationship-between-t-and-tau} does not hold. By the triggering criteria, we have \begin{equation}
    \supbelief(\trig - 1) - \widehat{r}_i(\trig - 1) >
    \firstconstant \cdot \g{(\trig-1)-\curlyT}
\label{eqn:inactivation}
\end{equation}
Let $j = \suparm(\trig - 1)$ as defined in \eqref{eqn:def-of-i-star}. 
Hence we have 
$
\supbelief(\trig-1) 
= r_j(\trig -1) 
$
or
$
\supbelief(\trig-1) = \alpha r_j(\trig -2).
$
Let us assume $\supbelief(\trig-1) =  r_j(\trig -1)$ and the proof follows similarly for the other case. 
Since we also have $
\widehat{r}_{i}(\trig - 1) 
= \alpha^{\trig - \curlyT - 2} \mathcal{B}(\alpha X_i(\curlyT))
= \alpha^{\trig - \curlyT - 2}  r_i(\curlyT + 1)$, \eqref{eqn:inactivation} can be rewritten as
\begin{equation}
     r_j(\trig-1) - \alpha^{\trig - \curlyT - 2}  r_i(\curlyT + 1) > 
    \firstconstant \cdot \g{(\trig-1)-\curlyT}
\label{eqn:inactivation-2}
\end{equation}
By \eqref{eqn:property-2} of \Cref{lemma:well-behaved-properties}, we also have the following two inequalities:
\begin{align}
\label{eqn:inequality-1}
 r_i(t) & \leq \alpha^{t - (\curlyT + 1)}  r_i(\curlyT + 1) + \g{t - (\curlyT + 1)}
\\
 r_j(t) & \geq \alpha^{t-(\trig-1)}  r_j(\trig-1) - 
\g{t-(\trig-1)}
\label{eqn:inequality-2}
\end{align}
Then by \eqref{eqn:inequality-1} and \eqref{eqn:inequality-2}, we have 
\begin{equation}
\label{eqn:mu_j-mu_i-1}
\begin{aligned}[b]
 r_j(t) -  r_i(t) 
& \geq 
\left(\alpha^{t-(\trig-1)}  r_j(\trig-1) - \g{t-(\trig-1)} \right) 
- 
\left(\alpha^{t - (\curlyT + 1)}  r_i(\curlyT + 1) 
+ \g{t - (\curlyT + 1)}\right) 
\\ 
& \geq \alpha^{t-\trig+1}\left(
 r_j(\trig-1) -
\alpha^{\trig - \curlyT - 2} r_i(\curlyT + 1) \right) 
- \g{t-(\trig-1)} - \g{t - (\curlyT + 1)} \\
& \geq \firstconstant \cdot \alpha^{t-\trig+1} \g{(\trig-1)-\curlyT}
- \g{t-(\trig-1)} - \g{t - (\curlyT + 1)}, 
\end{aligned}
\end{equation}
where the last inequality follows from \eqref{eqn:inactivation-2}. Since $k \leq \mathcal{K}(\alpha)$, we have $\alpha^{t-\trig+1} \geq \alpha^{2k-1} \geq \secondconstant$, which yields
\begin{equation*}
 r_j(t) -  r_i(t) \geq \firstconstant \cdot \secondconstant \cdot  \g{\trig -\curlyT - 1}
- \g{t-\trig+1} - \g{t - \curlyT - 1} .
\end{equation*}
Since $\trig - \curlyT \geq t - \trig + 2$, we have $\g{\trig - 1 - \curlyT} \geq \g{t - \trig + 1}$. This further yields
\begin{equation}
\label{eqn:mu_2-mu_1}
\begin{aligned}[b]
 r_j(t) -  r_i(t) & \geq 
(\firstconstant \cdot \secondconstant  - 1) \cdot 
\g{\trig -\curlyT - 1}
- \g{t - \curlyT - 1} = 2\g{\trig -\curlyT - 1} - \g{t - \curlyT - 1}.
\end{aligned}
\end{equation}

Note that if $\trig - \curlyT > t - \trig + 1$, then since $t - \curlyT = (t - \trig) + (\trig - \curlyT)$, we must have $2(\trig - \curlyT) > t - \curlyT + 1$, and thus 
$$
\begin{aligned}
2\g{\trig - \curlyT - 1} & = 2c_0\sigma\sqrt{\alpha^2 + \dots + \alpha^{2(\trig-\curlyT-1)}} \\
& > 2c_0\sigma\sqrt{\alpha^2 + \dots + \alpha^{t-\curlyT-1}} \\
& > c_0\sigma(\sqrt{\alpha^2 + \dots + \alpha^{t-\curlyT-1}} + \sqrt{\alpha^{t-\curlyT+1} + \dots + \alpha^{2(t-\curlyT)-2}}) \\
& \geq c_0\sigma\sqrt{\alpha^2 + \dots + \alpha^{2(t-\curlyT)-2}} \quad\quad \text{since } \sqrt{a} + \sqrt{b} \geq \sqrt{a+b} \\
& = \g{t-\curlyT-1}.
\end{aligned}
$$ 
Plugging this into \eqref{eqn:mu_2-mu_1} gives $ r_j(t) -  r_i(t) > 0$, which contradicts that arm $i$ is the best arm at $t$. Hence, we must have $\trig - \curlyT \leq t - \trig + 1$. By our previous argument, this then gives $t - \curlyT \leq 4k-3$.
$\blacksquare$

\begin{lemma}
\label{lemma:best_expected_reward}
Suppose we apply \alg\xspace to the non-stationary MAB problem with $\alpha \in [\threshold, 1)$ and $k \leq \mathcal{K}(\alpha)$, for any arm $i \in [k]$ and round $t$, we have
\begin{equation}
    r^\star(t) - \supbelief(t) \leq O(c_0\sigma\alpha k),
\end{equation}
where $\supbelief(t) = \widehat{r}_{\suparm(t)}(t)$ and $\suparm(t)$ is defined in \eqref{eqn:def-of-i-star}. 
\end{lemma}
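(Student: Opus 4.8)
Throughout, work on the good event $\goodevent(t)$ — like \Cref{lemma:well-behaved-properties,lemma:bound_on_gap}, the statement is really an implication of $\goodevent(t)$. Let $i^\star$ be the best arm at round $t$, so $r^\star(t) = r_{i^\star}(t)$, and let $\curlyT \triangleq \curlyT_{i^\star}(t)$ be the last round up to $t$ at which $i^\star$ was pulled. By \Cref{lemma:bound_on_gap}, $t - \curlyT \le 4k-3$, and since $k \le \mathcal{K}(\alpha)$ we have $\alpha^{t-\curlyT} \ge \alpha^{4k-3} = \Omega(1)$ and $\g{t-\curlyT-1} \le c_0\sigma\alpha\sqrt{4k-3} = O(c_0\sigma\alpha\sqrt{k})$ by \eqref{eqn:def-g}; these are what keep all multiplicative $\alpha$-powers constant and all additive errors small. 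If $\curlyT \in \{t-1, t-2\}$, then $i^\star \in \{I_{t-1}, I_{t-2}\}$ is one of the two arms compared in \eqref{eqn:def-of-i-star}, so $\supbelief(t) \ge \widehat r_{i^\star}(t)$; since $\widehat r_{i^\star}(t)$ equals $r_{i^\star}(t)$ or $\alpha r_{i^\star}(t-1)$, \eqref{eqn:property-2} of \Cref{lemma:well-behaved-properties} gives $r^\star(t) - \supbelief(t) \le \g{1} = c_0\sigma\alpha$. So assume $\curlyT \le t-3$; then $i^\star \notin \{I_{t-1}, I_{t-2}\}$, hence $\suparm(t) \ne i^\star$, and since $I_\curlyT = i^\star$ is a candidate in \eqref{eqn:def-of-i-star} at round $\curlyT+1$ we get the anchor $\supbelief(\curlyT+1) \ge \widehat r_{i^\star}(\curlyT+1) = r_{i^\star}(\curlyT+1)$. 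Moreover, as $i^\star$ is not pulled in $(\curlyT, t]$, $\widehat r_{i^\star}(s) = \alpha^{s-\curlyT-1} r_{i^\star}(\curlyT+1)$ for all $s \in (\curlyT, t]$, and $r^\star(t) = r_{i^\star}(t) \le \alpha^{t-\curlyT-1}r_{i^\star}(\curlyT+1) + \g{t-\curlyT-1}$ by \eqref{eqn:property-2}.

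\textbf{Two cases at round $t$.} Case (a): the triggering criterion for $i^\star$ fails at round $t$, i.e. $i^\star \notin \mathcal{T}$ at round $t$. Since also $\suparm(t) \ne i^\star$, failure means $\supbelief(t) - \widehat r_{i^\star}(t) > \firstconstant\,\g{t-\curlyT} > 0$, so $\supbelief(t) > \widehat r_{i^\star}(t) = \alpha^{t-\curlyT-1}r_{i^\star}(\curlyT+1)$; combining with the upper bound on $r^\star(t)$ above gives $r^\star(t) - \supbelief(t) < \g{t-\curlyT-1} = O(c_0\sigma\alpha\sqrt{k})$, which is even stronger than claimed. Case (b): $i^\star \in \mathcal{T}$ at round $t$; then $i^\star$ was (re)triggered at some round $s_0 \in (\curlyT, t]$, and (since $\curlyT_{i^\star}(s_0) = \curlyT$) the criterion at $s_0$ yields $\supbelief(s_0) \ge \widehat r_{i^\star}(s_0) - \firstconstant\,\g{s_0-\curlyT} = \alpha^{s_0-\curlyT-1}r_{i^\star}(\curlyT+1) - O(c_0\sigma\alpha\sqrt{k})$. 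It then remains to propagate this lower bound on $\supbelief$ forward to round $t$ — to show $\supbelief(t) \ge \alpha^{t-s_0}\supbelief(s_0) - O(c_0\sigma\alpha k)$ — after which $\alpha^{t-s_0} = \Omega(1)$ and the upper bound on $r^\star(t)$ close the proof.

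\textbf{Forward propagation (the main obstacle).} The crux is an inductive inequality $\supbelief(s+1) \ge \alpha\,\supbelief(s) - O(\g{O(1)})$ (or a two-round version $\supbelief(s+2) \ge \alpha^2\supbelief(s) - O(\g{O(1)})$) for $s_0 \le s < t$, proved by case analysis: on an exploitation round $I_s = \suparm(s)$ is freshly pulled and is a candidate in \eqref{eqn:def-of-i-star} at round $s+1$, so its refreshed estimate equals $\alpha\,\supbelief(s)$ up to one or two new $\tilde\epsilon$-terms, bounded via $\goodevent(t)$ and \Cref{lemma:well-behaved-properties}; on an exploration round — which, by the alternating schedule, is an odd round preceded by an exploitation round, so $I_{s-1} = \suparm(s-1)$ — one checks that either $\suparm(s)$ remains a candidate (decaying only by $\alpha$) or the new superior arm is $I_s$, a freshly-pulled triggered arm whose estimate the triggering criterion relates to a past value of $\supbelief$. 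Chaining over the $\le 4k-3$ rounds of $[s_0, t]$, each step costs a factor $\alpha$ (harmless, $\alpha^{4k} = \Omega(1)$) plus an additive error, and summing these errors as the $\ell_2$-geometric sums appearing in the closed form \eqref{eqn:def-g} of $\g{\cdot}$ — rather than term by term — keeps the total at $O(c_0\sigma\alpha k)$ instead of $O(c_0\sigma\alpha k^{3/2})$. The delicate point, where both the alternating rule and the triggering rule must be used together, is verifying that the superior arm never collapses to a badly stale estimate during this window whenever it is displaced by an exploration pull; this is the heart of the argument and the part requiring the most care.
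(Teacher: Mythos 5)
Your overall architecture matches the paper's: anchor $\supbelief(\curlyT+1) \ge \widehat r_{i^\star}(\curlyT+1) = r_{i^\star}(\curlyT+1)$ at the best arm's last pull, invoke \Cref{lemma:bound_on_gap} to get $t-\curlyT \le 4k-3$, and push a one-step recursion for $\supbelief$ forward over those $O(k)$ rounds at a cost of $O(c_0\alpha\sigma)$ per step. The difficulty is that the step you yourself label ``the main obstacle''---the inequality $\supbelief(s+1)\ge\alpha\,\supbelief(s)-O(c_0\alpha\sigma)$---is essentially the entire content of the paper's proof (its Step 1), and you never establish it: you sketch a case analysis and explicitly defer the exploration-round case as ``the heart of the argument.'' For an exploitation round the paper's derivation is a short direct computation: $I_s=\suparm(s)$ is pulled at $s$, hence is a candidate in \eqref{eqn:def-of-i-star} at round $s+1$ with fresh estimate $\widehat r_{I_s}(s+1)=\mathcal B(\alpha R_{\suparm(s)}(s))=\alpha\bigl(r_{\suparm(s)}(s)+\epsilon_{\suparm(s)}(s)\bigr)$, and two applications of the single-noise good-event bound (one relating $r_{\suparm(s)}(s)$ to $\supbelief(s)$, one for $\epsilon_{\suparm(s)}(s)$) give $\supbelief(s+1)\ge\alpha\,\supbelief(s)-2c_0\alpha\sigma$; the odd-round case is then argued similarly. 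As submitted, your proposal is a correct plan with the central lemma-within-the-lemma left open.

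There is also a concrete error in your Case (b): the triggering criterion at $s_0$ reads $\supbelief(s_0)-\widehat r_{i^\star}(s_0)\le\firstconstant\,\g{s_0-\curlyT}$, which \emph{upper}-bounds $\supbelief(s_0)$ in terms of $\widehat r_{i^\star}(s_0)$; it does not yield the lower bound $\supbelief(s_0)\ge\widehat r_{i^\star}(s_0)-\firstconstant\,\g{s_0-\curlyT}$ that you assert (the left-hand side of the criterion can be very negative and the criterion still holds). Fortunately that anchor is unnecessary: once the recursion is available, iterating it from $\curlyT+1$ to $t$ (at most $4k-3$ steps, so the accumulated additive error is $O(c_0\alpha\sigma k)$ and the multiplicative loss $\alpha^{t-\curlyT-1}\ge\alpha^{4k-4}=\Omega(1)$), combined with $r^\star(t)\le\alpha^{t-\curlyT-1}r_{i^\star}(\curlyT+1)+\g{t-\curlyT-1}$ from \eqref{eqn:property-2}, closes the proof with no case split on whether $i^\star$ is triggered---this is exactly the paper's Step 2. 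Your Case (a) shortcut is correct but likewise not needed.
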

\paragraph{Proof of \Cref{lemma:best_expected_reward}:}
Without loss of generality, suppose that $t$ is even, and as a result $I_t = \suparm(t)$. The proof then proceeds in two steps. In step one, we establish a recursive inequality for the estimated reward of the superior arm. In step two, we show that the estimated reward of the superior arm is in fact close to the expected reward of the best arm. 

\textbf{Step 1: Establish a recursive inequality for $\supbelief(t)$.} 
We first assume that $\suparm(t) = I_{t-2}$ and hence $\supbelief(t) = \alpha\mathcal{B}(\alpha R_{\suparm(t)}(t-2)) = \alpha r_{\suparm(t)}(t-1)$. We also have 
$$
r_{\suparm(t)}(t) = \alpha r_{\suparm(t)}(t-1) + \alpha \epsilon_{\suparm(t)}(t-1) = \supbelief(t) + \alpha \epsilon_{\suparm(t)}(t-1),
$$
which then gives 
\begin{equation}
\label{eqn:mu-gamma-1}
    r_{\suparm(t)}(t) \geq \supbelief(t) - \alpha c_0\sigma 
\end{equation}
by definition of the good event $\goodevent(t)$ (set $t_0 = t$ and $t_1 = t+1$).
Since $I_t = \suparm(t)$, we additionally have
\begin{equation}
\label{eqn:mu-gamma-2}
    \supbelief(t+1) \geq \mathcal{B}(\alpha R_{\suparm(t)}(t)) = \alpha(r_{\suparm(t)}(t) + \epsilon_{\suparm(t)}(t)) \geq \alpha r_{\suparm(t)}(t) - \alpha c_0\sigma
\end{equation}
Combining \eqref{eqn:mu-gamma-1} and \eqref{eqn:mu-gamma-2} gives us the following inequality, which is the desired result.
\begin{equation}
\label{eqn:gamma-comparison}
    \supbelief(t+1) \geq \alpha(\supbelief(t) - \alpha c_0\sigma) - \alpha c_0\sigma \geq \alpha\supbelief(t) - 2c_0\alpha\sigma.
\end{equation}

On the other hand, if $\suparm(t) = I_{t-1}$, we have $r_{\suparm(t)}(t) = \mathcal{B}(\alpha R_{\suparm(t)}(t-1)) = \supbelief(t)$, and combining this with \eqref{eqn:mu-gamma-2} also gives
$$
\supbelief(t+1) \geq \alpha r_{\suparm(t)}(t) - \alpha c_0\sigma = \alpha\supbelief(t) - \alpha c_0\sigma \geq \alpha\supbelief(t) - 2c_0\alpha\sigma.
$$

Overall, we show that $\supbelief(t+1) \geq \alpha\supbelief(t) - 2c_0\alpha\sigma$,
which completes the proof of the first step.

\textbf{Step 2: Show that $\supbelief(t)$ and $r^\star(t)$ are close.}
For simplicity of notation, let $j$ be the best arm at round $t$, so $r^\star(t) = r_j(t)$. Let $\curlyT = \curlyT_j(t)$. Recall that by \Cref{lemma:bound_on_gap}, we have that $t - \curlyT \leq 4k - 3 = O(k).$ First note that 
\begin{equation}
\label{eqn:gamma-star-t}
\begin{aligned}[b]
\supbelief(t) & \geq \alpha\supbelief(t-1) - 2c_0\alpha\sigma && \text{by \eqref{eqn:gamma-comparison}} \\
& \geq \alpha^{t-(\curlyT+1)}\supbelief(\curlyT+1) - 2c_0\sigma(\alpha + \dots + \alpha^{t-(\curlyT+1)}) && \text{by applying \eqref{eqn:gamma-comparison} recursively}\\
& \geq \alpha^{t-(\curlyT+1)}\supbelief(\curlyT+1) - 2c_0\sigma(\alpha + \dots + \alpha^{4k-4}) && \text{since } t - \curlyT \leq 4k - 3 \\
& \geq \alpha^{t-\curlyT-1}\mathcal{B}(\alpha R_j(\curlyT)) - 2c_0\sigma\alpha(4k-4) && \text{since } \supbelief(\curlyT+1) \geq \widehat{r_j}(\curlyT+1) = \mathcal{B}(\alpha R_j(\curlyT)) \\
& = \alpha^{t-\curlyT-1} r_j(\curlyT+1) - O(c_0\sigma\alpha k)
\end{aligned}
\end{equation}
We also note that 
\begin{equation}
\label{eqn:mu-star-t}
\begin{aligned}
 r^\star(t) =  r_j(t) & \leq \alpha^{t-\curlyT-1} r_j(\curlyT+1) + \g{t-\curlyT-1} && \text{by \eqref{eqn:property-2} of \Cref{lemma:well-behaved-properties}} \\
& = \alpha^{t-\curlyT-1} r_j(\curlyT+1) + c_0\sigma\sqrt{\alpha^2 + \dots + \alpha^{2(t-\curlyT-1)}} \\
& \leq \alpha^{t-\curlyT-1} r_j(\curlyT+1) + c_0\sigma\alpha\sqrt{t-\curlyT} \\
& = \alpha^{t-\curlyT-1} r_j(\curlyT+1) + O(c_0\sigma\alpha\sqrt{k}) && \text{since } t-\curlyT \leq 4k - 3.
\end{aligned}
\end{equation}

Combining \eqref{eqn:gamma-star-t} and \eqref{eqn:mu-star-t} then gives
$
r^\star(t) - \supbelief(t) \leq O(c_0\sigma\alpha k).
$
The proof works in a similar fashion for the case when $t$ is odd. 
$\blacksquare$

\subsection{Step 3: Distributed regret analysis.}
\label{apdx:missing_proof_main_thm_step3}

Recall from \Cref{def:contribution} that the distributed regret of arm $i$ at round $t$ is defined as 
\begin{equation}
\label{eqn:dist_regret-2}
    D_i(t) = \left(\dfrac{\Delta r_i(\curlyT_i(t))}{1+\alpha^2+\dots+\alpha^{2(\Delta \curlyT_i(t)-1)}}\right)\alpha^{2(t-\curlyT_i(t))} .
\end{equation}
Before proceeding with the proof of Lemma~\ref{lemma:deterministic}, we first establish the following two claims. In \Cref{claim:nominator} and \Cref{claim:denominator}, we respectively bound the nominator and denominator of $D_i(t)$, conditioning on the good event $\goodevent(t)$ and the value of $\Delta r_i(t)$.

\begin{claim} 
\label{claim:nominator}
Suppose we apply \alg\xspace to the non-stationary MAB problem with $\alpha \in [\threshold, 1)$ and $k \leq \mathcal{K}(\alpha)$. Given $\goodevent(t)$, for any arm $i \in [k]$, let $\eta \triangleq \Delta r_i(t) > 0$ for time $\curlyT_i(t) \leq t < \tnext_i(t)$, we have that the nominator of \eqref{eqn:dist_regret-2} satisfies:
$
\alpha^{2(t-\curlyT_i(t))}\Delta r_i(\curlyT_i(t)) \leq O(\eta + c_0\sigma\alpha\sqrt{k})\,.
$
\end{claim}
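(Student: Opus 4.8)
The plan is to trade the ``old'' instantaneous regret $\Delta r_i(\curlyT_i(t))$ sitting in the nominator for the current one, $\eta=\Delta r_i(t)$, and then check that the discrepancy fits into the $c_0\sigma\alpha\sqrt k$ budget. Write $\curlyT=\curlyT_i(t)$ and $m=t-\curlyT\ge 0$. Since $\curlyT$ and $t$ lie in the epoch containing $t$, inequality \eqref{eqn:property-4} of \Cref{lemma:well-behaved-properties} applies with $t_0=\curlyT$, $t_1=t$ and gives $\alpha^{m}\Delta r_i(\curlyT)\le \Delta r_i(t)+2\g{m}=\eta+2\g{m}$. Because $\Delta r_i(\curlyT)\ge 0$ and $\alpha\in(0,1)$, multiplying by $\alpha^{m}\le 1$ yields $\alpha^{2m}\Delta r_i(\curlyT)\le\alpha^{m}\bigl(\eta+2\g{m}\bigr)\le\eta+2\g{m}$. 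Thus it suffices to show $\g{m}=O(c_0\sigma\alpha\sqrt k)$, i.e.\ to bound $\g{\Delta t}=c_0\sigma\sqrt{\alpha^2+\dots+\alpha^{2\Delta t}}$ at the inter-pull gap $\Delta t=m=t-\curlyT_i(t)$.

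I would then split on the size of $m$. If $m\le 4k-3$, each of the (at most $4k$) terms of $\alpha^2+\dots+\alpha^{2m}$ is at most $\alpha^2$, so $\g{m}\le c_0\sigma\sqrt{4k\alpha^2}=2c_0\sigma\alpha\sqrt k$; substituting back gives $\alpha^{2m}\Delta r_i(\curlyT)\le\eta+4c_0\sigma\alpha\sqrt k=O(\eta+c_0\sigma\alpha\sqrt k)$, which is the claim. For $m>4k-3$ the idea is to show this regime is essentially vacuous for an arm carrying positive distributed regret: since arm $i$ is pulled at $\curlyT$, it is there either the superior arm or a triggered arm, so by \Cref{lemma:best_expected_reward} and the concentration \eqref{eqn:property-2} guaranteed by $\goodevent(t)$, its estimate $\widehat r_i$ starts close to $\supbelief$, which is in turn close to $r^\star$; since $\widehat r_i$ then decays by a factor $\alpha$ each subsequent round while $\supbelief$ is refreshed, the triggering test $\supbelief(t')-\widehat r_i(t')\le c_1\g{t'-\curlyT}$ fires again within $O(k)$ rounds, after which the alternating rule re-pulls arm $i$ within $2(k-1)$ more rounds (exactly as in the proof of \Cref{lemma:bound_on_gap}), forcing $m=O(k)$ after all. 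Alternatively, if $m$ is so large that the surviving factor $\alpha^{2m}$ — using $k\le\mathcal K(\alpha)$, i.e.\ $\alpha^{2k-1}\ge\secondconstant$ — already makes $\alpha^{2m}\Delta r_i(\curlyT)\le 2\fancyR\,\alpha^{2m}$ no larger than $c_0\sigma\alpha\sqrt k$, there is nothing to prove; and in the degenerate branch where arm $i$ is never re-triggered precisely because $\supbelief$, hence $r^\star$, stays large, $\eta=\Delta r_i(t)$ is itself $\Omega(1)$ and dominates $\alpha^{2m}\Delta r_i(\curlyT)\le 2\fancyR$.

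The main obstacle is this large-gap case. When $\alpha$ is close to one relative to $k$ (which is allowed, since $\mathcal K(\alpha)\to\infty$ as $\alpha\to1$), the unconditional bound $\g{m}\le c_0\sigma\alpha/\sqrt{1-\alpha^2}$ can exceed $c_0\sigma\alpha\sqrt k$, so the short-gap estimate alone does not close the argument; one genuinely has to argue that arm $i$ does not remain unpulled for long, which amounts to upgrading \Cref{lemma:bound_on_gap} from ``the best arm at $t$'' to ``any arm with $\Delta r_i(\curlyT_i(t))>0$'' by combining all three implications of the good event with the triggering and alternating mechanics. Everything else — the reduction in the first paragraph and the $m\le 4k-3$ estimate — is routine, so the proof hinges on pinning down the inter-pull gap.
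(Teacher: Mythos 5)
Your reduction $\alpha^{2m}\Delta r_i(\curlyT)\le\alpha^{m}(\eta+2\g{m})\le\eta+2\g{m}$ and your handling of the short-gap regime $m\le 4k-3$ are correct, and they match the paper's Case 1 (the paper splits on $\trig-\curlyT\le t-\trig$ versus $\trig-\curlyT>t-\trig$, where $\trig$ is the first triggering time of arm $i$ after the pull at $\curlyT$; since $t-\trig\le 2(k-1)$ always, the first branch is exactly your $m=O(k)$ case). The gap is in the large-gap regime, and the route you propose there cannot be repaired: upgrading \Cref{lemma:bound_on_gap} from ``the best arm at round $t$'' to ``any arm with positive instantaneous regret'' is a false statement, not a missing lemma. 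A persistently sub-optimal arm is precisely the arm \alg\xspace is designed \emph{not} to re-trigger: if $r^\star$ stays near $\fancyR$ while $\widehat r_i$ decays geometrically to zero, then $\supbelief(t')-\widehat r_i(t')$ remains of order $\fancyR$, which exceeds the saturating threshold $\firstconstant\,\g{t'-\curlyT}\le\firstconstant c_0\sigma\alpha/\sqrt{1-\alpha^2}$ whenever $\sigma$ is small, so the arm can stay untriggered for the rest of the epoch. Your two fallback branches do not cover what remains: there are rounds $t\ge\trig$ with $\trig-\curlyT\gg k$ at which $\alpha^{2m}$ is still a constant (so $2\fancyR\alpha^{2m}\not\le c_0\sigma\alpha\sqrt k$) while $\eta$ is of order $c_0\sigma/\sqrt{1-\alpha^2}$, hence not $\Omega(1)$.

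The missing idea is that in the large-gap case one should not try to prove $\g{m}=O(c_0\sigma\alpha\sqrt k)$ at all; the excess must be absorbed into the $O(\eta)$ term. The paper does this by exploiting the \emph{failure} of the triggering criterion at $t'=\trig-1$: since arm $i$ was not yet triggered there, $\supbelief(t')-\widehat r_i(t')>\firstconstant\,\g{t'-\curlyT}$, and combining this with $\supbelief(t')\le r^\star(t')+2c_0\sigma\alpha$ and \eqref{eqn:property-2} yields $(\firstconstant-1)\g{t'-\curlyT}\le\Delta r_i(t')+O(c_0\sigma\alpha)$ --- a long wait certifies that the arm's regret was already large one step before it was finally triggered. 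Because $t-t'\le 2k-1$ and $k\le\mathcal K(\alpha)$ forces $\alpha^{t-t'}\ge\secondconstant$, the quantity $\Delta r_i(t')$ transfers to $\Delta r_i(t)=\eta$ up to the constant factor $(\secondconstant-\tfrac1{\firstconstantminus})^{-1}$ and an additive $O(c_0\sigma\alpha)$; chaining $\alpha^{t'-\curlyT}\Delta r_i(\curlyT)\le\Delta r_i(t')+2\g{t'-\curlyT}$ through the same two inequalities then gives $\alpha^{t-\curlyT}\Delta r_i(\curlyT)\le O(\eta+c_0\sigma\alpha)$. This is where the specific constants $\firstconstant$ and $\secondconstant$ in the algorithm and in $\mathcal K(\alpha)$ are actually used, and it is the step your proposal needs in place of the attempted gap bound.
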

\paragraph{Proof of \Cref{claim:nominator}:}
For simplicity of notation, let $\curlyT = \curlyT_i(t)$ and $\tnext = \tnext_i(t)$. Let $\trig$ be the first round at which arm $i$ gets triggered after it gets pulled at round $\tau$. We divide our proof into two cases.

\textbf{Case 1: $\trig - \tau \leq t - \trig$.}
    If $\trig - \tau \leq t - \trig \leq O(k)$, we must have $t - \tau = O(k)$. Then by \eqref{eqn:property-2} of \Cref{lemma:well-behaved-properties}, we have 
    \begin{align*}
    & |\Delta r_i(t) - \alpha^{t-\tau}\Delta r_i(\tau)| \leq 2\g{t-\tau} 
    \leq 2c_0\sigma\alpha\sqrt{t-\tau} \leq O(c_0\sigma\alpha\sqrt{k})
    \\
    \Rightarrow \hspace{2pt} & \alpha^{t-\tau}\Delta r_i(\tau) \leq \eta + O(c_0\sigma\alpha\sqrt{k})
    \\ 
    \Rightarrow \hspace{2pt} & \alpha^{2(t-\tau)}\Delta r_i(\tau) \leq \alpha^{t-\tau}O(\eta + c_0\sigma\alpha\sqrt{k}) \leq O(\eta + c_0\sigma\alpha\sqrt{k}).
    \end{align*}

\textbf{Case 2: $\trig - \tau > t - \trig$.}
Consider time $t$ such that $\tau \leq t < \tau^{\mathrm{next}}$. Then, we have 
    \begin{equation}
    \begin{aligned}[b]
        \supbelief(t) 
        & = \max\left\{\mathcal{B}(\alpha R_{I_{t-1}}(t-1)), \alpha\mathcal{B}(\alpha R_{I_{t-2}}(t-2))\right\}
        = \max\left\{r_{I_{t-1}}(t), \alpha r_{I_{t-2}}(t-1)\right\}
        \\
        & \leq \max\left\{r^\star(t), \alpha r^\star(t-1)\right\} 
        \quad \leq \max\left\{r^\star(t), r^\star(t) + 2c_0\sigma\sqrt{\alpha^2}\right\}
        \quad \text{by \eqref{eqn:property-3} of \Cref{lemma:well-behaved-properties}}
        \\
        & = r^\star(t) + 2c_0\sigma\alpha. 
    \label{eqn:well-behaved-property}
    \end{aligned}
    \end{equation}
    Let $t' = \trig - 1$ denote the round right before the round arm $i$ gets triggered. Then since at round $t'$ the arm has not been triggered, we have  
    \begin{equation}
    \begin{aligned}[b]
    \firstconstant \g{t'-\tau} & \leq \supbelief(t') - \widehat{r}_{i}(t') \\
    & \leq  r^\star(t') + 2c_0\sigma\alpha - \alpha^{t'-\tau-1} r_i(\tau+1) 
    && \text{by \eqref{eqn:well-behaved-property}}
    \\
    & \leq \Delta r_i(t') + 2c_0\sigma\alpha + \g{t'-(\tau+1)} && \text{by \eqref{eqn:property-2} of \Cref{lemma:well-behaved-properties}}  \\
    & \leq \Delta r_i(t') + 2c_0\sigma\alpha + \g{t'-\tau}
    \label{eqn:susp_bound_1}
    \end{aligned}
    \end{equation}
    This then gives 
    \begin{equation}
        \g{t'-\tau} \leq \frac{1}{\firstconstantminus}\left(\Delta r_i(t') + O(c_0\sigma\alpha)\right)
    \label{eqn:susp_bound_2}
    \end{equation}
    By \eqref{eqn:property-2} of \Cref{lemma:well-behaved-properties} and Equation \eqref{eqn:susp_bound_2}, we also have
    \begin{equation}
    \begin{aligned}[b]
    \alpha^{t-t'}\Delta r_i(t') & \leq \Delta r_i(t) + 2\g{t-t'}
    \leq \Delta r_i(t) + \frac{2}{\firstconstantminus}\Delta r_i(t') + O(c_0\alpha\sigma) 
    \end{aligned}
    \end{equation}
    We also have 
    \begin{equation}
        \alpha^{2k-1}\Delta r_i(t') \leq \alpha^{t-t'}\Delta r_i(t'),
    \end{equation}
    since $t - t' = t - \trig + 1 \leq 2(k-1)+1 = 2k - 1$. 
    Now since $k \leq  \mathcal{K}(\alpha) = \frac{1}{2}(\log(\secondconstant)/\log\alpha+1)$, we have $\alpha^{2k-1} \geq \secondconstant$, and hence
    \begin{equation}
    \label{eqn:susp_bound_3}
    \left(\secondconstant - \frac{1}{\firstconstantminus}\right)\Delta r_i(t') \leq \Delta r_i(t) + O(c_0\alpha\sigma)
    \end{equation} 
    Now by \eqref{eqn:property-2} of \Cref{lemma:well-behaved-properties}, we also have
    \begin{equation}
    \begin{aligned}[b]
    \alpha^{t'-\tau}\Delta r_i(\tau) & \leq \Delta r_i(t')+ 2\g{t'-\tau} 
    \leq \Delta r_i(t') + \frac{2}{\firstconstantminus}\Delta r_i(t') + O(c_0\alpha\sigma),
    \end{aligned}
    \end{equation}
    where the last inequality follows from \eqref{eqn:susp_bound_2} and $t - t' \leq t' - \tau$.
    By multiplying the last equation by $\alpha^{t-t'}$, and by applying Equation \eqref{eqn:susp_bound_3}, we then have
    \begin{equation}
    \begin{aligned}[b]
    \alpha^{t-\tau}\Delta r_i(\tau) 
    & \leq \alpha^{t-t'}\left(\Delta r_i(t') + \frac{2}{\firstconstantminus}\Delta r_i(t') + O(c_0\alpha\sigma)\right) 
    \leq (1 + \frac{2}{\firstconstantminus})\alpha^{t-t'}\Delta r_i(t') + O(c_0\alpha^{t-t'+1}\sigma) 
    \\
    & \leq (1 + \frac{2}{\firstconstantminus})\dfrac{\Delta r_i(t) + O(c_0\alpha\sigma)}{\secondconstant - \frac{1}{\firstconstantminus}} + O(c_0\alpha^{t-t'+1}\sigma) 
    = O(\Delta r_i(t) + c_0\alpha\sigma) 
    = O(\eta + c_0\alpha\sigma) 
    \end{aligned}
    \end{equation}
    Since $t-\tau \geq 0$, we then have $\alpha^{2(t-\tau)}\Delta r_i(\tau) \leq O(\eta + c_0\alpha\sigma)$.

In summary of both cases, we have $\alpha^{2(t-\tau)}\Delta r_i(\tau) \leq O(\eta + c_0\sigma\alpha\sqrt{k})$. $\blacksquare$

\begin{claim}
\label{claim:denominator}
Suppose we apply \alg\xspace to the non-stationary MAB problem with $\alpha \in [\threshold, 1)$ and $k \leq \mathcal{K}(\alpha)$. 
Given $\goodevent(t)$, for any arm $i \in [k]$, let $\eta \triangleq \Delta r_i(t) > 0$ for time $\tau \leq t < \tau^{\text{next}}$, the denominator of \eqref{eqn:dist_regret} satisfies
\[
1+\alpha^2+\dots+\alpha^{2(\Delta \tau-1)} 
= \dfrac{1-\alpha^{2\Delta \tau}}{1-\alpha^2} 
\geq \max\{\Omega(\dfrac{\eta^2}{c_0^2\sigma^2\alpha^2 k^2}), 1\}.
\]
\end{claim}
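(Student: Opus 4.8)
The plan is to reduce the claim to a lower bound on the inter-pull gap $\Delta\curlyT := \tnext - \curlyT$ of arm $i$ around round $t$ (writing $\curlyT = \curlyT_i(t)$, $\tnext = \tnext_i(t)$), and to extract that bound from the triggering mechanism of \alg\xspace. The starting point is the identity, immediate from \eqref{eqn:def-g}, that $\g{\Delta\curlyT}^2 = c_0^2\sigma^2(\alpha^2+\dots+\alpha^{2\Delta\curlyT}) = c_0^2\sigma^2\alpha^2(1+\alpha^2+\dots+\alpha^{2(\Delta\curlyT-1)})$, so that $1+\alpha^2+\dots+\alpha^{2(\Delta\curlyT-1)} = \g{\Delta\curlyT}^2/(c_0^2\sigma^2\alpha^2)$; since this sum also trivially contains the term $\alpha^0 = 1$, it suffices to prove $\g{\Delta\curlyT} \ge \Omega(\eta/k)$ on $\goodevent(t)$, where $\eta = \Delta r_i(t)$. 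Intuitively, \alg\xspace re-pulls arm $i$ at $\tnext$ only a bounded number of rounds after it is triggered, and the triggering criterion fires only once the confidence radius $\firstconstant\,\g{\cdot}$ has grown large enough to absorb $\supbelief - \widehat r_i$, which the good-event implications show is comparable to $\Delta r_i$.

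First I would dispose of the easy regime. Let $M \ge 0$ be the largest integer with $\alpha^{2M} > \tfrac12$. If $\Delta\curlyT > M$ then $\alpha^{2\Delta\curlyT} \le \tfrac12$, so $\g{\Delta\curlyT}^2 = c_0^2\sigma^2\alpha^2\cdot\tfrac{1-\alpha^{2\Delta\curlyT}}{1-\alpha^2} \ge \tfrac{c_0^2\sigma^2\alpha^2}{2(1-\alpha^2)}$, which is $\Omega(1)$ because $\alpha\in[\threshold,1)$, $c_0\ge 1$, and $\sigma = \Omega(\sqrt{1-\alpha})$ in the regime of interest (see the discussion after \Cref{remark:uniform_approx}); as $\eta \le 2\fancyR$ and $k\ge 1$, this is already $\ge \Omega(\eta/k)$. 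So from now on I assume $\Delta\curlyT \le M$, and note $\alpha^{-\Delta\curlyT} \le \alpha^{-M} < \sqrt2$.

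For the main regime $\Delta\curlyT \le M$, I split on how arm $i$ gets re-pulled at $\tnext$. If $i = \suparm(\tnext)$ --- which forces $\tnext - \curlyT \le 2$, since the superior arm is one of the two most recent pulls --- then $\supbelief(\tnext) = \widehat r_i(\tnext)$ differs from $r_i(\tnext)$ by $O(c_0\sigma\alpha)$, and by \Cref{lemma:best_expected_reward} we have $\supbelief(\tnext) \ge r^\star(\tnext) - O(c_0\sigma\alpha k)$, so $\Delta r_i(\tnext) \le O(c_0\sigma\alpha k)$; propagating back to $t$ through a window of length at most $2$ via \eqref{eqn:property-4} (and using $\alpha \ge \threshold$) gives $\eta \le O(c_0\sigma\alpha k) = O(k\,\g{\Delta\curlyT})$, since $\g{\Delta\curlyT} \ge \g{1} = c_0\sigma\alpha$. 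Otherwise arm $i$ is pulled at $\tnext$ as a triggered arm, so it has a triggering time $\trig$ with $\curlyT < \trig \le \tnext$ and $\tnext - \trig \le 2(k-1)$ (there are at most $k-1$ triggered arms and \alg\xspace alternates). The triggering inequality at round $\trig$ is $\supbelief(\trig) - \widehat r_i(\trig) \le \firstconstant\,\g{\trig - \curlyT}$; combining $\supbelief(\trig) \ge r^\star(\trig) - O(c_0\sigma\alpha k)$ (\Cref{lemma:best_expected_reward}) with $\widehat r_i(\trig) = \alpha^{\trig-\curlyT-1} r_i(\curlyT+1) \le r_i(\trig) + \g{\trig-\curlyT}$ (\eqref{eqn:property-2} of \Cref{lemma:well-behaved-properties}), and absorbing the $O(c_0\sigma\alpha k) \le O(k)\,\g{\trig-\curlyT}$ term, yields $\Delta r_i(\trig) \le O(k\,\g{\trig-\curlyT}) \le O(k\,\g{\Delta\curlyT})$.

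In the triggered case it then remains to transfer this bound to round $t$ using \eqref{eqn:property-4}. If $t \ge \trig$ then $t - \trig \le \tnext - \trig \le 2(k-1)$, so $\eta \le \alpha^{t-\trig}\Delta r_i(\trig) + 2\g{t-\trig} \le \Delta r_i(\trig) + 2\g{2k} \le O(k\,\g{\Delta\curlyT})$, using $\g{2k} \le c_0\sigma\alpha\sqrt{2k} \le \sqrt{2k}\,\g{\Delta\curlyT}$. If $t < \trig$ then $\trig - t \le \trig - \curlyT \le \Delta\curlyT \le M$, so from $\alpha^{\trig-t}\eta \le \Delta r_i(\trig) + 2\g{\trig-t}$ and $\alpha^{-\Delta\curlyT} < \sqrt2$ I again get $\eta \le \sqrt2\,(O(k\,\g{\Delta\curlyT}) + 2\g{\Delta\curlyT}) = O(k\,\g{\Delta\curlyT})$. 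Hence in every case $\g{\Delta\curlyT} \ge \Omega(\eta/k)$, and plugging into the identity of the first paragraph gives $1+\alpha^2+\dots+\alpha^{2(\Delta\curlyT-1)} = \g{\Delta\curlyT}^2/(c_0^2\sigma^2\alpha^2) \ge \Omega\bigl(\eta^2/(c_0^2\sigma^2\alpha^2 k^2)\bigr)$, which with the trivial bound $\ge 1$ is exactly the claim. (The good-event implications \Cref{lemma:well-behaved-properties,lemma:bound_on_gap,lemma:best_expected_reward} are invoked at rounds $\trig$ and $t$, both inside the current epoch and therefore within the window on which $\goodevent(t)$ guarantees the required concentration; this is also the only place the hypothesis $k \le \mathcal{K}(\alpha)$ is used.) I expect the delicate step to be the case $t < \trig$, where $\eta$ may overshoot the regret at the triggering time and one must control $\alpha^{-(\trig-t)}$ --- which is exactly why the argument first splits on whether $\Delta\curlyT$ exceeds the threshold $M$ (equivalently, whether the gap is already long enough to make $\g{\Delta\curlyT}$ a constant).
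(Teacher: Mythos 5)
Your proof is correct, and it runs on the same engine as the paper's: the crux in both arguments is that arm $i$ can only be re-pulled after its triggering criterion fires, and that criterion can only fire once the confidence radius $\firstconstant\,\g{\trig-\curlyT}$ has grown to absorb a quantity that \Cref{lemma:best_expected_reward} and \Cref{lemma:well-behaved-properties} show is $\gtrsim \eta - O(k\,\g{\cdot})$, forcing $\g{\Delta\curlyT} \geq \Omega(\eta/k)$. Where you differ is in the case decomposition. The paper splits on whether $\alpha^{t-\curlyT}\Delta r_i(\curlyT) \leq \eta/2$: in that branch \eqref{eqn:property-4} alone forces $\g{t-\curlyT} \gtrsim \eta$ (a stronger bound with no $k$), and only the complementary branch goes through the triggering time. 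You instead split on whether $\Delta\curlyT$ exceeds the threshold $M$ (so that $\g{\Delta\curlyT}$ is already a constant, using $\sigma = \Omega(\sqrt{1-\alpha})$ from the discussion after \Cref{remark:uniform_approx}) and on how arm $i$ is re-pulled at $\tnext$. Your version is somewhat longer but buys two things the paper elides: it explicitly covers the case where arm $i$ returns as the superior arm at $\tnext$ (so that no triggering time need exist between $\curlyT$ and $\tnext$), and it carefully transfers the bound from round $\trig$ to round $t$ in both directions $t \geq \trig$ and $t < \trig$ — the paper's Case 2 derives its key inequality \eqref{eqn:bound_for_gamma} at round $t$ and then applies it at round $\trig$ without comment, which is exactly the step your $\alpha^{-(\trig-t)} < \sqrt{2}$ control is designed to justify. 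The paper's split, in exchange, is shorter and in its first case yields a $k$-free bound.
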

\paragraph{Proof of \Cref{claim:denominator}}
The first equality follows from the geometric sum formula. Since $1 + \alpha^2 + \dots + \alpha^{2(\Delta \tau-1)} \geq 1$, it suffices to show that it is also lower bounded by $\Omega(\dfrac{\eta^2}{c_0^2\sigma^2\alpha^2 k^2})$. Let $\tau \leq t < \tau^{\mathrm{next}}$ and recall that $\eta = \Delta r_i(t)$. We divide our proof into two cases. 

\textbf{Case 1: $\alpha^{t-\tau}\Delta  r_i(\tau) \leq \eta/2$.} Then by \eqref{eqn:property-2} of \Cref{lemma:well-behaved-properties},
    we have  
    $$
    \begin{aligned}
    &|\eta - \alpha^{t-\tau}\Delta r_i(\tau)|=|\Delta r_i(t)-\alpha^{t-\tau}\Delta r_i(\tau)| \leq 2\g{t-\tau}
    \\
    \Rightarrow & 2\g{t-\tau} \geq \eta/2  
    \quad \Rightarrow \quad  2c_0\sigma\alpha\sqrt{(1-\alpha^{2(t-\tau)})/(1-\alpha^2)} \geq \eta/2 
    \\
    \Rightarrow & (1-\alpha^{2(t-\tau)})/(1-\alpha^2) \geq \Omega(\dfrac{\eta^2}{c_0^2\sigma^2\alpha^2})
    \end{aligned}
    $$
Since $\Delta \tau \geq t - \tau$, we have $\dfrac{1-\alpha^{2\Delta \tau}}{1-\alpha^2} \geq \dfrac{1-\alpha^{2(t-\tau)}}{1-\alpha^2} \geq \Omega(\dfrac{\eta^2}{c_0^2\sigma^2\alpha^2})$. 

\textbf{Case 2: $\alpha^{t-\tau}\Delta  r_i(\tau) > \eta/2$.}
We would show that $\dfrac{1-\alpha^{2\Delta \tau}}{1-\alpha^2} \geq \Omega(\dfrac{\eta^2}{c_0^2\sigma^2\alpha^2 k^2})$. 
We have
\begin{equation}
\label{eqn:bound_for_gamma}
\begin{aligned}[b]
    \supbelief(t) - \widehat{r}_{i}(t) & = \supbelief(t) - \alpha^{t-\tau} r_i(\tau) \\
     & \geq ( r^\star(t) - O(c_0\sigma\alpha k)) - \alpha^{t-\tau}( r^\star(\tau) - \Delta  r_i(\tau)) && {\text{by \Cref{lemma:best_expected_reward}}} \\
    & = \alpha^{t-\tau}\Delta  r_i(\tau) + \left( r^\star(t) - \alpha^{t'-\tau} r^\star(\tau)\right) - O(c_0\sigma\alpha k) \\
    & \geq \dfrac{\eta}{2} - \left(2\g{t-\tau} + O(c_0\sigma\alpha k)\right) && {\text{by \eqref{eqn:property-3} of \Cref{lemma:well-behaved-properties}}} \\
    & \geq \dfrac{\eta}{2} -
    O\left(k\g{t-\tau}\right).
\end{aligned}
\end{equation} 
Now consider $\trig$, the round
at which arm $i$ gets triggered. 
By the triggering criteria, we have
\begin{equation}
\begin{aligned}[b]
    \supbelief(\trig) - \widehat{r}_{i}(\trig) \leq \firstconstant \g{\trig-\tau}
    & \Rightarrow  \hspace{5pt} \dfrac{\eta}{2} -
     O\left(k \g{\trig-\tau}\right) \leq  \firstconstant \g{\trig-\tau} &&   \text{by \eqref{eqn:bound_for_gamma}}
     \\
    & \Rightarrow \hspace{5pt} O\left(k \g{\trig-\tau}\right) \geq \dfrac{\eta}{2} 
    \\
    & \Rightarrow \hspace{5pt} \dfrac{\alpha^2-\alpha^{2(\trig-\tau+1)}}{1-\alpha^2} \geq O(\dfrac{\eta^2}{c_0^2\sigma^2 k^2}) && \text{by \eqref{eqn:def-g}}
    \\
    & \Rightarrow \hspace{5pt} \dfrac{1-\alpha^{2(\trig-\tau)}}{1-\alpha^2} \geq O(\dfrac{\eta^2}{c_0^2\alpha^2\sigma^2 k^2})
\end{aligned}
\end{equation}
Since we must have $\Delta \tau > \trig - \tau$, we have
\begin{equation*}
    \dfrac{1-\alpha^{2\Delta \tau}}{1-\alpha^2} \geq \dfrac{1-\alpha^{2(\trig-\tau)}}{1-\alpha^2} \geq  \Omega(\dfrac{\eta^2}{c_0^2\alpha^2\sigma^2 k^2})\,. \quad \quad  \blacksquare
\end{equation*}

Having showed \Cref{claim:nominator} and \Cref{claim:denominator}, we now formally prove \Cref{lemma:deterministic}.

\paragraph{Proof of Lemma~\ref{lemma:deterministic}:} 
Fix arm $i \in [k]$ and round $t$. For simplicity of notation, let $\tau = \tau_i(t)$, $\tau^{\mathrm{next}} = \tau^{\mathrm{next}}_i(t)$ and $\Delta \tau = \Delta \tau_i(t)$. Let us write 
$
\mathbb{E}[D_i(t)\mathbbm{1}_{\goodevent(t)}] = \mathbb{E}[D_i(t)\mathbbm{1}_{\goodevent(t)}\mathbbm{1}_{\{\Delta r_i(t) = 0\}}] + \mathbb{E}[D_i(t)\mathbbm{1}_{\goodevent(t)}\mathbbm{1}_{\{\Delta r_i(t) > 0\}}], 
$
and in the following analysis, we will bound the two terms on the right-hand side respectively. 
\paragraph{First term (Arm $i$ is optimal at round $t$).} 
We first rewrite the first term as
$$
\begin{aligned}
\mathbb{E}[D_i(t)\mathbbm{1}_{\goodevent(t)}\mathbbm{1}_{\{\Delta r_i(t) = 0\}}] 
& = 
\mathbb{E}[D_i(t) | \goodevent(t) \cap \{\Delta r_i(t) = 0\}] 
\cdot  
\mathbb{P}[\goodevent(t) \cap \{\Delta r_i(t) = 0\}].
\end{aligned}
$$
Note that if $\goodevent(t)$ takes place and $\Delta r_i(t) = 0$, we have
\begin{equation}
\label{eqn:case-1-bound-1}
\alpha^{t-\tau}\Delta r_i(\tau)  = \left|\alpha^{t-\tau}\Delta r_i(\tau) - \Delta r_i(t)\right| \leq 2\g{t-\tau}\,.
\end{equation}
by \eqref{eqn:property-4} in \Cref{lemma:well-behaved-properties}.
By \Cref{def:contribution}, we have
\begin{equation}
\label{eqn:C_i_t_case_1_bound}
\begin{aligned}[b]
D_i(t) 
& = \left(\dfrac{\Delta r_i(\tau)}{1+\alpha^2+\dots+\alpha^{2(\Delta \tau-1)}}\right)\alpha^{2(t-\tau)} 
= \left(\dfrac{\alpha^{t-\tau}\Delta r_i(\tau)}{\frac{1}{\alpha^2}(\alpha^2 + \dots + \alpha^{2\Delta\tau})}\right)\alpha^{t-\tau} 
\\
& \leq \left(
    \dfrac
    {2c_0\sigma
    \sqrt{\alpha^2 + \dots + \alpha^{2(t-\tau)}}}
    {\alpha^2 + \dots + \alpha^{2\Delta\tau}}
    \right)
    \alpha^{t-\tau+2} && \text{by \eqref{eqn:def-g} and \eqref{eqn:case-1-bound-1}}
\\
& \leq \left(
    \dfrac
    {2c_0\sigma}
    {\sqrt{\alpha^2 + \dots + \alpha^{2(t-\tau)}}}
    \right)
    \alpha^{t-\tau+2} 
&& \text{since } 1 \leq t-\tau \leq \Delta \tau
\end{aligned}
\end{equation}
That is, 
\begin{equation}
\label{eqn:case-1-bound-on-exp}
\mathbb{E}[D_i(t) | \goodevent(t) \cap \{\Delta r_i(t) = 0\}] \leq \left(
    \dfrac
    {2c_0\sigma}
    {\sqrt{\alpha^2 + \dots + \alpha^{2(t-\tau)}}}
    \right)
    \alpha^{t-\tau+2}.
\end{equation}

Now, let $j$ be the optimal arm at time $\tau$. If $\goodevent(t)$ takes place and $\Delta r_i(t) = 0$, we have
\begin{equation*}
\Delta r_j(t)  = \left|\alpha^{t-\tau}\Delta r_j(\tau) - \Delta r_j(t)\right| \leq 2\g{t-\tau}\,. 
\end{equation*}
by \eqref{eqn:property-4} in \Cref{lemma:well-behaved-properties}.
Hence we have
\begin{equation}
\label{eqn:case-1-bound-on-P}
\begin{aligned}
\mathbb{P}[\goodevent(t) \cap \{\Delta r_i(t) = 0\}] & \leq \mathbb{P}[\Delta r_j(t) \leq 2\g{t-\tau}, \text{where } j = \argmax_{j'}  r_{j'}(\tau)] \\
& \leq \mathbb{P}\left[\exists j \neq i \text{ such that } \Delta  r_j(t) \leq 2\g{t-\tau} \right].
\end{aligned}
\end{equation}
Recall from the discussion in \Cref{sec:stationary_dist} and \Cref{remark:uniform_approx} 
that the steady state distribution of $r_i(t)$ can be reasonably approximated by a uniform distribution, and \eqref{eqn:steady-state-const-bounds} states that the PDF of the steady state distribution $f(x)$ satisfies $\rho_- \leq f(x) \leq \rho_+$ for all $x \in [-\fancyR, \fancyR]$, where $\rho_-, \rho_+$ are constants. Now let $ r'_1, \dots,  r'_k$ be $k$ i.i.d. uniform$(-1/2\rho_+, 1/2\rho_+)$ random variables. We have
\begin{equation}
\label{eqn:case-1-bound-on-P-2}
\begin{aligned}[b]
& \mathbb{P}\left[\exists j \neq i \text{ such that } \Delta  r_j(t) \leq 2\g{t-\tau} \right] \\
=~~ & \mathbb{P}\left[\text{the two best arms of } \{ r_1(t), \dots,  r_k(t)\} \text{ are within } 2\g{t-\tau} \right] \\
\leq~~ & \mathbb{P}\left[\text{the two best arms of } \{ r'_1, \dots,  r'_k\} \text{ are within } 2\g{t-\tau} \right] 
= O(k \cdot \g{t-\tau})\footnotemark 
\end{aligned}
\end{equation}
\footnotetext{This follows from the fact that if $X_{(1)} \leq X_{(2)} \leq \dots \leq X_{(k)}$ are the order statistics of $k$ i.i.d. uniform$(0,1)$ random variables, then $X_{(k)} - X_{(k-1)}$ follows the Beta$(1, k)$ distribution with CDF $F(x) = 1-(1-x)^k$ for $x \in [0, 1]$ (see, e.g., \cite{David2003}).}
Combining \eqref{eqn:case-1-bound-on-P} and \eqref{eqn:case-1-bound-on-P-2} thus gives 
\begin{equation}
\label{eqn:bound-on-prob-case-1}
\mathbb{P}[\goodevent(t) \cap \{\Delta r_i(t) = 0\}] \leq O(k \cdot \g{t-\tau})
\end{equation}

Overall, by \eqref{eqn:case-1-bound-on-exp} and \eqref{eqn:bound-on-prob-case-1}, for all $t \geq \tau$ we have
\begin{equation}
\label{eqn:conditional-exp-case-1}
\begin{aligned}[b]
\mathbb{E}[D_i(t)\mathbbm{1}_{\goodevent(t)}\mathbbm{1}_{\{\Delta r_i(t) = 0\}}] & = \mathbb{E}[D_i(t) | \goodevent(t) \cap \{\Delta r_i(t) = 0\}] 
\cdot  
\mathbb{P}[\goodevent(t) \cap \{\Delta r_i(t) = 0\}] \\
& \leq \left(\dfrac{2c_0\sigma}{\sqrt{\alpha^2 + \dots + \alpha^{2(t-\tau)}}}\right)\alpha^{t-\tau+2} \cdot O\left(k \cdot \g{t-\tau}\right)
= O(c_0^2\sigma^2\alpha^2 k). 
\end{aligned}
\end{equation}

\paragraph{Second term (Arm $i$ is sub-optimal at round $t$).} 
Given $\goodevent(t)$ and the value of $\eta \triangleq \Delta r_i(t) > 0$, we have bounded the nominator and denominator of $D_i(t)$ in \Cref{claim:nominator} and \Cref{claim:denominator} respectively. 
For any time $\tau \leq t < \tau^{\text{next}}$, in \Cref{claim:nominator} we have showed 
$
\alpha^{2(t-\tau)}\Delta r_i(\tau) \leq O(\eta + c_0\sigma\alpha\sqrt{k})
$
and in \Cref{claim:denominator} we have showed 
$
1+\alpha^2+\dots+\alpha^{2(\Delta \tau-1)} 
\geq \max\Big\{\Omega(\dfrac{\eta^2}{c_0^2\sigma^2\alpha^2 k^2}), 1 \Big\}.
$
These results then give
$$
\mathbb{E}\left[D_i(t)\mathbbm{1}_{\goodevent(t)}\mathbbm{1}_{\{\Delta r_i(t) > 0\}} | \eta \right] \lesssim  \min \Big\{ \dfrac{\eta + c_0\alpha\sigma\sqrt{k}}{\frac{\eta^2}{c_0^2\alpha^2\sigma^2 k^2}}, \eta + c_0\alpha\sigma\sqrt{k} \Big\}
$$
Let $\Tilde{C} = c_0\alpha\sigma\sqrt{k}$, this then yields
$$
\mathbb{E}\left[D_i(t)\mathbbm{1}_{\goodevent(t)}\mathbbm{1}_{\{\Delta r_i(t) > 0\}} | \eta \right] 
\lesssim \min \Big\{k\big(\dfrac{\eta + \Tilde{C}}{\frac{\eta^2}{\Tilde{C}^2}}\big), \eta + \Tilde{C} \Big\} 
= \min\Big\{k(\frac{\Tilde{C}^2}{\eta} + \frac{\Tilde{C}^3}{\eta^2}), \eta + \Tilde{C}\Big\}.
$$
This then further leads to
$$
\mathbb{E}\left[D_i(t)\mathbbm{1}_{\goodevent(t)}\mathbbm{1}_{\{\Delta r_i(t) > 0\}} | \eta \right] \lesssim \left\{
\begin{array}{ll}
k\frac{\Tilde{C}^2}{\eta} & \text{if } \eta > \Tilde{C} \\
\Tilde{C} & \text{if } 0 < \eta \leq \Tilde{C}
\end{array}\right.
$$
We can now bound the second term by taking expectation of the above expression with respect to $\eta$. Recall from \eqref{eqn:steady-state-const-bounds} that the PDF of the steady state distribution of $r_i(t)$ satisfies
$\rho_- \leq f(x) \leq \rho_+$
for all $x \in [-\fancyR, \fancyR]$, where $\rho_-$ and $\rho_+$ are constants. This then gives
\begin{equation}
\label{eqn:conditional-exp-case-2-1}
\mathbb{E}[D_i(t) \mathbbm{1}_{\goodevent(t)}\mathbbm{1}_{\{\Delta r_i(t) > \Tilde{C}\}}]  \lesssim \int^1_{\Tilde{C}} k \frac{\Tilde{C}^2}{\eta} d\eta 
= O(k \Tilde{C}^2\log(\Tilde{C})) 
= O(c_0^2\alpha^2\sigma^2 k^2 \log(c_0\alpha\sigma\sqrt{k}))
\end{equation}
and
\begin{equation}
\label{eqn:conditional-exp-case-2-2}
\mathbb{E}[D_i(t)\mathbbm{1}_{\goodevent(t)}\mathbbm{1}_{\{0 < \Delta r_i(t) \leq \Tilde{C}\}}] \lesssim  \Tilde{C} \cdot \mathbb{P}[0 < \Delta r_i(t) \leq \Tilde{C}] 
= O(\Tilde{C}^2) = O(c_0^2\alpha^2\sigma^2 k),
\end{equation}
which provides an upper bound for the second term. 

Finally, summing \eqref{eqn:conditional-exp-case-1}, \eqref{eqn:conditional-exp-case-2-1}, and \eqref{eqn:conditional-exp-case-2-2} gives 
$
\mathbb{E}[D_i(t)\mathbbm{1}_{\goodevent(t)}] 
\leq O(c_0^2\alpha^2\sigma^2 k^2 \log(c_0\alpha\sigma\sqrt{k})),
$
which concludes the proof of \Cref{lemma:deterministic}. $\blacksquare$

\section{Missing Proof of Section \ref{sec:extension}}
\label{apdx:extension-proof}
\paragraph{Proof of \Cref{thm:estimation-alpha}:} 
Before proceeding, we first make some notational changes that would simplify our proof. Since we pull arm $i$ consecutively for $T_\text{est}$ rounds for some fixed arm $i \in [k]$, in the rest of this proof we omit the dependency of $R_i(t)$ on $i$ and denote $R_t \triangleq R_i(t)$.

Let us first define the loss function $\mathcal{L}(\alpha)$ stated in the optimization problem
\begin{equation}
\label{eqn:max-like-opt}
\widehat{\alpha} = \argmin_{\alpha \in (0, 1)} \mathcal{L}(\alpha) \,,
\end{equation}
which is the negative of the log-likelihood function:
\begin{equation}
\label{eqn:def-loss}
\begin{aligned}
&\mathcal{L}(\alpha) = 
- \frac{1}{T_\text{est}} \sum_{t = 1}^{T_\text{est}} 
\mathbbm{1}\{R_t = \fancyR \} \log\Big[ 1-\Phi(\frac{\fancyR-\alpha R_{t-1}}{\sigma})\Big] \\
& + 
\mathbbm{1}\{R_t = -\fancyR \} \log\Big[ \Phi(\frac{-\fancyR-\alpha R_{t-1}}{\sigma})\Big]\\
& + 
\mathbbm{1}\{-\fancyR<R_t<\fancyR \}\log\Big[
\Phi(\frac{\fancyR-\alpha R_{t-1}}{\sigma}) - \Phi(\frac{-\fancyR-\alpha R_{t-1}}{\sigma})
\Big]\,,
\end{aligned}
\end{equation}
where $\Phi(.)$ is the CDF of standard normal distribution
Note that the three terms in the loss function respectively represent the log likelihood of the observed reward at the upper boundary, at the lower boundary and in between the boundaries.

Recall that $\widehat{\alpha}$ is the solution to the optimization problem in \eqref{eqn:max-like-opt}. We can apply the second-order Taylor's Theorem to obtain the following:
\begin{equation}
\label{eqn:extension-eqn-1}
\mathcal{L}(\alpha) - \mathcal{L}(\widehat{\alpha}) = - \mathcal{L}'(\alpha)(\widehat{\alpha} - \alpha) - \frac{1}{2}\mathcal{L}''(\Tilde{\alpha})(\alpha - \widehat{\alpha})^2
\end{equation}
for some $\Tilde{\alpha}$ between true AR parameter $\alpha$ and estimated AR parameter $\widehat{\alpha}$. In the rest of this proof, we will provide a bound for $|\widehat{\alpha} - \alpha|$ via bounding the first and second derivatives of the loss function $\mathcal{L}(.)$. 

We thus have
$$
\begin{aligned}
\mathcal{L}'(\alpha) = 
& - \frac{1}{T_\text{est}} \sum_{t = 1}^{T_\text{est}} 
\mathbbm{1}\{R_t = \fancyR \} \cdot \log'\Big[1-\Phi(\frac{\fancyR-\alpha R_{t-1}}{\sigma})\Big] \cdot \Big(\frac{-R_{t-1}}{\sigma}\Big) 
\\ 
& + 
\mathbbm{1}\{R_t = -\fancyR \} \cdot \log'\Big[ \Phi(\frac{-\fancyR-\alpha R_{t-1}}{\sigma})\Big]
\cdot \Big(\frac{-R_{t-1}}{\sigma}\Big) 
\\
& + 
\mathbbm{1}\{-\fancyR<R_t<\fancyR \} \cdot
\log '\Big[
\Phi(\frac{\fancyR-\alpha R_{t-1}}{\sigma}) - \Phi(\frac{-\fancyR-\alpha R_{t-1}}{\sigma})
\Big] 
\cdot \Big(\frac{-R_{t-1}}{\sigma}\Big)
\,.
\end{aligned}
$$
where $\log'h(y)$ denote the derivative of function $\log h(y)$ with respect to $y$.
Let 
$$
\begin{aligned}
u_t(\alpha) = 
& \mathbbm{1}\{R_t = \fancyR \} \cdot \log'\Big[1-\Phi(\frac{\fancyR-\alpha R_{t-1}}{\sigma})\Big]  
 + 
\mathbbm{1}\{R_t = -\fancyR \} \cdot \log'\Big[ \Phi(\frac{-\fancyR-\alpha R_{t-1}}{\sigma})\Big] \\
 + &
\mathbbm{1}\{-\fancyR<R_t<\fancyR \} \cdot
\log '\Big[
\Phi(\frac{\fancyR-\alpha R_{t-1}}{\sigma}) - \Phi(\frac{-\fancyR-\alpha R_{t-1}}{\sigma})
\Big] \,.
\end{aligned}
$$
Note that $\mathcal{L}'(\alpha) = \frac{1}{T_\text{est}} \sum_{t=1}^{T_\text{est}} u_t(\alpha)\cdot \frac{R_{t-1}}{\sigma}$. To bound the first derivative of the loss function, our next goal is to bound $\sum_{t=1}^{T_\text{est}} u_t(\alpha)$. Let $S_j = \sum_{t=1}^j  u_t(\alpha)$. Observe that 
$$
|S_j - S_{j-1}| \leq \sup_{y_1 \in (0, 2\fancyR/\sigma), \atop{y_2 \in -2\fancyR/\sigma, 0)}}
\Big\{ \max \Big\{\log'[1-\Phi(y_1)], \log'\Phi(y_2), \log'(\Phi(y_1) - \Phi(y_2)) \Big\}\Big\}\,.
$$
Let us define
\begin{equation}
\label{eqn:def-L1}
L_1 = \sup_{y_1 \in (0, 2\fancyR/\sigma), \atop{y_2 \in -2\fancyR/\sigma, 0)}}
\Big\{ \max \Big\{\log'[1-\Phi(y_1)], \log'\Phi(y_2), \log'(\Phi(y_1) - \Phi(y_2)) \Big\}\Big\}\,.
\end{equation}
Since $\Phi(y)$ is the CDF of the standard normal distribution, $y_1 \in (0, 2\fancyR/\sigma)$ and $y_2 \in -2\fancyR/\sigma, 0)$, we have that $L_1$ is a constant well defined by the above expression. Also observe that
$$
\begin{aligned}
\mathbb{E}[S_j - S_{j-1}| S_{j-1}, \dots, S_1]
& = 
\mathbb{E}[u_t(\alpha)| S_{j-1}, \dots, S_1] \\
& = 
\mathbb{P}\{R_t = \fancyR\} \cdot \frac{-\phi(\frac{\fancyR-\alpha R_{t-1}}{\sigma})}{\Big[1-\Phi(\frac{\fancyR-\alpha R_{t-1}}{\sigma})\Big]} 
 + 
\mathbb{P}\{R_t = -\fancyR \} \cdot \frac{ \phi(\frac{-\fancyR-\alpha R_{t-1}}{\sigma}}{\Big[ \Phi(\frac{-\fancyR-\alpha R_{t-1}}{\sigma})\Big]}
 \\
& + 
\mathbb{P}\{-\fancyR<R_t<\fancyR \} \cdot
\frac{\phi(\frac{\fancyR-\alpha R_{t-1}}{\sigma}) - \phi(\frac{-\fancyR-\alpha R_{t-1}}{\sigma})}{\Big[
\Phi(\frac{\fancyR-\alpha R_{t-1}}{\sigma}) - \Phi(\frac{-\fancyR-\alpha R_{t-1}}{\sigma})
\Big]} \\
& = -\phi(\frac{\fancyR-\alpha R_{t-1}}{\sigma}) + \phi(\frac{-\fancyR-\alpha R_{t-1}}{\sigma} + \phi(\frac{\fancyR-\alpha R_{t-1}}{\sigma}) - \phi(\frac{-\fancyR-\alpha R_{t-1}}{\sigma}) 
\\ & = 0\,.
\end{aligned}
$$
where $\Phi(y)$ and $\phi(y)$ are the CDF and PDF of the standard normal distribution. The third equality above follows from $\epsilon_t$ being independent from $S_1, \dots, S_{j-1}$. 
Hence, $S_j$ is a martingale with bounded difference. 

By Azuma-Hoeffding Inequality (see \Cref{lemma:azuma-hoeffding}), we thus have that for $\gamma > 0$, the following concentration result holds:
\begin{equation}
\label{eqn:concentration-bound-1}
\mathbb{P}[|S_{T_\text{est}}|> L_1\sqrt{2\beta T_\text{est} \log{T_\text{est}}}] \leq 
\frac{2}{T_\text{est}^\gamma}\,.
\end{equation}
Let us denote the high probability event, i.e., $|S_{T_\text{est}}|\leq L_1\sqrt{2\gamma T_\text{est} \log{T_\text{est}}}$, as $\mathcal{E}_1$.
Note that conditioning on $\mathcal{E}_1$, we would have
\begin{equation}
\label{eqn:concentration-bound-2}
\mathcal{L}'(\alpha) = \frac{1}{T_\text{est}} \sum_{t=1}^{T_\text{est}} u_t(\alpha)\cdot \frac{R_{t-1}}{\sigma} \leq \Big(\frac{L_1 \fancyR}{\sigma}\Big) \cdot \sqrt{\frac{2\gamma \log{T_\text{est}}}{T_\text{est}}}\,.
\end{equation}

We also bound the second derivative using a constant $L_2 \geq 0$ as follow:
\begin{equation}
\label{eqn:bound-second}
\begin{aligned}[b]
\mathcal{L}''(\alpha) = 
~~ & \frac{1}{T_\text{est}} \sum_{t = 1}^{T_\text{est}} 
\mathbbm{1}\{R_t = \fancyR \} \cdot \log''\Big[1-\Phi(\frac{\fancyR-\alpha R_{t-1}}{\sigma})\Big] \cdot \Big(\frac{R_{t-1}^2}{\sigma^2}\Big) 
\\
& + 
\mathbbm{1}\{R_t = -\fancyR \} \cdot \log''\Big[ \Phi(\frac{-\fancyR-\alpha R_{t-1}}{\sigma})\Big]
\cdot \Big(\frac{R_{t-1}^2}{\sigma^2}\Big) 
\\
& + 
\mathbbm{1}\{-\fancyR<R_t<\fancyR \} \cdot
\log ''\Big[
\Phi(\frac{\fancyR-\alpha R_{t-1}}{\sigma}) - \Phi(\frac{-\fancyR-\alpha R_{t-1}}{\sigma})
\Big] 
\cdot \Big(\frac{R_{t-1}^2}{\sigma^2}\Big)
\\
\geq ~~ &  \frac{1}{T_\text{est}} \sum_{t=1}^{T_\text{est}}\Big(\frac{R_{t-1}^2}{\sigma^2}\Big) \cdot \inf_{y_1 \in (0, 2\fancyR/\sigma), \atop{y_2 \in -2\fancyR/\sigma, 0)}}
\Big\{ \min \Big\{-\log''[1-\Phi(y_1)], -\log''\Phi(y_2), -\log''(\Phi(y_1) - \Phi(y_2)) \Big\}\Big\} \\
\triangleq ~~ & \frac{1}{T_\text{est}}\sum_{t=1}^{T_\text{est}}
\Big(\frac{R_{t-1}^2}{\sigma^2}\Big) \cdot L_2
\end{aligned}
\end{equation}
Here, we define the constant $L_2$ as 
\begin{equation}
\label{eqn:def-L2}
L_2 = \inf_{y_1 \in (0, 2\fancyR/\sigma), \atop{y_2 \in -2\fancyR/\sigma, 0)}}
\Big\{ \min \Big\{-\log''[1-\Phi(y_1)], -\log''\Phi(y_2), -\log''(\Phi(y_1) - \Phi(y_2)) \Big\}\Big\}\,.
\end{equation}
Similar to our arguments above, since $\Phi(y)$ is the CDF of the standard normal distribution, $y_1 \in (0, 2\fancyR/\sigma)$ and $y_2 \in -2\fancyR/\sigma, 0)$, we have that $L_2 \geq 0$ is a constant well defined by the last equality. Let $V = \mathbb{E}[R_{t}^2] > 0$ denote the variance of reward under the steady state distribution. Since $R_{t}^2$ is bounded, we can apply Hoeffding's Inequality (see \Cref{lemma:hoeffding-bounded-variable}) that states
$$
\mathbb{P}[|\sum_{t=1}^{T_\text{est}}R_{t-1}^2 - T_\text{est} V| \geq \frac{T_\text{est} V}{2}] \leq 2\exp\Big({-\frac{T_\text{est}V^2}{2\fancyR^2}}\Big)\,.
$$
Let $\mathcal{E}_2$ denote the high probability event that $|\sum_{t=1}^{T_\text{est}}R_{t-1}^2 - T_\text{est} V| \leq \frac{T_\text{est}V}{2}$. Then, under this event, we have that 
$
\sum_{t=1}^{T_\text{est}}R_{t-1}^2 \geq \frac{T_\text{est}V}{2}\,.
$
Hence, conditioning on the event $\mathcal{E}_2$, we have
\begin{equation}
\label{eqn:bound-second-2}
\mathcal{L}''(\alpha) \geq \frac{V L_2}{2 \sigma^2}\,.
\end{equation}

After bounding the first and second derivatives of $\mathcal{L}(\alpha)$ respectively, we are now ready to show the proximity between our estimated parameter $\widehat{\alpha}$ and the true parameter $\alpha$, conditioning on the high-probability events $\mathcal{E}_1 \cap \mathcal{E}_2$. 
By optimality of $\widehat{\alpha}$, we have that $\mathcal{L}(\widehat{\alpha}) \leq \mathcal{L}({\alpha})$. This, together with \eqref{eqn:extension-eqn-1}, gives
\begin{equation}
\label{eqn:ext-1}
\frac{1}{2}\mathcal{L}''(\Tilde{\alpha})(\alpha - \widehat{\alpha})^2 \leq - \mathcal{L}'(\alpha)(\widehat{\alpha} - \alpha) \leq |\mathcal{L}'(\alpha)|\cdot|\widehat{\alpha} - \alpha| \,.
\end{equation}
Using \eqref{eqn:bound-second-2}, we can bound the left-hand side as
\begin{equation}
\label{eqn:ext-2}
\frac{1}{2}\mathcal{L}''(\Tilde{\alpha})(\alpha - \widehat{\alpha})^2 \geq \frac{1}{2} \cdot \Big(\frac{V L_2}{2 \sigma^2} \Big) \cdot (\alpha - \widehat{\alpha})^2 \,.
\end{equation}
Conditioning on $\mathcal{E}_1$, we also know from \eqref{eqn:concentration-bound-2} have that
\begin{equation}
\label{eqn:ext-3}
|\mathcal{L}'(\alpha)|\cdot|\widehat{\alpha} - \alpha| \leq \Big(\frac{L_1\fancyR}{\sigma}\Big) \cdot \sqrt{\frac{2\gamma \log{T_\text{est}}}{T_\text{est}}}\cdot|\widehat{\alpha} - \alpha|
\end{equation}

Finally, combining \eqref{eqn:ext-1},  \eqref{eqn:ext-2} and \eqref{eqn:ext-3}, we have that with probability at least $1-2/T_\text{est}^\gamma - 2\exp({-T_\text{est}V^2/2\fancyR^2})$, we have
$$
\frac{1}{2} \cdot \Big(\frac{V L_2}{2 \sigma^2} \Big) \cdot (\alpha - \widehat{\alpha})^2 \leq \Big(\frac{L_1 \fancyR}{\sigma}\Big) \cdot \sqrt{\frac{2\gamma \log{T_\text{est}}}{T_\text{est}}}\cdot|\widehat{\alpha} - \alpha|\,
$$
which implies
$$
|\widehat{\alpha} - \alpha| \leq \frac{4\sigma\fancyR L_1}{V L_2} \sqrt{\frac{2\gamma \log{T_\text{est}}}{T_\text{est}}}\,. \quad \quad \blacksquare
$$

\section{Concentration Inequalities}
\label{apdx:concentration}
In this section, we state some useful concentration inequalities for subgaussian variables (see \cite{hoeffding1963probability, azuma1967weighted, Pollard1984} for references).
\begin{definition}
[Subgaussian variables]
\label{def:subgaussian}
We say that a zero-mean random variable $X$ is $\sigma$-subgaussian (or, $X \sim \text{subG}(\sigma)$), if for all $\lambda \in \mathbb{R}$,
$
\mathbb{E}[e^{\lambda X}]\leq e^{\frac{\lambda^2\sigma^2}{2}}.
$
\end{definition}
In particular, note that if $X \sim N(0, \sigma)$, then $X \sim \mathrm{subG}(\sigma)$. Subgaussian variables satisfy the following concentration inequalities:
\begin{lemma}
[Concentration of subgaussian variables]
\label{thm:subgaussian_concentration}
A zero-mean random variable $X$ is $\sigma$-subgaussian if and only if for any $s > 0$,
$
\mathbb{P}\left[|X| \geq s \right] \leq 2\exp(-\frac{s^2}{2\sigma^2}).
$
\end{lemma}

\begin{lemma}[Hoeffding's Inequality]
\label{lemma:hoeffding}
Let $X_1, \dots, X_n$ be independent zero-mean random variables such that $X_j \sim \text{subG}(\sigma_j)$, then for any $s > 0$,
$
\mathbb{P}\left[\left|\frac{1}{n}\sum_{j=1}^n X_j\right| \geq s \right] \leq 2\exp(-\frac{n^2 s^2}{2\sum_{j=1}^{n}\sigma_j^2}).
$
\end{lemma}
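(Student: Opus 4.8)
Since $\bigl\{\,|\tfrac1n\sum_{j=1}^n X_j|\ge s\,\bigr\}=\bigl\{\,|S|\ge ns\,\bigr\}$ for $S\triangleq\sum_{j=1}^n X_j$, the plan is to reduce the statement to a one-dimensional tail bound for $S$ and then invoke \Cref{thm:subgaussian_concentration}. First I would argue that $S$ is itself subgaussian with parameter $\sqrt{\sum_{j=1}^n\sigma_j^2}$: using independence of the $X_j$ and \Cref{def:subgaussian}, for every $\lambda\in\mathbb{R}$,
\[
\mathbb{E}\bigl[e^{\lambda S}\bigr]=\prod_{j=1}^n \mathbb{E}\bigl[e^{\lambda X_j}\bigr]\le\prod_{j=1}^n e^{\lambda^2\sigma_j^2/2}=\exp\!\Bigl(\tfrac{\lambda^2}{2}\sum_{j=1}^n\sigma_j^2\Bigr),
\]
which is exactly the defining inequality for $S\sim\mathrm{subG}\bigl(\sqrt{\sum_{j=1}^n\sigma_j^2}\bigr)$.

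\smallskip
\noindent Next I would apply \Cref{thm:subgaussian_concentration} to $S$ with threshold $ns$, obtaining
\[
\mathbb{P}\bigl[\,|S|\ge ns\,\bigr]\le 2\exp\!\Bigl(-\frac{(ns)^2}{2\sum_{j=1}^n\sigma_j^2}\Bigr)=2\exp\!\Bigl(-\frac{n^2s^2}{2\sum_{j=1}^n\sigma_j^2}\Bigr),
\]
and rewriting the left-hand side as $\mathbb{P}[\,|\tfrac1n\sum_{j=1}^n X_j|\ge s\,]$ finishes the argument. If one prefers a self-contained derivation not relying on \Cref{thm:subgaussian_concentration}, the same conclusion follows from a direct Chernoff bound: for $\lambda>0$, $\mathbb{P}[S\ge ns]\le e^{-\lambda ns}\,\mathbb{E}[e^{\lambda S}]\le\exp(-\lambda ns+\tfrac{\lambda^2}{2}\sum_{j=1}^n\sigma_j^2)$, optimized at $\lambda^\star=ns/\sum_{j=1}^n\sigma_j^2$ to give the one-sided bound $\exp(-n^2s^2/(2\sum_{j=1}^n\sigma_j^2))$; repeating with $-X_j$ in place of $X_j$ (still zero-mean and $\sigma_j$-subgaussian) handles the lower tail, and a union bound over the two tails produces the factor $2$.

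\smallskip
\noindent There is no real obstacle here — this is the standard heterogeneous-parameter Hoeffding inequality. The only step that needs a line of justification is the multiplicativity of the moment generating function, which is where independence of the $X_j$ is used; everything else is the routine Markov/Chernoff computation.
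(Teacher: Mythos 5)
Your proof is correct: the reduction to a tail bound for $S=\sum_j X_j$, the MGF product bound via independence showing $S\sim\mathrm{subG}\bigl(\sqrt{\sum_j\sigma_j^2}\bigr)$, and the application of \Cref{thm:subgaussian_concentration} (or the direct Chernoff computation) all go through. The paper gives no proof of this lemma, citing it as a standard result from \cite{Pollard1984}, and your argument is exactly the standard one.
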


In particular, bounded variables are known to be subgaussian, and we thus have a special case of the Hoeffding's Inequality that applies to bounded variables:
\begin{lemma}[Hoeffding's Inequality for Bounded Variables]
\label{lemma:hoeffding-bounded-variable}
Let $X_1, \dots, X_n$ be independent random variables such that $a_i \leq X_i \leq b_i$. Consider the sum of these random variables $S_n = \sum_{i=1}^n X_i$.
We have that for $s > 0$, 
$
\mathbb{P}\left[|S_n - \mathbb{E}[S_n]| \geq s \right] \leq 2\exp\left(-\frac{2 s^2}{\sum_{i=1}^n\left(b_i-a_i\right)^2}\right)\,.
$
\end{lemma}

Another related concentration result is the Azuma-Hoeffding's Inequality stated below, which applies to martingales with bounded differences.
\begin{lemma}[Azuma-Hoeffding's Inequality]
\label{lemma:azuma-hoeffding}
Suppose that $\{S_j: j = 0, 1, 2, \dots\}$ is a martingale and has bounded difference $|S_k - S_{k-1}| \leq b_k$ almost surely. Then, the following concentration bound holds for any $N \in \mathbb{Z}^+$ and $s > 0$:
$
\mathrm{P}\left(\left|S_N-S_0\right| \geq s\right) \leq 2 \exp \left(\frac{-s^2}{2 \sum_{k=1}^N b_k^2}\right)\,.
$
\end{lemma}

\end{APPENDICES}

\end{document}